\def\isarxiv{1}
\def\paperTitle{On Fine-Grained I/O Complexity of Attention Backward Passes
}
\def\paperRTitle{On Fine-Grained I/O Complexity of Attention Backward Passes} 
\def\paperAuthor{
Xiaoyu Li\thanks{ University of New South Wales.}
\and 
Yingyu Liang\thanks{ The University of Hong Kong.} 
\and
Zhenmei Shi\thanks{ University of Wisconsin-Madison.}
\and 
Zhao Song\thanks{\texttt{ magic.linuxkde@gmail.com}. The Simons Institute for the Theory of Computing at the University of California, Berkeley.}
\and 
Song Yue\thanks{ Northeastern University.}
\and
Jiahao Zhang
}
\def\citet{\cite}
\theoremstyle{plain}
\newtheorem{theorem}{Theorem}[section]
\newtheorem{lemma}[theorem]{Lemma}
\newtheorem{definition}[theorem]{Definition}
\newtheorem{fact}[theorem]{Fact}
\newtheorem{remark}[theorem]{Remark}
\newcommand{\R}{\mathbb{R}}
\renewcommand{\d}{\mathrm{d}}
\newcommand{\Tmat}{{\cal T}_{\mathrm{mat}}}
\DeclareMathOperator{\nnz}{nnz}
\DeclareMathOperator{\diag}{diag}
\icmltitlerunning{\paperRTitle}
\begin{document}

\ifdefined\isarxiv

\date{}
\title{\paperTitle}
\author{\paperAuthor}

\else

\twocolumn[
  \icmltitle{\paperTitle}


  \icmlsetsymbol{equal}{*}

  \begin{icmlauthorlist}
    \icmlauthor{Firstname1 Lastname1}{equal,yyy}
    \icmlauthor{Firstname2 Lastname2}{equal,yyy,comp}
    \icmlauthor{Firstname3 Lastname3}{comp}
    \icmlauthor{Firstname4 Lastname4}{sch}
    \icmlauthor{Firstname5 Lastname5}{yyy}
    \icmlauthor{Firstname6 Lastname6}{sch,yyy,comp}
    \icmlauthor{Firstname7 Lastname7}{comp}
    \icmlauthor{Firstname8 Lastname8}{sch}
    \icmlauthor{Firstname8 Lastname8}{yyy,comp}
  \end{icmlauthorlist}

  \icmlaffiliation{yyy}{Department of XXX, University of YYY, Location, Country}
  \icmlaffiliation{comp}{Company Name, Location, Country}
  \icmlaffiliation{sch}{School of ZZZ, Institute of WWW, Location, Country}

  \icmlcorrespondingauthor{Firstname1 Lastname1}{first1.last1@xxx.edu}
  \icmlcorrespondingauthor{Firstname2 Lastname2}{first2.last2@www.uk}

  \icmlkeywords{Machine Learning, ICML}

  \vskip 0.3in
]

\printAffiliationsAndNotice{} 

\fi

\ifdefined\isarxiv
\begin{titlepage}
  \maketitle
  \begin{abstract}
    Large Language Models (LLMs) exhibit exceptional proficiency in handling extensive context windows in natural language. Nevertheless, the quadratic scaling of attention computation relative to sequence length creates substantial efficiency bottlenecks, necessitating the development of I/O-optimized algorithms. In this work, we conduct a systematic examination of the I/O complexity inherent in attention mechanisms, with a specific emphasis on the backward pass under both small and large cache settings. By leveraging the red-blue pebble game framework, we derive tight bounds for I/O complexity across the full spectrum of cache sizes. We validate that FlashAttention, one of the current industry standards, achieves optimality in the large-cache scenario for both forward and backward passes. Conversely, for small-cache environments, we introduce a novel algorithm that outperforms contemporary methods and successfully attains theoretical tight bounds. Furthermore, we expand our investigation to include sparse attention by establishing granular lower bounds for both forward and backward passes across all cache configurations. Ultimately, our results solidify the theoretical framework regarding I/O complexity in attention mechanisms, providing critical guidance for the development of efficient LLM training and inference systems.


  \end{abstract}
  \thispagestyle{empty}
\end{titlepage}

\newpage

\else

\begin{abstract}

\end{abstract}

\fi



\section{Introduction}

Large Language Models (LLMs), such as GPT-4~\citep{aaa+23}, Claude~\citep{a24}, Llama~\citep{m24}, and more recently o1~\citep{o24} from OpenAI, have demonstrated immense potential to enhance various aspects of our daily lives, including conversational AI~\citep{lct+24}, AI agents~\citep{xcg+23, cyl+24}, search AI~\citep{o24}, AI assistants~\citep{khc+24, fjl+24}, and many others. 
One of the most emergent abilities of LLMs is dealing with long-context information, which is crucial for processing materials such as academic papers, official reports, and legal documents. 
LLMs have proven adept at tackling long-context tasks, such as zero-shot summarization~\citep{cam24, zjv+24} and maintaining very long-term conversations~\citep{xgw+22, mlt+24}.
OpenAI's o1 model~\citep{o24} serves as a significant advancement in this area. 
It leverages Chain-of-Thought (CoT) reasoning~\citep{wws+22,kgr+22} and employs Retrieval Augmented Generation (RAG)~\citep{lpp+20, gxg+23} to exhibit PhD-level abilities, where both techniques require long context inputs for generation. This proficiency underscores the necessity for developing long-context modeling capabilities within LLMs.

LLMs are primarily based on the Transformer architecture \citep{vsp+17}, whose core component is the self-attention mechanism. 
However, the quadratic complexity of attention computation with respect to sequence length dominates the computational FLOPs during long-context training and inference. 
To address this issue, FlashAttention~\citep{dfe+22,d23,sbz+24} accelerates attention computation and has become the de facto standard in the industry of LLM training and inference deployment.
The success of FlashAttention lies in its I/O awareness~\citep{av88}, accounting for reads and writes to different levels of fast \emph{cache} (e.g., GPU on-chip SRAM) and slow \emph{memory} (e.g., GPU high-bandwidth memory) within the hardware hierarchy. 
Leveraging modern hardware design in GPUs, e.g., NVIDIA A100 and H100, efficiently allows FlashAttention to be integrated as a go-to method for LLM training and inference.

For the I/O complexity of exact attention\footnote{In this work, we only consider exact attention computation without any approximation.} forward computation, the theoretical analysis of FlashAttention in~\citet{dfe+22} only provides upper and lower bounds when the cache size $M \in [d, nd]$. Their bounds are only tight in the range of $M = \Theta(nd)$, where $n$ is the input sequence length and $d$ is the hidden dimension. 
By fine-grained analysis, a recent work \citep{sy24} provides matching upper and lower I/O complexity bounds of the attention \emph{forward} passes for \emph{any} cache size $M$.
For the I/O complexity of attention \emph{backward} passes, existing work only provides an upper bound for FlashAttention for the cache size $M \in [d, nd]$~\citep{dfe+22}, without known lower bounds.
Thus, the tight bounds for the I/O complexity of attention backward passes are lacking. This raises a natural question:
\begin{center}
    \emph{What is the optimal I/O complexity of attention backward computations for any cache size?}
\end{center}
 
In this paper, we address this question and provide matching upper and lower I/O complexity bounds for backward passes of exact attention computation for all cache sizes, completing the picture of I/O complexity for the attention mechanism.

\subsection{Our Contributions}

\begin{figure}[!ht]
    \centering
    \includegraphics[width=0.4\textwidth]{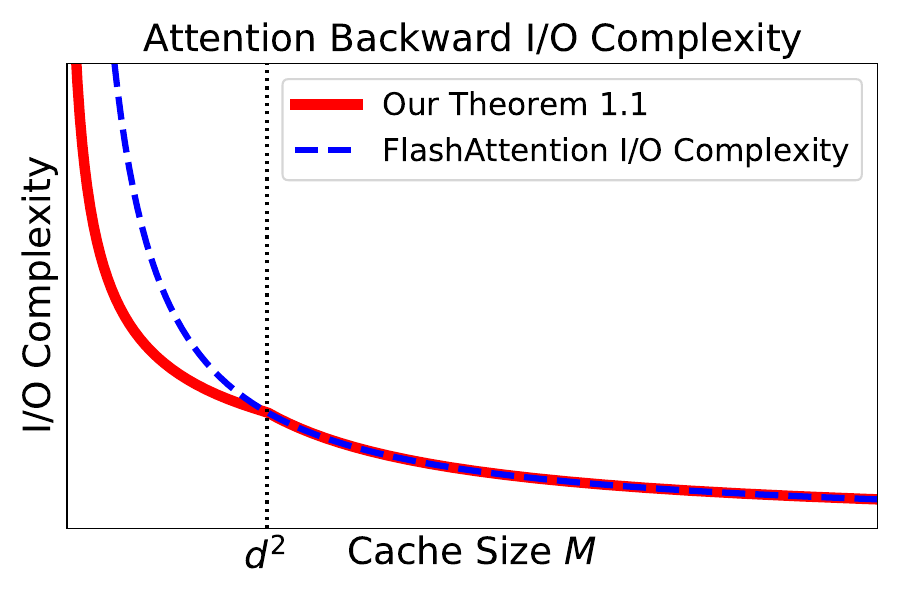}
    \caption{Attention backward I/O complexity comparison. The $x$-axis is the cache size, and the $y$-axis is the I/O complexity. The red line represents our tight upper/lower bound (Theorem~\ref{thm:main}), and the blue dash denotes the upper bound for FlashAttention~\citep{dfe+22}. The cross point is $M=\Theta(d^2)$, the dividing point of large cache and small cache settings. The results show that FlashAttention is optimal when $M = \Omega(d^2)$.
    }
    \vspace{-0.2cm}
    \label{fig:io}
\end{figure}

In this work, we analyze the I/O complexity in the same setting as the existing work of FlashAttention~\citep{dfe+22} and \citet{sy24}.  We consider a two-level memory hierarchy consisting of a small but fast layer called the \emph{cache} and a large but slower layer referred to as \emph{memory}. 
The I/O complexity quantifies the data transfer between these two layers, which can be formally defined as a red-blue pebble game \citep{hk81} as in Definition~\ref{def:red_blue_pebble_game}. 
We study the exact attention computation using standard matrix multiplication as the existing work\footnote{Note that there are many fast matrix multiplication methods. We do not study them, as they are hard to be parallelized. Standard matrix multiplication is still the most popular implementation on GPU, e.g., PyTorch. We refer readers to Section~\ref{sec:preliminary} for more details.} and focus on backward gradient computation. 
We establish matching I/O complexity upper and lower bounds for attention backward computation (formalized in Theorem~\ref{thm:main} and illustrated in Fig.~\ref{fig:io}).
Combined with the attention forward results from \citet{sy24}, this completes the theory of I/O complexity in the attention mechanism. 
Our main result is stated as follows:
\begin{theorem}[Main result]\label{thm:main}
    Let $n$ be the sequence length, $d$ the head dimension, and $M$ the cache size. The I/O complexity of attention backward computation under standard matrix multiplication is 
    $
        \Theta \left(\min \left\{\frac{n^2d^2 + nd^3}{M}, \frac{n^2d + nd^2}{\sqrt{M}} \right\}\right).
    $
\end{theorem}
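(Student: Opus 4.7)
The plan is to prove matching upper and lower bounds separately, with the upper bound handled by explicit algorithms tailored to each cache regime and the lower bound handled by a red-blue pebble game argument. The two regimes in the theorem meet at the threshold $M \asymp d^{2}$: when $M \gtrsim d^{2}$ the first term $(n^{2}d^{2}+nd^{3})/M$ dominates, and when $M \lesssim d^{2}$ the second term $(n^{2}d+nd^{2})/\sqrt{M}$ dominates.

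For the upper bound, I would exhibit two algorithms. In the large cache regime $M \gtrsim d^{2}$, I would present a FlashAttention-style backward pass that streams through $Q,K,V,dO$ in blocks of size roughly $M/d$, recomputes the softmax statistics on the fly, and accumulates the gradients $dQ,dK,dV$ block-by-block, accounting separately for the $n^{2}d^{2}/M$ cost of processing all $n\times n$ attention blocks and the $nd^{3}/M$ cost when $d$ is large relative to $n$. In the small cache regime $M \lesssim d^{2}$, the block sizes that FlashAttention uses become infeasible, so instead I would fall back on the classical I/O-optimal tiled matrix multiplication of Hong-Kung, applied to each of the constituent multiplications in the backward pass, namely $dV=P^{\top}dO$, $dP=dO V^{\top}$, $dQ=dS\cdot K$, and $dK=dS^{\top}Q$ (with the softmax gradient $dS$ obtained from $dP$ and $P$ at $O(n^{2})$ extra cost). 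Each such multiplication of an $n\times n$ by $n\times d$ (or $n\times d$ by $d\times n$) matrix incurs $O(n^{2}d/\sqrt{M})$ I/Os by Hong-Kung, and there are only $O(1)$ of them, which yields the claimed upper bound.

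For the lower bound, I would work in the red-blue pebble game of Definition~\ref{def:red_blue_pebble_game}. The small cache bound $\Omega((n^{2}d+nd^{2})/\sqrt{M})$ follows almost directly by observing that any correct backward algorithm must in particular produce all entries of $dV=P^{\top}dO$ (and likewise of $dQ$ and $dK$), each of which is a bilinear form over $n$ scalar multiplications, so the Hong-Kung $\Omega(n^{2}d/\sqrt{M})$ lower bound for the constituent matrix multiplication applies; the $nd^{2}/\sqrt{M}$ term comes from the $d\times d$-heavy regime. The large cache bound $\Omega((n^{2}d^{2}+nd^{3})/M)$ requires more care: I would adapt the forward-pass lower bound technique of \citet{sy24} by partitioning the computation DAG into phases of $M$ I/Os and counting, via a dominator/Loomis-Whitney style argument, how many of the $n^{2}$ attention entries and their $d$-long dot products can contribute to gradient outputs held in cache at once. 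The key identity to exploit is that computing $dS_{ij}$ requires knowing $P_{ij}$ along with a row-sum involving $dO_{i}$ and $V$, and each $dS_{ij}$ then participates in $d$ scalar multiplications toward $dQ_{i}$ and $dK_{j}$, giving the $n^{2}d^{2}/M$ count, while the $nd^{3}$ term arises from the inner $d\times d$ outer-product structure analogous to the forward case.

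The main obstacle will be the large-cache lower bound. The forward-pass argument in \citet{sy24} can be imitated phase-by-phase, but the backward pass has a richer dependency structure: the same scores $P_{ij}$ feed into $dV$, $dP$, and $dS$, and one must rule out a clever schedule that amortizes I/O across these by carefully re-using cached blocks, which is exactly the sort of savings a naive counting argument would miss. I expect to handle this by choosing the right projection of the computation DAG onto a single matrix product whose lower bound is already known to be $\Omega(n^{2}d^{2}/M)$ in the large-cache regime, and then adding the $nd^{3}/M$ contribution from a separate projection onto the $d$-dimensional accumulation. The forward-pass result of \citet{sy24} will be cited directly wherever it applies, so I do not need to re-derive the Hong-Kung machinery from scratch.
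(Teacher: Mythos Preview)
Your upper-bound plan matches the paper: a FlashAttention-style blocked backward for the large-cache regime and Hong--Kung tiled matrix multiplications on the individual products for the small-cache regime.

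The lower-bound plan, however, both takes a harder route than necessary and contains a real gap. You propose to extract a matrix-multiplication lower bound from the backward-specific products $dV=P^{\top}dO$, $dQ=dS\,K$, etc. The difficulty is that $P$ and $dS$ are \emph{internal} nodes of the computation DAG, not inputs with blue pebbles; the Hong--Kung bound $\Omega(n_1 d n_2/\sqrt{M})$ for multiplying an $n_1\times d$ by a $d\times n_2$ matrix assumes both factors are genuine inputs that must be read from memory. When one factor is computed on the fly (as $P$ is), a pebbling may generate its entries in cache, consume them, and discard them without ever paying the input cost, so the standard matmul lower bound does not transfer automatically. This is precisely the kind of ``clever schedule'' you worry about, and your projection idea does not close the hole unless you can name a product whose \emph{both} factors are true inputs.

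The paper's observation removes this obstacle entirely: the backward graph (under the standard-matmul convention) contains the forward computation of $A_{1}XA_{2}^{\top}$ as a subgraph, because $\tfrac{\d L}{\d X}=A_{1}^{\top}p(X)A_{2}$ depends on the softmax matrix $f(X)$, which depends on $\exp(A_{1}XA_{2}^{\top})$. Here $A_{1},X,A_{2}$ are all true inputs. In the large-cache case one then applies the $M$-partition bound of \citet{sy24} to the two constituent products $A_{1}X$ and $(A_{1}X)A_{2}^{\top}$ directly, giving $\Omega(nd^{3}/M)$ and $\Omega(n^{2}d^{2}/M)$ respectively, with no need to reason about interactions among $dV$, $dQ$, $dK$. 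In the small-cache case the paper reduces: run the backward algorithm and, at extra cost $O(n^{2})=o(n^{2}d/\sqrt{M})$, dump each entry of $(A_{1}X)A_{2}^{\top}$ to memory the moment it receives a red pebble; this manufactures a full matmul algorithm to which Hong--Kung applies. So the ``main obstacle'' you flag is a non-issue once you project onto the \emph{forward} recomputation subgraph rather than the gradient products.

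A secondary point: the $nd^{3}/M$ and $nd^{2}/\sqrt{M}$ terms in the statement come from the paper's parameterization, which differentiates with respect to the fused weight $X=W_{Q}W_{K}^{\top}\in\R^{d\times d}$ (so an $n\times d$ by $d\times d$ product appears). In your $Q,K,V$ formulation that product is absent, and you would not recover those terms from the backward products you list; you should align your parameterization with the statement before writing the proof.
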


To interpret our main result, we categorize the cache size $M$ into two cases: the small cache case where $ M = o(d^2)$ and the large cache case where $ M = \Omega(d^2)$ (see Fig.~\ref{fig:io} for illustration). 


In the small cache scenario, $ M = o(d^2) $, by computation graph Fig.~\ref{fig:attn_backward} and Algorithm~\ref{alg:attention_gradient}, we show that the upper bound of the I/O complexity is $ O(\frac{n^2 d + nd^2}{\sqrt{M}})$. 
In detail, Algorithm~\ref{alg:attention_gradient} explicitly read/write the $n \times n$ attention matrix and other $n \times d$ intermediate matrices from/to memory. 
Note that, when $ M = o(d^2) $,  our Algorithm~\ref{alg:attention_gradient} has a better upper bound than FlashAttention, whose upper bound is $O(\frac{n^2 d^2 + nd^3}{M})$.
Furthermore, to establish a lower bound on the I/O complexity, we show that the I/O complexity of attention backward computation is equivalent to the I/O complexity of matrix multiplication when $ M = o(d^2) $, which matches the upper bound of Algorithm~\ref{alg:attention_gradient}.

In the more practical large cache case, $ M = \Omega(d^2) $, we prove an upper bound $O(\frac{n^2 d^2+nd^3}{M})$ on the I/O complexity for the attention backward algorithms (Algorithm~\ref{alg:attention_gradient_large_cache}), which matches that of FlashAttention~\citep{dfe+22, d23, sbz+24}. We prove that this upper bound is tight by providing a matching lower bound for the I/O complexity of attention backward using the red-blue pebble game analysis framework from \citet{hk81}.

\begin{table*}[!t]
    \caption{Summary of our contributions. We categorize the cache size $M$ into two cases: (1) Large cache $M=\Omega(d^2)$; (2) Small cache $M=o(d^2)$. Assume $n \geq d$. We list our contributions for general and sparse attention below. 
    $Z_\mathrm{input}$ and $Z_\mathrm{QK}$ denote the number of nonzero entries of the input matrix and the key-query matrix, respectively.
    }\label{tab:our_result}
    \begin{center}
    \resizebox{\linewidth}{!}{
        \begin{tabular}{lc|cc|cc}
    \toprule
     \multicolumn{2}{c|}{ {\bf Attention Algorithm} } & {\bf Large Cache} & {\bf Reference} & {\bf Small Cache} & {\bf Reference} \\
    \midrule
    \multirow{4}{*}{General} 
        & Forward Upper  & $O(n^2d^2/M)$  & \citet{dfe+22}  & $O(n^2d/\sqrt{M})$ & \citet{sy24} \\
        & Forward Lower  & $\Omega(n^2d^2/M)$  & \citet{sy24} & $\Omega(n^2d/\sqrt{M})$ & \citet{sy24} \\
        & Backward Upper & $O(n^2d^2/M)$ & \citet{dfe+22} & $O(n^2d/\sqrt{M})$ & Theorem~\ref{thm:attn_grad_small_cache:informal} \\
        & Backward Lower & $\Omega(n^2d^2/M)$  & Theorem~\ref{thm:large_cache_lower_bound:informal} & $\Omega(n^2d/\sqrt{M})$ & Theorem~\ref{thm:small_cache_lower_bound:informal}\\
    \midrule
    \multirow{2}{*}{Sparse}     
        & Forward Lower  &  $\Omega(Z_\mathrm{input}^2/M)$  & Theorem~\ref{thm:sparse_attn_io:formal} & $\Omega(Z_\mathrm{input}\sqrt{Z_\mathrm{QK}}/\sqrt{M})$ & Theorem~\ref{thm:sparse_attn_io:formal} \\
        & Backward Lower &  $\Omega(Z_\mathrm{input}^2/M)$  & Theorem~\ref{thm:sparse_attn_io:formal} & $\Omega(Z_\mathrm{input}\sqrt{Z_\mathrm{QK}}/\sqrt{M})$ & Theorem~\ref{thm:sparse_attn_io:formal} \\
    \bottomrule    
    \end{tabular}
    }
    
    \end{center}
\end{table*}

Therefore, we provide the optimal bounds and algorithms for backward passes for all cache sizes. This fully characterizes the I/O complexity of attention forward/backward when combined with existing results on forward passes~\citep{sy24}. 
Notably, we confirm that FlashAttention is optimal for both the forward and backward passes when the cache size is large enough $M = \Omega(d^2)$.
Moreover, in recent years, sparse attention has become another mainstream method for speeding up the training process of transformer-based models~\citep{cgrs19, zgd+20, bpc20}. These approaches mainly focus on techniques for sparsifying the attention matrix, thereby reducing the quadratic bottleneck in running time. 
However, it remains unknown whether this method can be integrated with I/O-aware algorithms like FlashAttention. Consequently, we further analyze the I/O complexity of sparse attention to provide theoretical guarantees, offering fine-grained lower bounds. 

\begin{theorem}[Lower bound for sparse attention forward and backward, informal version of Theorem~\ref{thm:sparse_attn_io:formal}]\label{thm:sparse_attn_io:informal}
     Let $Z_\mathrm{input}$ and $Z_\mathrm{QK}$ be the number of nonzero entries of the input matrix and the key-query matrix, respectively. Then, any algorithm for both attention forward and backward computation using sparse semi-ring matrix multiplication has I/O complexity 
    \begin{align*}
        \Omega\left(\min\left\{\frac{Z_\mathrm{input}^2}{M}, \frac{Z_\mathrm{input}\sqrt{Z_\mathrm{QK}}}{\sqrt{M}}\right\}\right).
    \end{align*}
\end{theorem}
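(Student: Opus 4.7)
The plan is to reduce the sparse attention forward and backward computations to sparse semi-ring matrix multiplication and then invoke a red-blue pebble game lower bound in the spirit of Hong--Kung~\cite{hk81}. The attention forward computation is built from two chained products, $S = QK^\top$ (whose nonzero structure is captured by $Z_\mathrm{QK}$) and $O = P V$ where $P$ is obtained from $S$ by a softmax normalization; the backward pass contains the transposed products $dV = P^\top dO$ and the analogous products for $dQ$, $dK$. In every one of these products, the semi-ring operations are exactly the scalar multiply--adds that the lower bound will charge for, while the softmax and other elementwise nonlinearities can be treated as free post/pre-processing that does not participate in the semi-ring matrix multiplication itself.

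First, I would formalize the sparse semi-ring matrix multiplication lower bound that powers the bound. For a product $C = A B$ in which the pattern of computed $A$-entries (equivalently the pattern of $S$) has $Z_\mathrm{QK}$ nonzeros and the inputs collectively carry $Z_\mathrm{input}$ nonzeros, I would adapt the standard Hong--Kung partitioning: split the execution into consecutive rounds of exactly $M$ cache-memory transfers, observe that in each round at most $2M$ distinct inputs/outputs are accessible, and upper bound the number of useful semi-ring multiplications per round via a Loomis--Whitney-type inequality on the three-dimensional cube of indices $(i,j,k)$ associated to the products $A_{ik}B_{kj}$. Summing over rounds and matching the total number of required semi-ring products (which is at least the product-count forced by the sparsity patterns of inputs and of the intermediate matrix $S$) gives the $\Omega(\min\{Z_\mathrm{input}^2/M,\; Z_\mathrm{input}\sqrt{Z_\mathrm{QK}}/\sqrt{M}\})$ bound for a single product.

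Second, I would apply this bound to the specific matrix products that appear in sparse attention. For the forward pass, $O = PV$ is exactly such a product, with $P$ inheriting the $Z_\mathrm{QK}$ support of $S$, and with $V$ contributing to $Z_\mathrm{input}$; the lower bound then carries over verbatim. For the backward pass, the gradient identities $dV = P^\top dO$, $dP = dO\, V^\top$, and the subsequent $dS$, $dQ$, $dK$ updates are each sparse semi-ring products with the same $(Z_\mathrm{QK}, Z_\mathrm{input})$ parameters, so applying the bound to any one of them suffices. In both directions, the nonlinear steps (softmax, its Jacobian) are pointwise and thus can be absorbed into the free non-semi-ring portion of the schedule without reducing the I/O cost of the surrounding products.

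The main obstacle I expect is the semi-ring modeling step: one must argue rigorously that no clever interleaving of softmax and matrix-product operations can bypass the pebble-game lower bound for the underlying sparse product. I would handle this by restricting attention to the subgraph of the computation DAG consisting only of the semi-ring multiply--add nodes and showing that any legal pebbling of the full DAG induces a legal pebbling of this subgraph with no larger I/O cost, so the lower bound on the subgraph transfers to the full algorithm. Matching the two terms in the min to the two cache regimes ($M \lesssim Z_\mathrm{input}^2/Z_\mathrm{QK}$ versus $M \gtrsim Z_\mathrm{input}^2/Z_\mathrm{QK}$) then follows directly from the Hong--Kung partitioning analysis.
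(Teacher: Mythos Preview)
Your proposal targets the wrong matrix product. The paper's argument is much more direct: it observes that any attention forward or backward algorithm must compute the pre-softmax score matrix $S = A_1 X A_2^\top$ entrywise (the softmax is a genuine nonlinearity applied coordinatewise, so no entry of $S$ can be bypassed), and then invokes the Pagh--St\"ockel sparse semi-ring lower bound (Lemma~\ref{lem:sparse_mat_mul_io}) as a black box on the product $(A_1 X)\,A_2^\top$. In that product the \emph{inputs} carry at least $\nnz(A_2)\ge Z_{\mathrm{input}}$ nonzeros and the \emph{output} has exactly $Z_{\mathrm{QK}}$ nonzeros, so the bound $\Omega(\min\{R_1^2/M,\, R_1\sqrt{R_2}/\sqrt{M}\})$ with $R_1\ge Z_{\mathrm{input}}$ and $R_2=Z_{\mathrm{QK}}$ reads off the stated theorem immediately. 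No interleaving or subgraph argument is needed, and the backward case is identical because $A_1 X A_2^\top$ appears there too.

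Your route through $O=PV$ has a parameter mismatch that breaks the conclusion. In the Pagh--St\"ockel (or Loomis--Whitney) bound, the quantity under the square root is the \emph{output}-side sparsity $R_2$. For $PV$ the output is the $n\times d$ matrix $O$, so $R_2=\nnz(O)$, not $Z_{\mathrm{QK}}$; meanwhile $Z_{\mathrm{QK}}=\nnz(P)$ sits on the \emph{input} side as part of $R_1$. The bound you would actually extract from $PV$ is therefore of the form $(\nnz(P)+\nnz(V))\sqrt{\nnz(O)}/\sqrt{M}$, which does not recover $Z_{\mathrm{input}}\sqrt{Z_{\mathrm{QK}}}/\sqrt{M}$. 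The same mismatch afflicts $dV=P^\top dO$ in the backward pass. Your proposed subgraph reduction also runs into a technical snag: the entries of $P$ are intermediate nodes in the full DAG, so in the induced pebbling of the multiply--add subgraph they may already carry red (not blue) pebbles, and the Hong--Kung input/output charging does not transfer cleanly. The fix is simply to move the lower-bound target one step earlier, to $S=QK^\top$ itself, where the input/output roles align with $Z_{\mathrm{input}}$ and $Z_{\mathrm{QK}}$ as the statement requires.
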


Our I/O complexity lower bound for sparse attention recovers the lower bound for both attention forward and backward passes when matrices involved in attention computation are dense, i.e., $Z_\mathrm{input} = \Omega(nd), Z_\mathrm{QK} = \Omega(n^2)$. In such case, our lower bound reads as $\Omega(\min\{\frac{n^2d^2}{M}, \frac{n^2d}{\sqrt{M}}\})$, matching Theorem~\ref{thm:main}. 
The dividing point between small and large cache for sparse attention is $M = Z^2_\mathrm{input} / Z_\mathrm{QK}$, which also matches the dense case.

We summarize our contributions in Table~\ref{tab:our_result} and also conclude as follows: 
\begin{itemize}
\item For small cache sizes $ M = o(d^2) $ in the backward pass, we present optimal upper and lower bounds and propose an algorithm achieving the optimal (Algorithm~\ref{alg:attention_gradient}). Notably, FlashAttention is not optimal in this setting, and our algorithm outperforms it.

\item For large cache sizes $ M = \Omega(d^2) $ in the backward pass, we establish an optimal lower bound that matches the existing upper bound. We also prove the optimal upper bound and introduce an optimal algorithm (Algorithm~\ref{alg:attention_gradient_large_cache}), matching the existing results for FlashAttention but providing a different analysis.

\item For sparse attention, we offer fine-grained lower bounds for both forward and backward passes and across all cache sizes (Theorem~\ref{thm:sparse_attn_io:formal}).
\end{itemize}
\


\section{Related Work}\label{sec:related_work}

\textbf{Large Language Models.}
The exceptional success of generative large language models (LLMs), such as GPT-4~\citep{aaa+23}, Claude 3~\citep{a24}, Gemini 1.5~\citep{rst+24}, Llama 3.1~\citep{m24}, Mistral Nemo~\citep{jsm+23}, Phi 3.5~\citep{aja+24}, is fundamentally attributed to the transformer architecture introduced by~\citet{vsp+17} and all support at least 128k input token length.
The transformer architecture and its self-attention mechanism have become indispensable in leading natural language processing (NLP) models~\citep{cww+24}, demonstrating remarkable capabilities across a diverse array of applications, including language translation~\citep{hwl21}, sentiment analysis~\citep{uas+20}, language modeling~\citep{mms+19}, the integration of differential privacy~\citep{samb24,lssz24_dp}, and multi-modal tasks~\citep{zhjl24,lssz24_tat,wms+24}.
Transformers' emergent compositional abilities~\citep{dls+24,xsl24} and proficiency in in-context learning~\citep{oen+22,mlh+22,swxl24} have led some to consider them as early indicators of Artificial General Intelligence (AGI)~\citep{bce+23}. 
As such, the transformer architecture plays a pivotal role in advancing the field of AI.

\textbf{Attention Computation Acceleration.}
The quadratic time complexity of attention computation with respect to the length of the input sequence~\citep{vsp+17} poses significant computational challenges, especially for long sequences. Consequently, accelerating attention computation has become a crucial research area.
From a theoretical standpoint, numerous works focus on approximating the attention matrix to accelerate computation \citep{hjk+23,as23, as24_arxiv,lss+24,as24_iclr, lssz24_tat}. 
Experimental approaches involve modifying model architectures and optimizing implementations to accelerate inference. Methods such as Mamba~\citep{gd23, dg24}, Linearizing Transformers~\citep{zbkr24, mvk+24}, Hopfield Models~\citep{hyw+23,whl+24,hlsl24,xhh+24,whhl24,hcl+24,hcw+24} 
and PolySketchFormer~\citep{zhdk23,kmz23} aim to improve model performance and inference speed.
System-level optimizations, such as FlashAttention~\citep{dfe+22, d23, sbz+24} and block-wise parallel decoding~\citep{ssu18}, address bottlenecks in attention mechanisms and enhance inference speed through efficient implementation strategies. Collectively, these advancements contribute to making attention mechanisms more scalable and efficient, facilitating the deployment of large-scale language models. \cite{smn+24} accelerates inference by compressing the input text.

\textbf{Learning with Bounded Memory and I/O Complexity.}
A common memory model in computational systems is the two-level memory hierarchy. In this model, there are two layers of memory: a small but fast layer called the \emph{cache}, and a large but slower layer called the \emph{memory}. The I/O (input/output) complexity of an algorithm measures its efficiency based on the number of data transfer operations it performs between the cache and the memory. The early work of \citet{hk81} formulated the I/O complexity mathematically using the language of graph theory. 
Learning with bounded memory has been studied in various fields in machine learning such as online learning~\citep{swxz22, pr23, pz23}, convex optimization~\citep{mssv22, cp23}, active learning~\citep{hklm21}, attention computation~\citep{alsy23},  and continual learning~\citep{cpp22, ezw+22}.

Due to space constraints, we move some related works to the Appendix~\ref{sec:more_related_work}.

\section{Preliminary}\label{sec:preliminary}

In this work, we consider using a standard algorithm for matrix multiplication, which means that for any two matrices $A\in \R^{n_1 \times d}, B \in \R^{d \times n_2}$, each entry of $AB$ is computed by $(AB)_{i,j} = \sum_{k=1}^d A_{i,k} B_{k,j}$ for $i \in [n_1], j \in [n_2]$. Note that this setting is also used in FlashAttetnion~\citep{dfe+22} and \citet{sy24}. 
Although certain ``fast'' algorithms, such as Strassen~\cite{s69} or Coppersmith–Winograd~\cite{cw87}, have lower asymptotic computational complexity, they typically incur significantly large hidden constants and do not parallelize as efficiently on modern GPUs. In practice, libraries such as cuBLAS and CUTLASS optimized for standard GEMMs (General matrix multiplications), often outperform any known fast-matrix approach on the matrix sizes relevant to deep learning. Therefore, assuming standard matrix multiplication offers an accurate reflection of how attention computation is commonly carried out in real-world systems.
Now, we introduce some key concepts needed for this paper. 

\subsection{Key Concept of Attention}

Before formally stating our results, we begin by precisely defining the problems we study. We define the following computation of the general Softmax attention forward layer.

\begin{definition}[Attention forward computation]\label{def:attn_forward}
Let $n$ be the input length and $d$ be the head dimension.
Let $A_1,A_2,A_3 \in \R^{n \times d}$ be the inputs of previous layer.
Given query, key and value weights matrix $W_Q,W_K, W_V \in \R^{d \times d}$, we have the Softmax attention forward computation being
\begin{align*}
    \mathsf{Attn}(A_1, A_2, A_3) := D^{-1} \exp( A_1 W_Q W_K^\top A_2^\top) A_3 W_V,
\end{align*}
where (1) $D:= \diag( \exp( A_1 W_Q W_K^\top A_2^\top ) \cdot {\bf 1}_n )$,
(2) $\exp$ denotes the exponential function and is applied entry-wisely,
(3) $\diag()$ operation takes a vector and outputs a diagonal matrix with the entries of that vector,
and (4) ${\bf 1}_n$ denotes the length-$n$ all ones vector.

To simplify and focus more clearly on the core computational aspects of the problem, we set $X = W_Q W_K^\top \in \mathbb{R}^{d \times d}$ and $Y = W_V \in \mathbb{R}^{d \times d}$. 
\end{definition}

\begin{figure*}[!ht]
    \centering
    \includegraphics[width=0.98\linewidth]{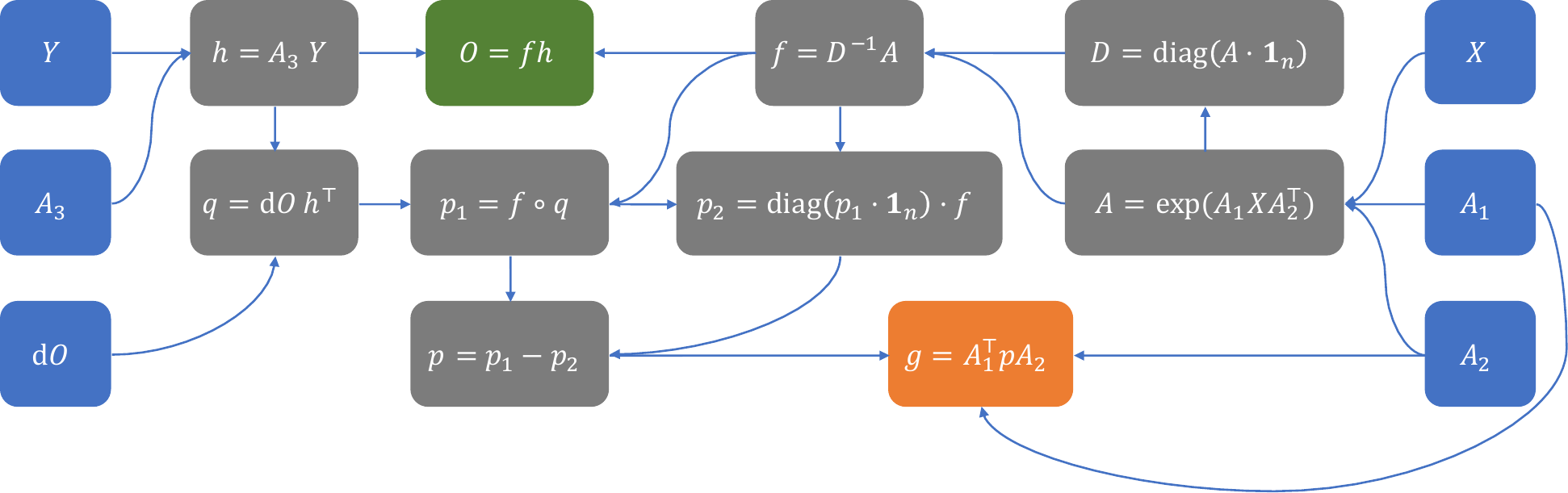}
    \caption{The computational graph for attention forward and backward. 
    The blue boxes are input matrices, the gray boxes are intermediate matrices, the green box is the forward output, and the orange box is the final gradient matrix. 
    Here, $A_1,A_2,A_3$ denote the previous inputs, $\d O$ denotes the upstream gradient, and $X,Y$ denote the attention weights.
    More detailed definitions of each variables can be found in Section~\ref{sec:preliminary} and \ref{sec:preli}.
    }
    \label{fig:attn_backward}
\end{figure*}

Note that, we have 
\begin{align*}
    \mathsf{Softmax}(A_1 X A_2^\top) = D^{-1}  \exp( A_1 X A_2^\top) \in \R^{n\times n},
\end{align*}
and usually we call it the attention matrix. 
The above definition is general and encompasses both self-attention and cross-attention mechanisms in Transformer architectures. Specifically, self-attention occurs when $A_1 = A_2 = A_3$, meaning that the queries, keys, and values are all derived from the same source. In contrast, cross-attention happens when $A_2 = A_3$, indicating that the keys and values come from one source while the queries come from the other.

Notably, FlashAttention~\citep{dfe+22, d23, sbz+24} and \citet{sy24} consider $Q,K,V \in \R^{n \times d}$ after applying the linear layer to the previous inputs, while we consider a more detailed structure as $Q = A_1 W_Q, K = A_2 W_K, V = A_3 W_V$ (Definition~\ref{def:attn_forward})  explicitly calculating module-wise gradients on attention weights.
This explains why our I/O complexity bound $\Theta (\min \{\frac{n^2d^2 + nd^3}{M}, \frac{n^2d + nd^2}{\sqrt{M}} \})$ in Theorem~\ref{thm:main} has an additional term $n d^2$ in the small cache case and $nd^3$ in the large cache case. When $n \ge d$, the additional term will disappear.

Mathematically, optimizing the attention computation involves adjusting the attention weight matrices $X$, and $Y$. 
Using the previous results on attention gradients from \citet{as24_arxiv} and \citet{lss+24}, we have the following definition of attention gradient:
\begin{definition}[Attention backward gradient]\label{def:attn_backward}
Let $A_1, A_2 \in \R^{n\times d}$.
Let $p(X) \in \R^{n \times n}$ be defined in Definition~\ref{def:p} (see Fig.~\ref{fig:attn_backward} for an illustration).
Let $L(X)$ be some loss function.
The attention backward gradient for $X \in \R^{d \times d}$ is
$
    \frac{\d L(X)}{\d X} = A_1^\top p(X) A_2.
$
\end{definition}

\begin{remark}
Since the attention module depends only linearly on $Y$, it is straightforward to incorporate it into an algorithm, and it is not a complexity bottleneck.
Thus, we focus on the case where $X$ is variable and $Y$ is a fixed input.
\end{remark}
\begin{figure*}[!ht]
    \centering
    \includegraphics[width=0.95\linewidth]{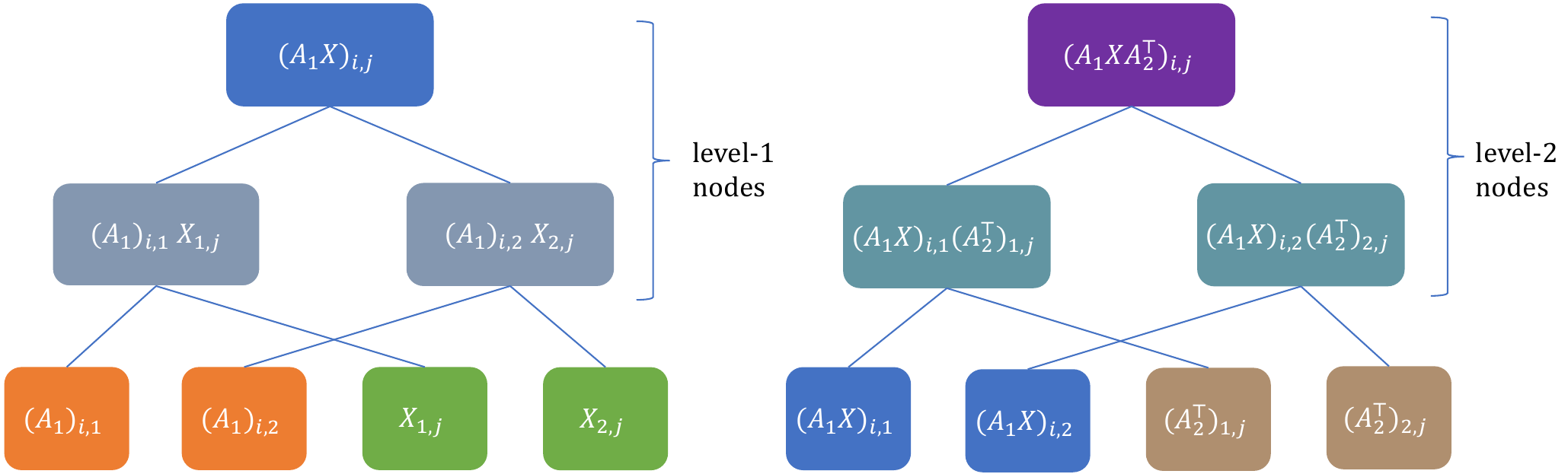}
    \caption{This diagram shows a summation tree with $d=2$
    in the computational graph for the backward passes of attention using standard matrix multiplication. The orange and green nodes represent the input nodes of the level-$1$ summation tree. The brown nodes, along with the blue nodes (output from the level-$1$ summation tree), serve as inputs for the level-$2$ summation tree. The purple nodes represent the target output. When $d$ gets larger, the summation tree will expand with additional layers,  where each new layer introduces intermediate nodes that represent the sums of pairs of nodes from the previous layer, i.e., there will be a total $1 + \log_2 d$ layer in total. }
    \label{fig:tree_nodes}
\end{figure*}

\subsection{Summation Tree}
In this subsection, we need to introduce the computational graph of the attention backward gradient, which is the key concept in our I/O complexity analysis. 

In the computational graph shown in Fig.~\ref{fig:attn_backward}, we can first compute $A_1 X$ and then compute $(A_1 X) A_2^\top$, or first compute $X A_2^\top$ and then compute $A_1 (X A_2^\top)$. In either case, we perform two matrix multiplications: one between an $n \times d$ matrix and a $d \times d$ matrix, and the other between an $n \times d$ matrix and a $d \times n$ matrix.
Without loss of generality for illustration, we consider the first case. To compute $A_1 X$, we need to calculate the products $\{ (A_1)_{i,k} X_{k,j} \}$ for all $i \in [n], \, k \in [d], \, j \in [d]$. Each entry $(A_1 X)_{i,j}$ is then obtained by summing these products over $k$:
$(A_1 X)_{i,j} = \sum_{k=1}^d (A_1)_{i,k} X_{k,j}.$
In the computational graph, this summation is represented by a summation tree that connects the product nodes $(A_1)_{i,k} X_{k,j}$ to the sum node $(A_1 X)_{i,j}$. We define the product nodes $(A_1)_{i,k} X_{k,j}$, the nodes corresponding to the sums $(A_1 X)_{i,j}$, and all intermediate nodes in the summation trees as \emph{level-1 nodes}. Similarly, we define \emph{level-2 nodes} as these nodes in the summation trees involved in computing $(A_1 X) A_2^\top$. We give an example of the summation tree with $d=2$ in Fig.~\ref{fig:tree_nodes}.

\subsection{I/O Complexity}\label{sec:io_def}

There are various ways to define the two-level memory hierarchy and the I/O complexity. We state the definition in~\citet{hk81}, which formulates the two-level memory hierarchy as a red-blue pebble game played on a computational graph. Very recently, \citet{sy24} proved that the I/O complexity of forward computation of FlashAttention is optimal by analyzing the red-blue pebble game on an attention forward computational graph.

\begin{definition}[Red-blue pebble game~\citep{hk81}]\label{def:red_blue_pebble_game}
Consider a game played on a directed acyclic graph that has a limited number of red pebbles and an unlimited number of blue pebbles. Initially, each input node (a node with no parents) is marked with a blue pebble, while all other nodes have no pebbles. The player is allowed to perform the following operations:
\begin{itemize} 
\setlength\itemsep{0.0em}
\item \textbf{Input}: Replace a blue pebble on a node with a red pebble. 
\item \textbf{Output}: Replace a red pebble on a node with a blue pebble. 
\item \textbf{Compute}: Place a red pebble on a node if all its parent nodes have red pebbles. 
\item \textbf{Delete}: Remove a pebble from a node. \end{itemize}
The objective of the game is to place blue pebbles on all output nodes (i.e., nodes with no children) while minimizing the total number of input and output operations used throughout the process. 
\end{definition}

In the red-blue pebble game, each node represents a computational task. A red pebble denotes a unit in the small but fast layer known as \emph{cache}, while a blue pebble represents a unit in the large but slower layer called \emph{memory}. A task can only be computed once all its dependent tasks are completed. All computations are assumed to occur within the cache. Hence, efficient use of cache plays a critical role in reducing the I/O operations of an algorithm to minimize the cost associated with data transfer between memory and cache. We can define the I/O complexity by using the red-blue pebble game. 

\begin{definition}[I/O complexity~\citep{hk81}]\label{def:io} Consider the red-blue pebble game played on a directed acyclic graph $G$. Let $M$ be a positive integer. The I/O complexity, denoted as $Q(G,M)$, is the minimum number of input and output operations to complete the objective of the game with the restriction that no more than $M$ red pebbles are present on the graph at any time. We omit $G$ when it is clear in the context.
\end{definition}

The red-blue pebble game provides insight into cache management by modeling the limited cache size through the number of red pebbles. The maximum number of red pebbles corresponds to the size of the cache, which means that there can be at most $M$ items in the cache at any given time. 

\section{Main Results}\label{sec:main}

In Theorem~\ref{thm:main}, we provide matching upper and lower bounds for the I/O complexity of attention gradient computation in the backward passes.
In detail, Theorem~\ref{thm:main} states that the I/O complexity of the attention gradient computation is $\Theta (\min \{\frac{n^2d^2 + nd^3}{M}, \frac{n^2d + nd^2}{\sqrt{M}} \})$, which splits the cache size into two cases: (1) small cache $M = o(d^2)$; (2) large cache $M = \Omega(d^2)$.
At the cross point $M = d^2$, we have $\frac{n^2d^2 + nd^3}{M} = \frac{n^2d + nd^2}{\sqrt{M}} = n^2 + nd$.
An intuitive figure of the asymptotic I/O complexity is shown in Fig.~\ref{fig:io}.

Here, we discuss two implications of Theorem~\ref{thm:main}.
First, through the fine-grained analysis, our result identifies a critical point at $M = d^2$, where the I/O complexity changes its behavior. 
For $M = o(d^2)$, we establish better upper and lower bounds compared to existing results, demonstrating that FlashAttention is not optimal in this regime.
Second, when $M = \Omega(d^2)$, Theorem~\ref{thm:main} provides a tighter lower bound than existing work using red-blue pebble game (Definition~\ref{def:red_blue_pebble_game}), offering insights of algorithm design.

Moreover, by combining the results of \citet{sy24} with our findings, we provide a more general and tighter I/O complexity characterization of FlashAttention 1/2~\citep{dfe+22,d23}.
In the large cache scenario where $M = \Omega (d^2)$, the attention forward I/O complexity is $\Theta(\frac{n^2 d^2}{M})$, as discussed in Theorem 5.1 of \citet{sy24}.
Combining this result with our attention backward I/O complexity $\Theta (\frac{n^2d^2 + nd^3}{M})$ (Theorem~\ref{thm:main}), we conclude that the overall complexity is $\Theta(\frac{n^2d^2 + nd^3}{M})$.
Thus, given the cache size is sufficiently large, i.e., $M = \Omega(d^2)$, the I/O complexity of the forward and backward computation for FlashAttention 1/2 is optimal.

Our main result Theorem~\ref{thm:main} is a summary of our results for different cache sizes (Theorem~\ref{thm:attn_grad_large_cache:informal}, \ref{thm:large_cache_lower_bound:informal}, \ref{thm:attn_grad_small_cache:informal}, and \ref{thm:small_cache_lower_bound:informal}), which will be discussed in the later subsections.

\subsection{Large Cache}
The large cache scenario is more interesting and practical.
We now prove an upper bound below.
\begin{theorem}[Large cache upper bound, informal version of Theorem~\ref{thm:attn_grad_large_cache}]\label{thm:attn_grad_large_cache:informal}
Suppose $n$ is the input length, $d$ is the head dimension, and $M = \Omega(d^2)$ is the cache size.
There is an algorithm (see Algorithm~\ref{alg:attention_gradient_large_cache}) outputs a $d \times d$ matrix $g=\frac{\d L(X)}{\d X}$ (Definition~\ref{def:attn_backward}) with I/O complexity $O(\frac{n^2d^2+nd^3}{M})$.
\end{theorem}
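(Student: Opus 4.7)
The plan is to analyze a tiled backward algorithm (Algorithm~\ref{alg:attention_gradient_large_cache}) that recomputes the relevant blocks of the attention matrix on the fly rather than materializing them in memory, in the spirit of FlashAttention. The key structural fact is that when $M = \Omega(d^2)$, the weight matrix $X \in \R^{d \times d}$ and the accumulating gradient $g \in \R^{d \times d}$ each have $d^2 \leq O(M)$ entries and can remain cache-resident throughout the computation; only the length-$n$ axes of $A_1, A_2, A_3, \d O$ need to be streamed through.

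First, I would partition each of $A_1, A_2, A_3, \d O$ row-wise into $n/B$ tiles of size $B \times d$, choosing $B = \Theta(M/d)$. Since $M = \Omega(d^2)$ we have $B = \Omega(d)$, which suffices to guarantee that a constant number of $B \times d$ tiles, together with $X$, a $B \times B$ scratchpad for the attention-score block, and the $d \times d$ accumulator $g$, all fit simultaneously under the budget $M$ (one checks $d^2 + O(Bd) + B^2 = O(M)$). The algorithm then double-loops over tile indices $(i,j)$: it brings the $i$-th tile of $A_1$ and the $j$-th tile of $A_2$ (and the corresponding tiles of $A_3, \d O$ when first needed) into cache, forms the $B \times B$ block of the pre-softmax score $A_1^{(i)} X (A_2^{(j)})^\top$ entirely in cache, recovers the softmax block using the $O(n)$ row statistics saved from the forward pass, combines with $Y, \d O$ and $A_3$ to form the corresponding block of $p(X)$, and accumulates $(A_1^{(i)})^\top \, p(X)^{(i,j)} \, A_2^{(j)}$ into the resident $d \times d$ accumulator $g$. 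All per-tile arithmetic happens in cache, so no intermediate $n \times n$ object ever touches memory.

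Counting I/O operations is then straightforward: there are $(n/B)^2$ tile pairs, and each pair incurs $O(Bd)$ data transfers for the fresh slabs of $A_1, A_2, A_3, \d O$, while $X$ and $g$ are loaded/stored only once and the row statistics contribute $O(n)$. The dominant sum is
\begin{align*}
O\!\left(\frac{n^2}{B^2} \cdot Bd\right) = O\!\left(\frac{n^2 d}{B}\right) = O\!\left(\frac{n^2 d^2}{M}\right).
\end{align*}
The additional $O(nd^3/M)$ term originates from the module-wise steps that compose length-$n$ intermediates with the $d \times d$ weight factors implicit in $p(X)$ (e.g.\ forming $A_3 Y$-type products and their analogues on the backward side); each such $(n \times d) \times (d \times d)$ subproduct can be tiled with panels of width $\Theta(M/d)$ in the large-cache regime, giving $O(nd \cdot d^2/M) = O(nd^3/M)$ I/Os. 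Whenever $n \geq d$ this term is absorbed into $n^2d^2/M$, consistent with the remark following Definition~\ref{def:attn_forward}.

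The main obstacle I anticipate is the careful bookkeeping needed to verify that the per-tile working set truly fits under $M$ while simultaneously hosting $X$, two input slabs, a $B \times B$ attention-block scratch, a $B \times d$ scratch for fusing the softmax derivative with $Y, \d O, A_3$, and the $d \times d$ accumulator $g$; this reduces to validating $d^2 + O(Bd) + B^2 = O(M)$ at $B = \Theta(M/d)$, which is where $M = \Omega(d^2)$ is used crucially. A secondary subtlety is checking that the online-softmax trick from the forward pass still permits block-local recomputation of the softmax derivative without ever materializing any $n \times n$ object; once both points are settled, summing the per-tile I/O yields the claimed $O((n^2d^2 + nd^3)/M)$ bound.
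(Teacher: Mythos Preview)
Your overall strategy matches the paper's: tile the length-$n$ axes, keep the $d \times d$ accumulator $g$ resident in cache, recompute attention-score blocks on the fly so that no $n \times n$ object ever touches memory, and account separately for the $(n\times d)\times(d\times d)$ products that contribute the $O(nd^3/M)$ term. The paper organizes this into two phases---first materialize $S = A_1 X$ and $h = A_3 Y$ in memory (this is the source of the $nd^3/M$ term), then run the main double loop reading row slabs of $S, A_1, A_2, h, O, \d O$ and accumulating into $g$---but the I/O arithmetic is essentially the same as yours.

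There is, however, a concrete gap in your cache-fit verification. With the symmetric choice $B = \Theta(M/d)$, the $B \times B$ attention-score scratchpad has $\Theta(M^2/d^2)$ entries, so your claimed inequality $d^2 + O(Bd) + B^2 = O(M)$ holds only when $M = \Theta(d^2)$; for any $M$ in the range $d^2 \ll M \leq nd$ (which the theorem must cover) the scratchpad alone overflows the cache. The paper handles this by using \emph{asymmetric} block sizes $B_r = \min\{\lceil M/(4d)\rceil, d\}$ and $B_c = \lceil M/(4d)\rceil$, so the $B_r \times B_c$ block is at most $d \cdot M/(4d) = M/4$. You need the same adjustment: cap one of the two tile dimensions at $d$ and let the other grow as $\Theta(M/d)$. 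Once you do this, the working set is $O(d^2) + O(B_c d) + O(B_r B_c) = O(M)$ for all $M \geq d^2$, and the I/O count becomes $(\text{outer-loop count}) \cdot O(nd) = \Theta(nd/M) \cdot O(nd) = O(n^2 d^2/M)$ as desired. The rest of your argument goes through unchanged.
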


We then demonstrate that this upper bound is tight by providing a matching lower bound for the I/O complexity of the attention backward passes. 
To achieve this, we employ the framework developed in \citet{hk81}, which shows that executing an algorithm on a machine with a two-level memory hierarchy can be modeled by a red-blue pebble game (Definition~\ref{def:red_blue_pebble_game}) on a directed acyclic graph. 
We present the large cache lower bound below, which shows as long as the cache size $M = \Omega(d^2)$, the I/O complexity is at least $\Omega(\frac{n^2d^2+nd^3}{M})$.

\begin{theorem}[Large cache lower bound, informal version of Theorem~\ref{thm:large_cache_lower_bound:formal}]\label{thm:large_cache_lower_bound:informal}
    Suppose $n$ is the input length and $d$ is the head dimension.
    Suppose the cache size $M = \Omega(d^2)$.
    Then the I/O complexity of attention gradient computation using standard matrix multiplication is always $\Omega(\frac{n^2d^2+nd^3}{M})$.
\end{theorem}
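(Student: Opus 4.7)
The plan is to invoke the red-blue pebble game framework (Definitions~\ref{def:red_blue_pebble_game}--\ref{def:io}) on the computational DAG $G$ realizing the attention gradient $\frac{\d L(X)}{\d X} = A_1^\top p(X) A_2$ (Fig.~\ref{fig:attn_backward}), and then apply Hong-Kung's $2M$-partition inequality: if $h$ is the minimum number of segments in any valid $2M$-partition of $G$, then $Q(G, M) \geq M(h-1)$. The goal therefore reduces to showing that $h \geq \Omega((n^2 d^2 + nd^3)/M^2)$ whenever $M = \Omega(d^2)$.

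First, I enumerate the elementary contributions whose realization in the DAG is forced by the semantics of the gradient. For each quadruple $(i, j, k, l) \in [n]^2 \times [d]^2$, the contribution $(A_1)_{i, j} \cdot p(X)_{i, k} \cdot (A_2)_{k, l}$ must be summed into entry $(j, l)$ of the output, regardless of whether the algorithm proceeds via two explicit matrix products or fuses them through the summation-tree structure described in Section~\ref{sec:preliminary}. This yields $n^2 d^2$ essential contributions from the $A_1^\top p(X) A_2$ product. A parallel analysis of the sub-DAG that computes $p(X)$ from $X$ (whose product structure has axis dimensions $(d, n, d)$ instead of $(n, n, d)$) yields an additional $\Omega(nd^3)$ essential contributions, accounting for the second term in the bound.

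Second, I bound the contributions realizable within a single segment. A segment keeps at most $2M$ scalars in cache at any moment, so among the contributions $(i, j, k, l)$ realized in the segment, the projection onto the $(i, j)$-axes (which indexes $A_1$) has size at most $2M$, and likewise the projection onto $(k, l)$ (which indexes $A_2$) has size at most $2M$. Since $(i, j)$ and $(k, l)$ jointly cover all four indices, a Shearer/Loomis--Whitney type projection inequality implies that at most $O(M^2)$ contributions can be realized per segment. Combining with the total $\Omega(n^2 d^2 + nd^3)$ count gives $h \geq \Omega((n^2 d^2 + nd^3)/M^2)$, and hence $Q(G, M) \geq \Omega((n^2 d^2 + nd^3)/M)$, as claimed.

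The main obstacle will be the rigorous linkage between an abstractly "realized" 4D contribution and the red pebbles present in the Hong-Kung game. Concretely, we must verify that whenever a quadruple $(i, j, k, l)$ is realized via some summation-tree node in the segment, the scalars $(A_1)_{i, j}$ and $(A_2)_{k, l}$ lie within the segment's dominator set, so that the projection bounds legitimately apply. This requires a careful DAG surgery on the level-1 and level-2 summation trees and must remain insensitive to algorithmic reorderings (for instance, computing $A_1^\top p(X)$ first versus $p(X) A_2$ first) and to sharing of intermediate values across different output entries. Extending the argument to the $nd^3$ term requires repeating the projection analysis for the $p(X)$-generating sub-DAG with the roles of the axes reshuffled.
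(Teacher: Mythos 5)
Your overall architecture (Hong--Kung $2M$-partition, count forced elementary contributions, bound what one segment can realize, divide) is the same as the paper's, but the key per-segment counting step has a genuine gap that your closing paragraph flags as a ``verification detail'' when in fact it breaks the bound. You count $n^2d^2$ four-index contributions $(A_1)_{i,j}\,p(X)_{i,k}\,(A_2)_{k,l}$ and claim each segment realizes at most $O(M^2)$ of them because the projections onto the $(i,j)$- and $(k,l)$-axes are each at most $2M$. Under standard matrix multiplication the triple product is computed in two stages, say $T=A_1^\top p$ followed by $TA_2$; a segment of the second stage can load the single precomputed scalar $T_{j,k}=\sum_i (A_1)_{i,j}p_{i,k}$ from memory and multiply it by $(A_2)_{k,l}$, thereby ``realizing'' all $n$ quadruples $(i,j,k,l)$, $i\in[n]$, with no entry of $A_1$ anywhere in its dominator set. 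The creation of a quadruple's contribution (at a product node of the first multiplication) and its consumption (at a product node of the second) can live in different segments, so the $(i,j)$-projection of the realized set is not bounded by $2M$, and a dominator of size $2M$ can in fact cover up to order $nM^2$ quadruples. Your proposed Shearer-type inequality is applied to the wrong object.

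The fix is to abandon the fused 4D count and bound each of the two constituent matrix multiplications separately, which is exactly what the paper does: it applies Lemma~\ref{lem:par_size} (Lemma~3.3 of \citet{sy24}), which says that when $M=\Omega(d^2)$ each part of an $M$-partition contains at most $O(M^2/d)$ summation-tree nodes of a single rectangular product with inner or outer dimension $d$. The product $(A_1X)A_2^\top$ has $n^2d$ elementary product nodes, giving $h=\Omega(n^2d^2/M^2)$ and hence $\Omega(n^2d^2/M)$ I/O, and the product $A_1X$ has $nd^2$ elementary products, giving the $\Omega(nd^3/M)$ term. (Note the paper extracts the $nd^3$ term from $A_1X$, i.e.\ axes $(n,d,d)$, not from a sub-DAG computing $p(X)$ as you suggest; your description of that sub-DAG's axis structure is also off, though several multiplications in the graph would serve.) The arithmetic in your version happens to land on the right exponents because $n^2d\cdot d/M^2=n^2d^2/M^2$, but the intermediate claims as stated are false, so the argument does not go through without being restructured around the per-multiplication partition lemma.
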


\subsection{Small Cache}
In the small cache case, we provide an upper bound below. Notice that this is better than the I/O complexity of FlashAttention since $O(\frac{n^2d^2+nd^3}{M}) > O(\frac{n^2 d + n d^2}{\sqrt{M}})$ when $M = o( d^2)$.
\begin{theorem}[Small cache upper bound, informal version of Theorem~\ref{thm:attn_grad_small_cache}]\label{thm:attn_grad_small_cache:informal}
Suppose $n$ is the input length, $d$ is the head dimension, and $M = o(d^2)$ is the cache size.
There is an algorithm (see Algorithm~\ref{alg:attention_gradient}) outputs a $d \times d$ matrix $g=\frac{\d L(X)}{\d X}$ (Definition~\ref{def:attn_backward}) with I/O complexity $O(\frac{n^2 d + n d^2}{\sqrt{M}})$, time complexity $O(n^2 d+nd^2)$, and space complexity $O(n^2+d^2)$.
\end{theorem}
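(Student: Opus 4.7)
The plan is to realize the gradient $A_1^\top p(X) A_2$ as an explicit pipeline of standard matrix multiplications, interleaved with a few element-wise passes, and to execute each multiplication with the classical Hong--Kung blocked scheme so that every multiplication contributes its optimal I/O bound.

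First, I would decompose what Algorithm~\ref{alg:attention_gradient} must compute into matrix-multiplication primitives. By Definition~\ref{def:attn_backward} together with the structure of $p(X)$, the forward stage uses $A_1 X$ of shape $(n \times d)(d \times d)$ and then $(A_1 X) A_2^\top$ of shape $(n \times d)(d \times n)$ to build the pre-softmax logits, followed by entry-wise exponentiation and row-normalization to form the attention matrix and a multiplication against $A_3 Y$; the backward stage builds $p(X) \in \mathbb{R}^{n \times n}$ via a constant number of additional $(n \times d)(d \times n)$ products together with element-wise combinations against the attention matrix; and the final gradient is assembled as $A_1^\top p(X)$ of shape $(d \times n)(n \times n)$ followed by multiplication by $A_2$ of shape $(d \times n)(n \times d)$. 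Thus the pipeline consists of $O(1)$ matrix multiplications whose dimensions are each one of $(n \times d)(d \times d)$, $(n \times d)(d \times n)$, $(n \times n)(n \times d)$, or $(d \times n)(n \times d)$, plus $O(1)$ element-wise passes over matrices of size at most $n \times n$.

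Second, I would charge the I/O of each primitive using Hong--Kung: standard multiplication of an $n_1 \times k$ matrix by a $k \times n_2$ matrix has I/O $O(n_1 k n_2/\sqrt{M})$, realized by tiling into $\Theta(\sqrt{M})\times\Theta(\sqrt{M})$ blocks so that three blocks fit simultaneously in cache. Summing over all primitives in the pipeline gives
\begin{align*}
O\!\left(\tfrac{n d^2 + n^2 d + n^2 d + n d^2}{\sqrt{M}}\right) \;=\; O\!\left(\tfrac{n^2 d + n d^2}{\sqrt{M}}\right),
\end{align*}
while each element-wise pass costs only $O(n^2)$ I/O, which is absorbed by $n^2 d/\sqrt{M}$ because $M = o(d^2)$ forces $\sqrt{M} = o(d)$. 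The time bound $O(n^2 d + n d^2)$ follows from summing the arithmetic work $n_1 k n_2$ of each matmul, and the space bound $O(n^2 + d^2)$ follows by reusing a single $n \times n$ scratch region for the attention matrix and then for $p(X)$, a $d \times d$ slot for the running gradient, and $O(nd)$ workspace for the $n \times d$ intermediates (noting $nd \le (n^2 + d^2)/2$).

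The main obstacle I anticipate is not the I/O counting but the orchestration in the small-cache regime: when $M = o(d^2)$, not even a single row of a $d \times d$ weight need fit in cache alongside an active tile, so Algorithm~\ref{alg:attention_gradient} must access every operand strictly through $\Theta(\sqrt{M})\times\Theta(\sqrt{M})$ tiles rather than ever assuming that a full row or column strip is cache-resident. The Hong--Kung blocked scheme is designed for exactly this setting because it requires only three square blocks to coexist in cache; combined with the fact that each intermediate of the pipeline is read and written only $O(1)$ times, the per-stage costs compose additively rather than multiplicatively, yielding the claimed I/O, time, and space bounds.
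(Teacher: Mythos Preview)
Your proposal is correct and follows essentially the same approach as the paper: decompose the gradient computation into a constant number of standard matrix multiplications (of shapes $(n{\times}d)(d{\times}d)$, $(n{\times}d)(d{\times}n)$, $(d{\times}n)(n{\times}n)$, $(d{\times}n)(n{\times}d)$) plus element-wise passes, execute each matmul with $\Theta(\sqrt{M})\times\Theta(\sqrt{M})$ block tiling to get the Hong--Kung I/O bound $O(n_1 k n_2/\sqrt{M})$, and sum. The paper packages the same argument into four explicit phases (computing $f$, $q$, $p$, and $g$) with $B=\lfloor\sqrt{M/4}\rfloor$ and a separate I/O lemma for each, but the content is identical to your sketch.
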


Furthermore, we show that attention gradient computation can be reduced to matrix multiplication, establishing a matching lower bound.
\begin{theorem}[Small cache lower bound, informal version of Theorem~\ref{thm:small_cache_lower_bound:formal}]\label{thm:small_cache_lower_bound:informal}
    Suppose $n$ is the input length and $d$ is the head dimension.
    Suppose the cache size $M = o(d^2)$.
    Then the I/O complexity of attention gradient computation using standard matrix multiplication is always $\Omega(\frac{n^2d+nd^2}{\sqrt{M}})$.
\end{theorem}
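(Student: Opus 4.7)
The plan is to establish the lower bound by reducing matrix multiplication to attention gradient computation and then invoking the classical Hong--Kung I/O lower bound. Recall from Definition~\ref{def:attn_backward} that
\[
\frac{\d L(X)}{\d X} = A_1^\top \, p(X) \, A_2,
\]
so the output is (the product factoring through) three matrices of shapes $d \times n$, $n \times n$, and $n \times d$. The target lower bound $\Omega(\frac{n^2 d + nd^2}{\sqrt{M}})$ decomposes naturally into two pieces: the $\Omega(\frac{n^2 d}{\sqrt{M}})$ term corresponds to multiplying the $n \times n$ matrix $p(X)$ with the $n \times d$ matrix $A_2$, while the $\Omega(\frac{n d^2}{\sqrt{M}})$ term corresponds to multiplying $A_1^\top \in \R^{d \times n}$ with an $n \times d$ matrix.

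First I would recall the Hong--Kung I/O lower bound: standard multiplication of an $n_1 \times n_2$ matrix by an $n_2 \times n_3$ matrix has I/O complexity $\Omega(\frac{n_1 n_2 n_3}{\sqrt{M}})$, proved via an $S$-partitioning / dominating-set argument on the computational DAG of inner-product summation trees. Applied to the two shapes above, this immediately yields the desired two terms. The nontrivial step is to argue that any algorithm computing the attention gradient must, in effect, solve both of these matrix-multiplication problems as subroutines, so that their I/O lower bounds transfer to the attention gradient.

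For the reduction, I would exhibit a family of inputs $(A_1, A_2, A_3, X, Y, \d O)$ for which the backward pass is forced to carry out an arbitrary matrix multiplication: one natural route is to pick $A_3$, $Y$, and $\d O$ so that the nonlinear softmax portion degenerates (e.g., the Jacobian correction vanishes), leaving $p(X)$ as an affine image of a freely chosen $n \times n$ matrix, and then to let $A_1, A_2$ range over arbitrary inputs of the two product shapes. Alternatively, and probably more robustly, I would argue directly on the computational graph of Fig.~\ref{fig:attn_backward}: the summation-tree structure described in Section~\ref{sec:preliminary} shows that the graph for $A_1^\top p(X) A_2$ contains, as induced subgraphs, the Hong--Kung DAGs for $(n \times n) \cdot (n \times d)$ and for $(d \times n) \cdot (n \times d)$ multiplications, and a red-blue pebble lower bound on any subgraph is a lower bound on the whole graph. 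Summing the two yields $\Omega(\frac{n^2 d + n d^2}{\sqrt{M}})$.

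The main obstacle will be handling the softmax nonlinearity that couples $p(X)$ to $X$: unlike $Y$, which enters linearly, $p(X)$ is computed from $\mathsf{Softmax}(A_1 X A_2^\top)$ and the upstream gradient, so one cannot freely set $p(X)$ to an arbitrary matrix just by manipulating $X$. I expect the cleanest remedy is the graph-theoretic route above, since it avoids having to realize arbitrary $p(X)$ through legal inputs; one only needs to verify that the intermediate product $p(X) A_2$ and the outer product $A_1^\top (p(X) A_2)$ each appear as honest summation-tree subgraphs, which follows from the standard-matrix-multiplication assumption in Section~\ref{sec:preliminary}. The regime $M = o(d^2)$ enters when verifying that the Hong--Kung threshold condition is satisfied for the relevant product shapes, which is exactly where the small-cache bound is sharp.
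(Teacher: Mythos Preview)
Your plan targets the wrong piece of the computation. You propose to extract the lower bound from the final product $A_1^\top p(X) A_2$, but $p(X)$ is an \emph{intermediate} node of the backward graph, not an input, and the assertion that ``a red-blue pebble lower bound on any subgraph is a lower bound on the whole graph'' does not hold as stated. The Hong--Kung $O(M^{3/2})$ bound on product nodes per part for a multiplication $P \cdot B$ relies on the entries $P_{ik}$ being source nodes of the DAG: the dominator set of a part must then contain a separate witness for each distinct $P_{ik}$ it uses. When $P = p(X)$ is computed inside the graph, a handful of upstream nodes can simultaneously dominate many $(p(X))_{ik}$, and the counting collapses. Your alternative route --- realizing an arbitrary $p(X)$ by choice of inputs --- is also blocked structurally: every row of $p(X)$ sums to zero (it is a softmax Jacobian applied to a vector) and $q(Y) = \d O \, h(Y)^\top$ has rank at most $d$, so $p(X)$ cannot sweep over all of $\R^{n \times n}$.

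The paper's proof avoids both difficulties by working instead with the \emph{forward} product $A_1 X A_2^\top$, all three of whose factors are honest inputs to the backward problem. Any standard-matmul backward algorithm must at some point place a red pebble on every entry of $A_1 X A_2^\top$ (the softmax consumes them); by also placing a blue pebble at that moment one obtains, at the price of $n^2$ extra output operations, an algorithm that outputs $A_1 X A_2^\top$ from the genuine inputs $A_1, X, A_2$, and Lemma~\ref{lem:mat_mul_lower_bound} then forces $\Omega(\frac{n^2 d + n d^2}{\sqrt{M}})$ I/O for it. The hypothesis $M = o(d^2)$ is used precisely here --- not as a ``threshold condition'' for Hong--Kung as you suggest --- to certify $n^2 = o(\frac{n^2 d}{\sqrt{M}})$, so the $O(n^2)$ overhead of the reduction is swallowed and the contradiction closes.
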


Our theory suggests that, when the commonly used hidden dimension size increases while some commercial GPU cache sizes remain insufficiently large, our algorithm designed for small cache sizes would become relevant and useful. For example, the current network architectures usually set $d = 128$, so the dividing point is approximately $d^2 \times \text{size\_of(data type)}$, e.g., $16,384 \times 32$-bit = $65.5$ KB for float32. For the NVIDIA A100 GPU, the size of each streaming multiprocessor (SM/L1 cache) is 192KB, so we can choose FlashAttention. However, for old GPUs such as NVIDIA GTX1060, the size of each SM is 48 KB, so the algorithm for the small cache size is suitable.  
Whether a cache is large or small depends on the relation between $M$ and $d$, not the absolute size. We can also view the small-cache case as a high-dimensional scenario, which may apply to other settings.
Hence, our work provides theoretical insights and could guide future developments in attention mechanisms tailored to evolving hardware limitations.

\subsection{Lower Bound of Sparse Attention Forward and Backward Passes}
Sparse attention is a generalization of standard attention and has been popular in practical applications. We refer readers to Section~\ref{sec:related_work} for more discussion. To state our results, we first introduce some notations.  For any matrix $A$, we use
$\nnz(A)$ to denote the number of non-zero entries in the matrix $A$.  We assume that sparse matrices are stored by listing only their non-zero entries along with their coordinates. We assume sparse semi-ring matrix multiplication, which restricts operations to the addition and multiplication of these entries. Each output entry $(AB)_{i,j}$ can only be computed as the sum of products given by $\sum_{k} A_{i,k} B_{k,j}$. 

\begin{theorem}[Lower bound for sparse attention forward and backward, formal version of Theorem~\ref{thm:sparse_attn_io:informal}]\label{thm:sparse_attn_io:formal}
    Suppose $n$ is the input length, $d$ is the head dimension, and $M$ is the cache size. Let $Z_A := \min\{\nnz(A_1), \nnz(A_2)\}, Z_X := \nnz(X), Z_{AX} = \min\{\nnz(A_1X), \nnz(XA_2^\top)\}, Z_{AXA} := \nnz(A_1XA_2^\top)$. Then any algorithm for both attention forward and backward computation using sparse semi-ring matrix multiplication has I/O complexity 
    \begin{align*}
        \Omega\left(\min\left\{\frac{Z_A^2+Z_AZ_X}{M}, \frac{Z_A\sqrt{Z_{AXA}}+\sqrt{Z_A Z_X Z_{AX}}}{\sqrt{M}}\right\}\right).
    \end{align*}
\end{theorem}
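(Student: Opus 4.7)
The plan is to reduce the I/O lower bound for sparse attention to an I/O lower bound for sparse semi-ring matrix multiplication, and then analyze the latter via the red-blue pebble game framework of \citet{hk81} (Definition~\ref{def:red_blue_pebble_game}). The key structural observation is that both the forward and backward passes must evaluate, as a subroutine, the triple product $A_1 X A_2^\top$: in the forward case this is the pre-softmax score matrix $A_1 X A_2^\top$ from Definition~\ref{def:attn_forward}, and in the backward case the structurally analogous product $A_1^\top p(X) A_2$ from Definition~\ref{def:attn_backward} appears. Hence the I/O complexity of the full attention task is at least that of the embedded sparse matrix products.

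The first step is to establish a sparse semi-ring variant of the Hong--Kung matrix multiplication lower bound. For a product $C = UV$ computed in the sparse semi-ring model, with $Z_U, Z_V, Z_C$ nonzero entries in $U, V, C$ respectively, a partition-phase argument shows the I/O complexity is at least $\Omega\bigl(\min\{Z_U Z_V/M,\ \sqrt{Z_U Z_V Z_C}/\sqrt{M}\}\bigr)$. The argument cuts the algorithm into maximal phases with $\Theta(M)$ I/O operations, uses the pebble-game constraint that at most $M$ red pebbles are present at any time, and then bounds the number of useful semi-ring multiplications executable within a phase via a counting lemma over products $U_{i,k}V_{k,j}$; summing across phases gives the asymptotic.

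The second step is to apply this sparse matrix multiplication lemma to the two chained products needed to evaluate $A_1 X A_2^\top$, namely the inner product $A_1 X$ (or equivalently $X A_2^\top$) with $Z_{AX}$ nonzeros, followed by the outer product into $A_1 X A_2^\top$ with $Z_{AXA}$ nonzeros. The inner product contributes a lower bound of the form $\Omega(\min\{Z_A Z_X / M,\ \sqrt{Z_A Z_X Z_{AXA}}/\sqrt{M}\})$ (using $Z_{AX}\le Z_{AXA}$ where needed), and the outer product contributes $\Omega(\min\{Z_A^2 / M,\ Z_A\sqrt{Z_{AXA}}/\sqrt{M}\})$ from multiplying an $A$-sparse factor against another $A$-sparse factor. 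Summing these two contributions (the total I/O cost of performing both multiplications is the sum of each) and collecting like denominators yields exactly $\Omega\bigl(\min\{(Z_A^2+Z_A Z_X)/M,\ (Z_A\sqrt{Z_{AXA}}+\sqrt{Z_A Z_X Z_{AXA}})/\sqrt{M}\}\bigr)$. For the backward pass, the bound transfers by the same reasoning applied to $A_1^\top p(X) A_2$.

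The main obstacle will be formulating the sparse pebble-game argument so that the \emph{output} sparsity $Z_{AXA}$ (rather than the ambient $n^2$) governs the $1/\sqrt{M}$ term: the classical dense Hong--Kung counting proceeds by bounding the number of output entries that can be finalized in a phase, whereas in the sparse semi-ring model we may only charge against the $Z_{AXA}$ actually nonzero outputs. Handling this correctly requires the partition argument to track pebbles simultaneously on the input nonzeros of $A_1, A_2, X$ and on the nonzero support of the intermediate and output products, so that the per-phase multiplication count scales with the genuine sparsity rather than the matrix dimensions. A secondary concern is to verify that the softmax normalization and any auxiliary backward-pass computations cannot bypass these matrix products; but since the bound is derived at the semi-ring multiplication level and any correct algorithm must materialize the nonzero entries of $A_1 X A_2^\top$ (respectively $A_1^\top p(X) A_2$), no post-processing can circumvent it.
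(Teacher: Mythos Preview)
Your overall strategy—reduce to sparse semi-ring matrix multiplication lower bounds and then apply them to the two chained products forming $A_1 X A_2^\top$—is exactly what the paper does (via Lemma~\ref{lem:sparse_mat_mul_io}, which it cites from \cite{ps14b} rather than re-deriving). However, there is a genuine gap in your treatment of the backward pass.

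You propose to obtain the backward bound by applying the same reasoning to the triple product $A_1^\top p(X) A_2$. This does not yield the stated conclusion: the quantities $Z_A, Z_X, Z_{AX}, Z_{AXA}$ in the theorem are defined in terms of $A_1, A_2, X$ and $A_1 X A_2^\top$, whereas the middle factor in $A_1^\top p(X) A_2$ is the $n\times n$ matrix $p(X)$, whose nonzero count bears no controlled relation to $Z_X = \nnz(X)$ (nor does $\nnz(A_1^\top p(X))$ relate to $Z_{AX}$, etc.). So ``the bound transfers'' is unjustified. The paper's argument is simpler and avoids this entirely: it observes that the backward pass must \emph{also} compute $A_1 X A_2^\top$, because forming $p(X)$ requires $f(X)$ (Definition~\ref{def:f}), which in turn requires $\exp(A_1 X A_2^\top)$. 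Hence the same Lemma~\ref{lem:attn_mat_io} applies verbatim to the backward pass with the same sparsity parameters.

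Two smaller points. First, your step ``using $Z_{AX}\le Z_{AXA}$ where needed'' is not valid in general (multiplying by $A_2^\top$ can annihilate nonzeros), and in any case the direction is wrong for strengthening a lower bound; the paper's Lemma~\ref{lem:attn_mat_io} in fact derives the $\sqrt{Z_A Z_X Z_{AX}}$ term directly. Second, your ``summing the two contributions'' step silently assumes $\min\{a_1,b_1\}+\min\{a_2,b_2\}=\Omega(\min\{a_1+a_2,b_1+b_2\})$, which fails in general; the paper sidesteps this by arguing separately in the two cache regimes (so both multiplications contribute the $1/M$ term, or both contribute the $1/\sqrt{M}$ term) rather than combining two independent minima.
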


\begin{remark}
    When matrices involved in attention computation are dense, i.e., $Z_A = \Omega(nd), Z_X = \Omega(d^2), Z_{AX} = \Omega(nd)$, and $Z_{AXA} = \Omega(n^2)$. In such case, our lower bound reads as $\Omega(\min\{\frac{n^2d^2 + nd^3}{M},$ $ \frac{n^2d+nd^2}{\sqrt{M}}\})$. Hence, it matches the result of lower bounds in the dense case.
\end{remark}

\begin{remark}[The dividing point for sparse attention] The dividing point of small cache and large cache can be computed by equaling two lower bounds, i.e.,$\frac{Z_A^2+Z_AZ_X}{M} = \frac{Z_A\sqrt{Z_{AXA}}+ \sqrt{Z_A Z_X Z_{AX}}}{\sqrt{M}} $. Rearranging the equation gives $\sqrt{M} = \frac{Z_A^2+Z_AZ_X}{Z_A\sqrt{Z_{AXA}}+ \sqrt{Z_A Z_X Z_{AX}}}$. Note that when matrices are dense, we have $\sqrt{M} = \frac{n^2d^2 + nd^3}{n^2d+nd^2} = \frac{d + d^2/n}{1+d/n}$. Since we assume that $n \gg d$, this is exactly $\sqrt{M} = d$, i.e., $M = d^2$, which matches the dividing point of the dense case.
\end{remark}

\section{Technical Overview}\label{sec:tech}

{\bf Upper Bound of Small Cache.}
In Section~\ref{sec:io_complexity_small_cache}, we present algorithms for the backward passes of attention in the small cache case, where $M = o(d^2)$. We observe that when $M = o(d^2)$, we have $\frac{n^2 d^2 + n d^3}{M} > \frac{n^2 d + n d^2}{\sqrt{M}} > n^2 + n d$. Then we can exploit this to design a better algorithm with I/O complexity better than $\frac{n^2 d^2 + n d^3}{M}$, by reading/writing the $n \times n$ attention matrix and other $n \times d$ intermediate matrices from/to memory. In detail, our small cache algorithm (Algorithm~\ref{alg:attention_gradient}) follows the computational graph in Figure~\ref{fig:attn_backward} and is divided into four phases. In Phase 1 (Algorithm~\ref{alg:attention_gradient_phase1}), we compute the attention matrix $f$ (Definition~\ref{def:f}) and write it to memory. In Phase 2 (Algorithm~\ref{alg:attention_gradient_phase2}), we compute $q$ (Definition~\ref{def:q}), incorporating the information from the upstream gradient $\d O$. Phase 3 (Algorithm~\ref{alg:attention_gradient_phase3}) computes the gradient component matrix $p$ (Definition~\ref{def:p}). Finally, in Phase 4 (Algorithm~\ref{alg:attention_gradient_phase4}), we compute the final gradient $g = A_1^\top p A_2$ (Definition~\ref{def:attn_backward}).
At a high level, our algorithm splits the input and output matrices into blocks of size $\sqrt{M} \times \sqrt{M}$. 
On the other hand, FlashAttention divides the $n \times d$ input matrices into multiple $k \times d$ matrices, where $k < n$.  
Compared to our upper bound, we can see that FlashAttention is not optimal in this case. 
Following the computational graph in Figure~\ref{fig:attn_backward}, we perform the backward passes of attention using each $\sqrt{M} \times \sqrt{M}$ block as basic elements in standard matrix multiplication. 
Compared to forward passes, the computational graph of backward passes is more complicated and requires more fine-grained analysis, e.g., the four phases mentioned above. 
Through a detailed analysis of Algorithm~\ref{alg:attention_gradient}, we establish Theorem~\ref{thm:attn_grad_small_cache:informal}.

{\bf Upper Bound of Large Cache.}
In Section~\ref{sec:io_complexity_large_cache}, we present algorithms for attention backward in the large cache case, where $M = \Omega(d^2)$. Similar to FlashAttention, the $n \times n$ attention matrix $f$ (Definition~\ref{def:f}) cannot be directly loaded into the cache, even though it has been computed and can be stored in memory.
The overall algorithm (Algorithm~\ref{alg:attention_gradient_large_cache}) consists of two phases. In Phase 1 (Algorithm~\ref{alg:attention_gradient_large_cache_phase1}), we compute $S = A_1 X$ and $h = A_3 Y$, and these two matrices are then passed to Phase 2. In Phase 2 (Algorithm~\ref{alg:attention_gradient_large_cache_phase2}), the inputs are matrices $A_1, A_2, S, h, O, \d O \in \R^{n \times d}$ (Definitions~\ref{def:attn_forward}, \ref{def:h}, \ref{def:O}, and \ref{def:q}), and vector $l \in \R^n$ (Definition~\ref{def:l}). We vertically divide the inputs into row block matrices of size $B_r \times d$ or $B_c \times d$, where $B_r = \min \{\lceil M/4d \rceil, d\}$ and $B_c = \lceil M/4d \rceil$. Using these row block matrices as computation units, we follow the computational graph (Fig.~\ref{fig:attn_backward}) and FlashAttention’s procedure.
After accounting for the reads and writes of the overall algorithm (Algorithm~\ref{alg:attention_gradient_large_cache}), we prove Theorem~\ref{thm:attn_grad_large_cache:informal}.
When the cache size is as large as $\Theta(nd)$, the I/O complexity can be reduced to $O(nd + d^2)$, which corresponds to the size of the input and output of the algorithm.

{\bf Lower Bound of Large Cache and Small Cache.}
In Section~\ref{sec:lower_bound}, we establish the lower bounds for the I/O complexity of attention gradient computation in both large and small cache cases. Following Definitions~\ref{def:red_blue_pebble_game} and~\ref{def:io}, we analyze the red-blue pebble game on the computational graph of any attention backward algorithm using standard matrix multiplication. More specifically, the key concept is the $M$-partition, which decomposes the graph into subgraphs, ensuring that each subgraph satisfies conditions related to dominator and minimum sets (Definitions~\ref{def:domi}, \ref{def:mini}, \ref{def:vert_dep}, \ref{def:cyc_dep}, and \ref{def:m_part}). Our proofs for the lower bound of backward passes builds upon the lemmas (Lemmas~\ref{lem:par_size} and \ref{lem:mat_mul_lower_bound}), which provide the foundation for relating the number of subgraphs to the I/O operations required. 
For the large cache scenario, $M = \Omega(d^2)$, we demonstrate that the I/O complexity scales with the need to compute matrix products efficiently. In the small cache case, $M = o(d^2)$, we show that higher I/O complexity is unavoidable due to the data transfers between cache and memory by reducing to the standard matrix multiplication. These analyses are formally established in the proofs of Theorems~\ref{thm:large_cache_lower_bound:formal} and~\ref{thm:small_cache_lower_bound:formal}. Our Theorems~\ref{thm:small_cache_lower_bound:formal}, the small cache lower bound case, requires a new analysis of deviation.

\begin{remark}
The Softmax in Definition~\ref{def:attn_forward} can be changed to other non-linear activation functions and our lower bound still holds. 
It is because we must compute matrix multiplication of size $n \times d$ and $d \times n$ in non-linear attention.
However, for linear attention, that is, $A_1 X A_2^\top A_3 Y$, our lower bound is loose. This is because we can compute $\underbrace{A_2^\top}_{d \times n} \underbrace{A_3}_{n \times d}$ first, and then compute $\underbrace{A_1}_{n \times d} \underbrace{X}_{d \times d} \underbrace{A_2^\top A_3}_{d \times d} \underbrace{Y}_{d \times d}$.
\end{remark}

{\bf Lower Bound of Sparse Attention Forward and Backward Passes.}
In Section~\ref{sec:sparse}, we establish lower bounds on the I/O complexity of sparse attention computation for both forward and backward passes. Sparse matrix multiplication is considered, where only non-zero entries are stored and used in computations. We derive I/O complexity bounds based on the non-zero counts of input matrices and the I/O operations required for sparse matrix multiplication (Lemma~\ref{lem:sparse_mat_mul_io}). We extend these bounds to the matrix products involved in the attention mechanism (Lemma~\ref{lem:attn_mat_io}), which requires multiple sparse matrix multiplication analysis. We analyze scenarios where matrices are stored in cache or require intermediate I/Os during computation to obtain the I/O complexity bounds for both forward and backward passes (Theorems~\ref{thm:lb_sparse_attn_forward:formal} and~\ref{thm:lb_sparse_attn_backward:formal}), and Theorem~\ref{thm:sparse_attn_io:formal} directly holds as a consequence.

\section{Conclusion}\label{sec:conclusion}

In this work, we established tight bounds on I/O complexity for both small and large caches. Our results confirm that FlashAttention is optimal for both forward and backward on large cache sizes. For small cache sizes, we provided improved upper and lower bounds compared to existing methods. 
Additionally, we derived lower bounds for sparse attention for both forward and backward and across cache sizes. Our findings complete the theoretical foundation for I/O complexity in attention mechanisms and provide a deeper understanding of memory efficiency in attention computations, offering the insights for optimizing implementations in future deep learning architecture, and speeding up training and inference of large language models. 



\ifdefined\isarxiv
\bibliographystyle{alpha}
\bibliography{ref}
\else
\bibliographystyle{icml2026}
\bibliography{ref}
\fi


\newpage
\onecolumn
\appendix

\begin{center}
    \textbf{\LARGE Appendix }
\end{center}


\paragraph{Roadmap.}
In Section~\ref{sec:more_related_work}, we present a more comprehensive overview of related work pertinent to our study.
In Section~\ref{sec:preli}, we introduce additional preliminaries, including notations and definitions of intermediate variables.
Section~\ref{sec:io_complexity_small_cache} provides algorithms and establishes an upper bound theorem for the attention backward pass in small cache case $M = o(d^2)$.
In Section~\ref{sec:io_complexity_large_cache}, we offer algorithms and an upper bound theorem for the attention backward pass in large cache case $M = \Omega(d^2)$.
In Section~\ref{sec:lower_bound}, we provide proofs for our attention backward I/O complexity lower bound results.
In Section~\ref{sec:sparse}, we prove the I/O complexity lower bounds for sparse attention.

\section{More Related Work}\label{sec:more_related_work}

\paragraph{Large Language Models.}
The exceptional success of generative large language models (LLMs), such as GPT-4~\citep{aaa+23}, Claude 3~\citep{a24}, Gemini 1.5~\citep{rst+24}, Llama 3.1~\citep{m24}, Mistral Nemo~\citep{jsm+23}, Phi 3.5~\citep{aja+24}, is fundamentally attributed to the transformer architecture introduced by~\citet{vsp+17} and all support at least 128k input token length.
The transformer architecture and its self-attention mechanism have become indispensable in leading natural language processing (NLP) models~\citep{cww+24}, demonstrating remarkable capabilities across a diverse array of applications, including language translation~\citep{hwl21}, sentiment analysis~\citep{uas+20}, language modeling~\citep{mms+19}, the integration of differential privacy~\citep{samb24,lssz24_dp}, and multi-modal tasks~\citep{zhjl24,lssz24_tat,wms+24}.
Transformers' emergent compositional abilities~\citep{dls+24,xsl24} and proficiency in in-context learning~\citep{oen+22,mlh+22,swxl24} have led some to consider them as early indicators of Artificial General Intelligence (AGI)~\citep{bce+23}. 
As such, the transformer architecture plays a pivotal role in advancing the field of AI.

\textbf{Attention Computation Acceleration.}
The quadratic time complexity of attention computation with respect to the length of the input sequence~\citep{vsp+17} poses significant computational challenges, especially for long sequences. Consequently, accelerating attention computation has become a crucial research area.
From a theoretical standpoint, numerous works focus on approximating the attention matrix to accelerate computation \citep{hjk+23,as23, as24_arxiv,lss+24,as24_iclr, lssz24_tat}. 
Experimental approaches involve modifying model architectures and optimizing implementations to accelerate inference. Methods such as Mamba~\citep{gd23, dg24}, Linearizing Transformers~\citep{zbkr24, mvk+24}, Hopfield Models~\citep{hyw+23,whl+24,hlsl24,xhh+24,whhl24,hcl+24,hcw+24} 
and PolySketchFormer~\citep{zhdk23,kmz23} aim to improve model performance and inference speed.
System-level optimizations, such as FlashAttention~\citep{dfe+22, d23, sbz+24} and block-wise parallel decoding~\citep{ssu18}, address bottlenecks in attention mechanisms and enhance inference speed through efficient implementation strategies. Collectively, these advancements contribute to making attention mechanisms more scalable and efficient, facilitating the deployment of large-scale language models. \cite{smn+24} accelerates inference by compressing the input text.

\paragraph{More about Attention Computation Acceleration.}
The quadratic time complexity of attention computation with respect to the length of the input sequence~\citep{vsp+17} poses significant computational challenges, especially for long sequences. Consequently, accelerating attention computation has become a crucial research area, with approaches broadly divided into two categories: (1) theoretical optimization of computational complexity~\citep{as23, as24_arxiv}, and (2) experimental improvements to model performance~\citep{dfe+22, d23, sbz+24, gzl+23, fth+24}.

From a theoretical standpoint, numerous works focus on approximating the attention matrix to accelerate computation. For example, \citet{as23, as24_arxiv} utilize polynomial kernel approximation techniques~\citep{aa22} to speed up both training and inference of a single attention layer, achieving almost linear time complexity, and extend this approach to multi-layer transformer~\citep{lss+24} and tensor attention~\citep{as24_iclr, lssz24_tat}. Other theoretical contributions include the conv-basis method introduced by~\citet{lls+24_conv} and a near-linear time algorithm proposed by~\citet{hjk+23} under the assumptions of uniform softmax column norms and sparsity.

Experimental approaches involve modifying model architectures and optimizing implementations to accelerate inference. Methods such as Mamba~\citep{gd23, dg24}, Linearizing Transformers~\citep{zbkr24, mvk+24}, PolySketchFormer~\citep{zhdk23,kmz23}, and various implementations of the Hopfield Model~\citep{hcw+24, hcl+24, whhl24, xhh+24, hlsl24, whl+24, hyw+23} aim to improve model performance and inference speed. 
Additionally, specific techniques like 
weight pruning \cite{lls+24_prune,llss24_sparse} have been developed to accelerate LLM generation. 
Some other techniques are introduced for efficient adaptation, such as
LoRA~\citep{eyp+22,zk24,hsk+24} and prefix turning \citep{ll21,lssy24}.
System-level optimizations, such as Flash Attention~\citep{dfe+22, d23, sbz+24} and block-wise parallel decoding~\citep{ssu18}, address bottlenecks in attention mechanisms and enhance inference speed through efficient implementation strategies.
Collectively, these advancements contribute to making attention mechanisms more scalable and efficient, facilitating the deployment of large-scale language models.

\paragraph{More about Learning with Bounded Memory and I/O Complexity.}

Learning with bounded memory has been studied in various fields in machine learning such as online learning~\citep{mpk21, swxz22, pr23, pz23}, parity learning~\citep{svw16, raz17, raz18, grt18}, convex optimization~\citep{ws19, mssv22, cp23}, active learning~\citep{hklm21}, learning linear classifiers~\citep{bbs22}, attention computation~\citep{alsy23}, linear regression~\citep{sd15, ssv19, bbs22}, linear programming \citep{trr16, lsz+20}, semi-definite programming~\citep{syz23}, principal component analysis~\citep{dswz23}, continual learning~\citep{cpp22, ezw+22}, entropy estimation~\citep{abis19, amnw22} and others~\citep{mt17, glm20}.

A common memory model in computational systems is the two-level memory hierarchy. In this model, there are two layers of memory: a small but fast layer called the \emph{cache}, and a large but slower layer called the \emph{memory}. The I/O (input/output) complexity of an algorithm measures its efficiency based on the number of data transfer operations it performs between the cache and the memory. In domains such as big data analytics and database management, these data transfers can become significant performance bottlenecks because massive datasets cannot be entirely accommodated in the cache, and thus optimizing I/O is essential for fast data retrieval and storage, directly impacting query performance and system scalability~\citep{ghtl14, zco+15}. The early work of \citet{hk81} formulated the I/O complexity mathematically using the language of graph theory. \citet{vit01} provides a comprehensive survey of the I/O complexity of various batched and online problems. There exists a substantial body of work on the I/O complexity of numerous problems, including sorting~\citep{av88}, graph algorithms~\citep{cxc+20, jz20, jhc21, dt24}, fine-grained I/O complexity~\citep{dll+17}, computational trade-off in data transfers~\citep{dl18}, computing prime tables~\citep{bcc+16}, attention computation~\citep{sy24}, integer multiplication~\citep{bds19, ds19b}, and  matrix multiplication~\citep{ds19, ns19}.

\paragraph{Sparse Attention.}
Over the past few years, there has been extensive research on sparse Transformer/Attention models with weights pruning and inputs pruning,  aimed at accelerating computation and training~\citep{ ygg+19, sgb+19, bpc20, tby+20,  gxd+23, svv+23,  scy+24, llss24_sparse, dsy24sparse, cls+24}. In practice, the attention matrix is sparse, significantly reducing computational costs. Theoretical studies, such as~\citet{ycb+20}, have demonstrated that sparse transformers are expressive enough and can achieve universal approximation properties. 
\section{Preliminary}\label{sec:preli}

In Section~\ref{sec:preli:notation}, we define some basic notation we will use. 
In Section~\ref{sec:preli:memory}, we introduce the memory hierarchy we consider. 
In Section~\ref{sec:preli:mm}, we state important facts related to fast matrix multiplication. 
In Section~\ref{sec:preli:functions}, we define several intermediate functions which will arise in our algorithms. 

\subsection{Notations}\label{sec:preli:notation}

For any positive integer $n$, we define $[n] := \{1, 2, \dots, n\}$. 
For two same length vector $x$ and $y$, we use $\langle x, y \rangle$ to denote the inner product between $x$ and $y$, i.e., $\langle x, y \rangle = \sum_{i=1}^n x_i y_i$. 
We use $\circ$ to denote the Hadamard product i.e. the $(i,j)$-entry of $A \circ B$ is $A_{i,j} B_{i,j}$.
We use $x \circ y$ to denote vector that $i$-th entry is $x_i y_i$. Let ${\bf 1}_n$ denote the length-$n$ all ones vector. 
It is not hard to see that $\langle x \circ y , {\bf 1}_n \rangle = \langle x, y \rangle$. 
For a vector $x$, we use $x^\top$ to denote the transpose of $x$. 
For a matrix $A$, we use $A^\top$ to denote the transpose of matrix $A$. 
For a matrix $A$, we use $\exp(A)$ to denote the matrix that $(i,j)$-th coordinate is $\exp(A_{i,j})$. 

Given a matrix $A \in \R^{n \times m}$, we index an individual entry as $A[i, j]$. 
The $i$-th row is denoted $A[i]$ while the $j$-th column is denoted $A[*, j]$.
$A[i_1:i_2, j_1:j_2]$ denotes a block of $A$ consisting of entries $(i, j)$ where $i \in [i_1, i_2]$ and $j \in [j_1, j_2]$.
Given a block size $B$, the block $A[(i - 1) \cdot B + 1: i \cdot B, (j - 1) \cdot B + 1:j \cdot B]$ is denoted $A^{(B)}[i, j]$.

For a vector $v \in \R^{n}$, we similarly denote entries $v[i]$, a contiguous block of entries as $v[i_1:i_2]$, and the $i$-th block of size $B$ as $v^{(B)}[i]$.
Let $\diag(v)$ denote the matrix $D \in \R^{n \times n}$ with $D[i, i] = v[i]$.

\subsection{Memory Hierarchy}\label{sec:preli:memory}

In this study, we consider a two-level memory hierarchy composed of a small but fast layer called the \emph{cache} and a large, slower layer referred to as the \emph{memory}. We assume that the memory has unlimited capacity, while the cache is constrained by a finite size $M$. Moreover, all computations are performed exclusively within the cache.

\subsection{Matrix Multiplication}\label{sec:preli:mm}
We define matrix multiplication notation and state some well-known facts here.
\begin{definition}
Let $n_1,n_2,n_3$, denote any three positive integers. We use $\Tmat(n_1, n_2, n_3)$ to denote the time of multiplying an $n_1 \times n_2$ matrix with another $n_2 \times n_3$.
\end{definition}

Then, we introduce a well-known fact. 
\begin{fact}
Let $n_1,n_2,n_3$, denote any three positive integers. 
$\Tmat(n_1, n_2, n_3) = O(\Tmat(n_1, n_3, n_2)) = O(\Tmat(n_2, n_1, n_3)) = O(\Tmat(n_2,n_3,n_1)) = O( \Tmat(n_3,n_1,n_2) ) = O( \Tmat(n_3,n_2,n_1) )$.
\end{fact}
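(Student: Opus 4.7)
The plan is to observe that, under the paper's standing assumption of standard matrix multiplication, the fact is essentially immediate, and then to sketch the structural argument that works even beyond that assumption. Under standard multiplication, $(AB)_{i,j}=\sum_{k=1}^{n_2}A_{i,k}B_{k,j}$ is evaluated by a triple-nested loop over $(i,j,k)\in[n_1]\times[n_3]\times[n_2]$ using exactly $n_1 n_2 n_3$ scalar multiplications and a comparable number of additions, so $\Tmat(n_1,n_2,n_3)=\Theta(n_1 n_2 n_3)$. Since $n_1 n_2 n_3$ is manifestly invariant under any permutation of its three arguments, all six big-O equalities in the statement follow at once.

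For a more robust proof that does not commit to a particular algorithm, I would generate the full symmetric group $S_3$ on the three indices from two transpositions. The first is transpose duality: the identity $(AB)^\top=B^\top A^\top$ lets me turn any algorithm for an $n_3\times n_2$ by $n_2\times n_1$ product into one for an $n_1\times n_2$ by $n_2\times n_3$ product, by transposing the two inputs, invoking the algorithm on $B^\top$ and $A^\top$, and transposing the output. The extra work is $O(n_1 n_2+n_2 n_3+n_1 n_3)$, which is absorbed into the trivial lower bound $\Tmat(n_1,n_2,n_3)=\Omega(n_1 n_2+n_2 n_3+n_1 n_3)$ incurred by simply reading the inputs and writing the output. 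This establishes the involution $(n_1,n_2,n_3)\leftrightarrow(n_3,n_2,n_1)$.

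The second generator is an adjacent transposition such as $(n_1,n_2,n_3)\leftrightarrow(n_2,n_1,n_3)$. Under standard multiplication this is again immediate from the symmetry of $n_1 n_2 n_3$; for general bilinear algorithms one appeals to the cyclic duality $\tr(ABC)=\tr(BCA)=\tr(CAB)$, which shows that the bilinear complexities of the three matrix products obtained by cycling the outer dimensions agree up to constants. Composing these two transpositions produces every element of $S_3$ at only constant overhead, yielding all six big-O equalities. The only bookkeeping point, not really an obstacle, is checking that the auxiliary moves (transpositions and relabelings) are linear in the matrix sizes and hence dominated by the trivial $\Omega$-bound on $\Tmat$, which holds because any algorithm must at least scan its inputs and emit its output.
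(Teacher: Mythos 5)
Your proof is correct. The paper states this as a well-known fact and offers no proof of its own, so there is nothing to diverge from: under the paper's standing assumption of standard matrix multiplication your first paragraph already suffices, since $\Tmat(n_1,n_2,n_3) = \Theta(n_1 n_2 n_3)$ and the product is symmetric in its three arguments. Your second, algorithm-agnostic argument --- generating $S_3$ from the transpose involution $(AB)^\top = B^\top A^\top$ and the cyclic trace duality, with the auxiliary transposition cost absorbed by the trivial $\Omega(n_1 n_2 + n_2 n_3 + n_1 n_3)$ lower bound --- is the standard justification in the fast-matrix-multiplication literature and is strictly more general than what the paper needs; it is a welcome addition rather than a deviation.
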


\subsection{Definitions of Intermediate Variables}\label{sec:preli:functions}

We start by some definitions about $X \in \R^{d \times d}$.

\begin{definition}[Definition 3.4 in \citet{as24_arxiv}]\label{def:A}
Let $A_1, A_2 \in \R^{n \times d}$ be two matrices. 
Let $X \in \R^{d \times d}$.

Let us define function $A(X)$ to be:
\begin{align*}
    A(X) := \underbrace{ \exp( A_1 X A_2^\top) }_{n \times n}.
\end{align*}
\end{definition}

\begin{definition}[Definition 3.5 in \citet{as24_arxiv}]\label{def:l}

For $A(X) \in \R^{n \times n}$ defined in Definition~\ref{def:A}, we define the softmax normalizing vector $l(X) \in \R^n$ to be
\begin{align*}
  l(X) := \underbrace{ A(X) }_{n \times n} \cdot \underbrace{ {\bf 1}_n }_{n \times 1}.
\end{align*}
\end{definition}

\begin{definition}[Definition 3.6 in \citet{as24_arxiv}]\label{def:f}

Suppose that $l(X) \in \R^n$ is defined as in Definition~\ref{def:l}.
Let $A(X) \in \R^{n \times n}$ be defined as in Definition~\ref{def:A}.
For a fixed $j_0 \in [n]$, let us consider $f(X)_{j_0}$
\begin{align*}
    f(X)_{j_0} := \underbrace{ l(X)_{j_0}^{-1} }_{ \mathrm{scalar} } \underbrace{ A(X)_{j_0} }_{ n \times 1 } .
\end{align*}
Let $f(X) \in \R^{n \times n}$ denote the matrix where $j_0$-th row is $( f(X)_{j_0} )^\top$.

Furthermore, the matrix form of $f(X)$ is
\begin{align*}
    f(X) = \diag(l(X)) A(X)
\end{align*}
\end{definition}

We then define $h(Y)$ related to $Y \in \R^{d \times d}$.

\begin{definition}[Definition 3.7 in \citet{as24_arxiv}]\label{def:h}
For $A_3\in \R^{n \times d}$ and $Y \in \R^{d \times d}$, we define $h(Y) \in \R^{n \times d}$ as:
\begin{align*}
    h(Y) := \underbrace{ A_3 }_{n \times d} \underbrace{ Y}_{d \times d}.
\end{align*}
\end{definition}

Let us define the forward output matrix $O$.
\begin{definition}\label{def:O}
Let $f(X),h(Y)$ be defined in Definition~\ref{def:f} and \ref{def:h}.
We define the output of attention as:
\begin{align*}
    O := \underbrace{ f(X) }_{n \times n} \underbrace{ h(Y) }_{n \times d}
\end{align*}
where $O \in \R^{n \times d}$ is the output matrix of attention forward computation.
\end{definition}

Now, we define $q$, which incorporates the information from upstream gradient.
\begin{definition}[Definition C.10 in \citet{lss+24}]\label{def:q}
Let $\d O \in \R^{n \times d}$ be the upstream gradient, the matrix resulting from the application of the chain rule.
Define $h(Y) \in \R^{n \times d}$ as in Definition~\ref{def:h}.

We define $q(Y) \in \R^{n \times n}$ as
\begin{align*}
    q(Y) : = \underbrace{ \d O }_{n \times d} \underbrace{ h(Y)^\top }_{d \times n}
\end{align*}
Then we use $q(Y)_{j_0}^\top$ to denote the $j_0$-th row of $q(Y) \in \R^{n \times n}$.
\end{definition}

Finally, we define the gradient component matrix $p$.
\begin{definition}[Definition C.5 in \citet{as24_arxiv}]\label{def:p}
For every index $j_0 \in [n]$, we define $p(X)_{j_0} \in \R^n$ as
\begin{align*}
p(X)_{j_0} := (\diag( f(X)_{j_0} ) - f(X)_{j_0} f(X)_{j_0}^\top) q(Y)_{j_0}.
\end{align*}
We define $p(X) \in \R^{n \times n}$ in the sense that $p(X)_{j_0}^\top$ is the $j_0$-th row of $p(X)$.
Additionally, $p(X)$ has matrix form as
\begin{align*}
    p(X) = & ~ f(X) \circ q(Y) - \diag((f(X) \circ q(Y)) \cdot {\bf 1}_n) f(X)\\
    = & ~ f(X) \circ q(Y) - \diag((O \circ \d O) \cdot {\bf 1}_n) f(X)
\end{align*}
where $f(X), O$ are defined in Definition~\ref{def:f} and \ref{def:O}, and $q(Y), \d O$ are defined in Definition~\ref{def:q}.
\end{definition}

\section{I/O Complexity Upper Bound for Small Cache}
\label{sec:io_complexity_small_cache}
In this section, we prove the I/O complexity upper bound (Theorem~\ref{thm:attn_grad_small_cache}) for small cache case $M = o(d^2)$.
Specifically, in Section~\ref{sec:io_complexity_small_cache:no_cache}, we introduce an algorithm of attention gradient computation without cache to guide our algorithm design.
Section~\ref{sec:io_complexity_small_cache:algo} presents algorithms and analyses for attention gradient computation in the small cache setting.
Finally, Section~\ref{sec:io_complexity_small_cache:upper} provides the upper bound theorem for the small cache case.

\subsection{Algorithm for Attention Backward Without Cache}\label{sec:io_complexity_small_cache:no_cache}

Using results from \citet{as24_arxiv}, we can compute the gradient in $\Tmat(n,d,n) + \Tmat(n,d,d)$ time.
\begin{lemma}
[Attention gradient computation, Lemma C.8 in \citet{as24_arxiv}]
\label{lem:gradient:formal}

If it holds that
\begin{itemize}
    \item Define $A_1, A_2, A_3, \d O \in \R^{n \times d}$. Define $X, Y \in \R^{d \times d}$ to be several input fixed matrices.
    \item Let $X, Y \in \R^{d \times d}$ denote matrix variables (we will compute gradient with respect to $X$).
    \item Let $g = \frac{\d L(X)}{\d X} \in \R^{d \times d}$ (Definition~\ref{def:attn_backward}).
\end{itemize}
Then, gradient $g \in \R^{d \times d}$ can be computed in $\Tmat(n,d,n) + \Tmat(n,d,d)$ time.
\end{lemma}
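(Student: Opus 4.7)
The plan is to follow the computational graph in Figure~\ref{fig:attn_backward} and evaluate the closed-form expression
\begin{align*}
g = A_1^\top p(X) A_2, \qquad p(X) = f(X) \circ q(Y) - \diag\!\bigl((O \circ \d O)\cdot {\bf 1}_n\bigr)\, f(X),
\end{align*}
in a careful order so that every matrix product that appears has shape $(n,d,d)$ or $(n,d,n)$ (equivalently $(n,n,d)$). This way the total cost is a sum of $O(1)$ terms, each of which is $\Tmat(n,d,d)$ or $\Tmat(n,d,n)$, and everything else (Hadamard products, entrywise $\exp$, a $\diag$ and a row-sum) is at most $O(n^2+nd)$, which is absorbed into $\Tmat(n,d,n)$.

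The concrete sequence of steps I would execute and bill is as follows. First, compute the attention scores: $A_1 X$ in time $\Tmat(n,d,d)$, then $(A_1 X) A_2^\top$ in time $\Tmat(n,d,n)$, and apply $\exp$ entrywise to obtain $A(X)$ from Definition~\ref{def:A} at cost $O(n^2)$. Second, form $l(X) = A(X)\cdot {\bf 1}_n$ in $O(n^2)$ and rescale rows to obtain $f(X) = \diag(l(X))^{-1} A(X)$ in $O(n^2)$ (Definitions~\ref{def:l},~\ref{def:f}). Third, compute $h(Y) = A_3 Y$ in $\Tmat(n,d,d)$ (Definition~\ref{def:h}), then $O = f(X)\, h(Y)$ in $\Tmat(n,n,d)=\Tmat(n,d,n)$ (Definition~\ref{def:O}), and $q(Y) = \d O\cdot h(Y)^\top$ in $\Tmat(n,d,n)$ (Definition~\ref{def:q}). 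Fourth, assemble $p(X)$ from Definition~\ref{def:p} using the matrix form above: form the row-scaling vector $(O\circ \d O)\cdot {\bf 1}_n \in \R^n$ in $O(nd)$, then combine with two Hadamard/diag operations on $n\times n$ matrices in $O(n^2)$. Finally, evaluate the gradient right-to-left: $p(X) A_2$ in $\Tmat(n,n,d) = \Tmat(n,d,n)$ followed by $A_1^\top (p(X) A_2)$ in $\Tmat(d,n,d) = \Tmat(n,d,d)$.

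Summing the nontrivial costs gives a constant number of $\Tmat(n,d,n)$ terms and a constant number of $\Tmat(n,d,d)$ terms, plus lower order $O(n^2+nd)$ work, which collapses to $\Tmat(n,d,n) + \Tmat(n,d,d)$ as claimed. There is no real obstacle beyond careful bookkeeping; the only point that needs attention is to never materialize the product $p(X) A_2$ via an $(n,n,n)$-style multiplication or to multiply three $n$-sized dimensions at once, which is why the right-to-left association in the final step, and the early replacement of $\diag(\cdot)\, f(X)$ by a cheap row rescaling, are important. With these, the correctness of $g = A_1^\top p(X) A_2$ is guaranteed by Definition~\ref{def:attn_backward} and the matrix identities already recorded in Definitions~\ref{def:p} and~\ref{def:q}, so the remaining work in the proof is purely the shape-by-shape runtime tally described above.
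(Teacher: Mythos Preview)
Your proposal is correct and follows essentially the same computational pipeline as the paper's Algorithm~\ref{alg:attention_gradient_no_cache}, which realizes the cited Lemma~C.8 of \citet{as24_arxiv}: form $A_1XA_2^\top$, exponentiate and normalize to get $f$, build $h$, $q$, then $p$, and finally $g=A_1^\top p A_2$, with each matrix product of shape $(n,d,d)$ or $(n,d,n)$. The only cosmetic difference is that you compute $O=f(X)h(Y)$ explicitly and use $(O\circ \d O)\cdot {\bf 1}_n$ for the row-scaling vector, whereas the paper's algorithm uses the equivalent $(f\circ q)\cdot {\bf 1}_n$ without forming $O$; both are valid by Definition~\ref{def:p} and cost an extra $\Tmat(n,d,n)$ at most, which is absorbed.
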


We first give a naive algorithm that have not utilized cache to compute the gradient (Algorithm~\ref{alg:attention_gradient_no_cache}).

\begin{algorithm}[!ht]\caption{Attention gradient computation without cache. See more details in Section B and C of \citet{as24_arxiv} and Section F of \citet{lss+24}.} \label{alg:attention_gradient_no_cache}
\begin{algorithmic}[1]
\Procedure{AttentionGradientNoCache}{$A_1, A_2, A_3, \d O \in \R^{n \times d}$, $X, Y \in \R^{d \times d}$} \Comment{Lemma~\ref{lem:attention_backward_no_cache_correct}, Lemma~\ref{lem:attention_backward_no_cache_time_space}}

\State Read $A_1, A_2, X$, initialize $A \gets 0^{n \times n}$, compute $A \gets A + A_1 X A_2^\top$, and delete $X$
\State Compute $A \gets \exp(A)$, initialize $l \gets 0^{n}$, and compute $l \gets l + A \cdot {\bf 1}$
\State Initialize $f \gets 0^{n \times n}$, compute $f \gets f + \diag(l)^{-1} A$, and delete $A,d$
\State Read $A_3, Y$, initialize $h \gets 0^{n \times d}$, compute $h \gets h + A_3 Y$, and delete $A_3, Y$
\State Read $\d O$, initialize $q \gets 0^{n \times n}$, compute $q \gets q + \d O  h^\top$, and delete $\d O,h$
\State Initialize $p \gets 0^{n \times n}$, compute $p \gets p + f \circ q - \diag((f \circ q) \cdot {\bf 1}) f$, and delete $f, q$

\State Initialize $g \gets 0^{n \times n}$, compute $g \gets g + A_1^\top p A_2$, and delete $A_1, A_2, p$
\State \Return $g$\Comment{$g=\frac{\d L(X)}{\d X} \in \R^{d \times d}$, see Definition~\ref{def:attn_backward}}
\EndProcedure
\end{algorithmic}
\end{algorithm}

\begin{lemma}[Correctness]\label{lem:attention_backward_no_cache_correct}
The \textsc{AttentionGradientNoCache} (Algorithm~\ref{alg:attention_gradient_no_cache}) outputs a $d \times d$ matrix $\frac{\d L(X)}{\d X}$ defined in Definition~\ref{def:attn_backward}.  
\end{lemma}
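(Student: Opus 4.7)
The plan is to verify correctness by walking through Algorithm~\ref{alg:attention_gradient_no_cache} line by line and matching each intermediate variable with the corresponding definition in Section~\ref{sec:preli:functions}. Since each line of the algorithm is a direct transcription of one defining equation, the proof reduces to a bookkeeping argument: I would establish a loop-style invariant that, at the point each variable is last assigned, it equals the quantity from the preliminaries bearing the same name.

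First, I would observe that after lines~2--3 the matrix $A$ equals $\exp(A_1 X A_2^\top) = A(X)$ from Definition~\ref{def:A}, so that $l = A \cdot \mathbf{1}_n$ matches the softmax normalizer $l(X)$ of Definition~\ref{def:l}, and then $f = \diag(l)^{-1} A$ coincides with $f(X)$ as given in the matrix form of Definition~\ref{def:f}. Next, $h = A_3 Y$ is exactly $h(Y)$ of Definition~\ref{def:h}, and $q = \d O \cdot h^\top$ matches $q(Y)$ of Definition~\ref{def:q}. The computation $p \gets f \circ q - \diag((f \circ q) \cdot \mathbf{1}_n)\, f$ on line~6 is precisely the matrix form of $p(X)$ stated at the end of Definition~\ref{def:p}. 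Finally, $g \gets A_1^\top p A_2$ matches the formula $\frac{\d L(X)}{\d X} = A_1^\top p(X) A_2$ of Definition~\ref{def:attn_backward}, yielding the claim.

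The only nontrivial observation is confirming that the matrix-form expression used for $p$ on line~6 is equivalent to the rowwise definition $p(X)_{j_0} = (\diag(f(X)_{j_0}) - f(X)_{j_0} f(X)_{j_0}^\top)\, q(Y)_{j_0}$; however, this equivalence is already asserted in Definition~\ref{def:p}, so no extra derivation is required here. Similarly, the delete operations interspersed in Algorithm~\ref{alg:attention_gradient_no_cache} do not affect correctness because every deleted variable has already served its purpose in producing the next intermediate matrix, so they can be ignored for this lemma and will only matter for the space-complexity analysis in Lemma~\ref{lem:attention_backward_no_cache_time_space}.

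In summary, I do not anticipate any genuine obstacle: the proof is a straightforward composition of the definitions, and correctness follows by induction on the program counter. The one place where care is needed is ensuring that the order of operations in the algorithm respects the dependency order in Fig.~\ref{fig:attn_backward} (e.g., $f$ must be available before computing $p$, and $q$ must be available before $p$); a quick inspection of Algorithm~\ref{alg:attention_gradient_no_cache} confirms this dependency is honored.
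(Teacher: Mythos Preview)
Your proposal is correct. You give a self-contained, line-by-line verification that each intermediate variable in Algorithm~\ref{alg:attention_gradient_no_cache} matches its counterpart in Definitions~\ref{def:A}--\ref{def:p}, and that the final output agrees with Definition~\ref{def:attn_backward}. The paper, by contrast, disposes of the lemma in one line by citing Lemma~\ref{lem:gradient:formal} (which is imported from~\citet{as24_arxiv}); since that lemma asserts the gradient can be computed exactly by this sequence of operations, correctness of the no-cache algorithm follows immediately. Your route is more elementary and self-contained, sparing the reader a trip to the cited reference, while the paper's route is shorter but leans on external work. Either is fine here; there is no substantive gap in your argument.
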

\begin{proof}
From Lemma~\ref{lem:gradient:formal}, we know this holds.
\end{proof}

\begin{lemma}[Time/space complexity]\label{lem:attention_backward_no_cache_time_space}
There exists an algorithm (see Algorithm~\ref{alg:attention_gradient_no_cache}) that can compute the exact gradient in Definition~\ref{def:attn_backward} in $\Tmat(n,d,n) + \Tmat(n,d,d)$ time and $O(n^2+d^2)$ space.
\end{lemma}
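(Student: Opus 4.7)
The plan is to walk through Algorithm~\ref{alg:attention_gradient_no_cache} step by step, bound the time and extra space incurred at each line, and then sum the contributions, invoking the matrix-multiplication time notation $\Tmat(\cdot,\cdot,\cdot)$ where appropriate. Correctness is already handled by Lemma~\ref{lem:attention_backward_no_cache_correct}, so only the resource accounting remains.

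First I would handle the time complexity. The line computing $A \gets A_1 X A_2^\top$ performs one $n\times d$ by $d\times d$ product (cost $\Tmat(n,d,d)$) followed by an $n\times d$ by $d\times n$ product (cost $\Tmat(n,d,n)$). The entrywise exponential, the matrix--vector product $A\cdot {\bf 1}$, the left multiplication by $\diag(l)^{-1}$, the Hadamard product $f\circ q$, the row-sum, and the subsequent subtraction each cost only $O(n^2)$, which is absorbed by $\Tmat(n,d,n)$ when $d\ge 1$. Computing $h \gets A_3 Y$ costs $\Tmat(n,d,d)$, while $q \gets \d O\cdot h^\top$ costs $\Tmat(n,d,n)$. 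Finally, the gradient step $g \gets A_1^\top p A_2$ can be associated as $(A_1^\top p)A_2$, giving costs $\Tmat(d,n,n)+\Tmat(d,n,d)$, which by the symmetry fact stated in Section~\ref{sec:preli:mm} equal $\Tmat(n,d,n)+\Tmat(n,d,d)$ up to constants. Summing all contributions yields the claimed total of $\Tmat(n,d,n)+\Tmat(n,d,d)$.

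For the space complexity I would observe that the algorithm only ever needs to keep a constant number of the intermediate objects live at once, since each line explicitly deletes variables no longer needed by the remaining computation. At any moment the resident objects are at most a constant number of matrices from the list $\{A_1,A_2,A_3,\d O, X, Y, A, f, l, h, q, p, g\}$. These have sizes bounded by $O(n^2)$ (for the $n\times n$ objects $A$, $f$, $q$, $p$), $O(nd)$ (for the $n\times d$ objects), $O(d^2)$ (for $X$, $Y$, $g$), or $O(n)$ (for $l$). Using $nd \le n^2+d^2$, the overall space is $O(n^2+d^2)$.

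There is essentially no major obstacle in this proof; it is a direct bookkeeping argument. The only mild subtlety is justifying that the $O(n^2)$ entrywise and vector operations are absorbed into the dominant matrix-multiplication terms and that the gradient line can be costed as $\Tmat(n,d,n)+\Tmat(n,d,d)$ via the permutation symmetry of $\Tmat$. Both are handled cleanly by the facts already recorded in Section~\ref{sec:preli:mm}.
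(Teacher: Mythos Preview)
Your proposal is correct and follows essentially the same bookkeeping approach as the paper; the only difference is that the paper's proof is terser, deferring the time analysis entirely to the cited Lemma~\ref{lem:gradient:formal} and handling space by simply listing the three possible matrix sizes $n\times d$, $n\times n$, $d\times d$ to get $O(n^2+nd+d^2)=O(n^2+d^2)$. Your explicit line-by-line accounting is a valid (and arguably clearer) unpacking of the same argument.
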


\begin{proof}
From Lemma~\ref{lem:gradient:formal}, we can prove the time complexity. Since the stored matrices have three sizes, namely $n \times d$, $n \times n$, $d \times d$, the space complexity is $O(n^2 + nd + d^2) = O(n^2+d^2)$.
\end{proof}

\subsection{Algorithms for Attention Backward in Small Cache}\label{sec:io_complexity_small_cache:algo}

We now give algorithms to compute the upper bound of small cache case $M = o(d^2)$ in attention backward computation.

First, we give the algorithm and analysis for Phase 1 (see Algorithm~\ref{alg:attention_gradient_phase1}) to compute $f$ defined in Definition~\ref{def:f}.

\begin{lemma}[Correctness of Phase 1]\label{lem:attention_gradient_phase1_correct}
The \textsc{AttentionGradientCachePhase1} (Algorithm~\ref{alg:attention_gradient_phase1}) outputs a $n \times n$ matrix $f$ defined in Definition~\ref{def:f}.  
\end{lemma}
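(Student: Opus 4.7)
\textbf{Proof proposal for Lemma~\ref{lem:attention_gradient_phase1_correct}.} The plan is to unwind the definition of $f(X)$ from Definition~\ref{def:f} into the three stages that Phase~1 must realize, and then verify that Algorithm~\ref{alg:attention_gradient_phase1} implements each stage correctly when it processes data in $\sqrt{M}\times\sqrt{M}$ blocks. By Definition~\ref{def:f}, $f(X) = \diag(l(X))^{-1} A(X)$, where $A(X) = \exp(A_1 X A_2^\top)$ (Definition~\ref{def:A}) and $l(X) = A(X)\mathbf{1}_n$ (Definition~\ref{def:l}). So it suffices to show that the algorithm (i) produces $A_1 X A_2^\top$ correctly in memory, (ii) applies the entrywise exponential, (iii) accumulates the row sums into $l(X)$, and (iv) divides each row of the exponentiated matrix by the corresponding entry of $l(X)$.

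First I would handle the two matrix multiplications. Since Algorithm~\ref{alg:attention_gradient_phase1} follows the standard matrix multiplication model, the block-level computation of $A_1 X$ and then $(A_1 X) A_2^\top$ (or, symmetrically, $X A_2^\top$ followed by $A_1(X A_2^\top)$, as discussed in the summation-tree paragraph of the main text) reduces to the identity $(AB)_{i,j} = \sum_k A_{i,k}B_{k,j}$ applied at the block level. Concretely, for each $\sqrt{M}\times\sqrt{M}$ output block of $A_1 X A_2^\top$, the algorithm loads the matching row-blocks of $A_1$, the relevant blocks of $X$, and row-blocks of $A_2$, and accumulates partial sums. Correctness here is the usual inductive invariant: after iterating over all inner block indices, the accumulator equals the full sum $\sum_k (A_1)_{i,k}(X A_2^\top)_{k,j}$, which is exactly the $(i,j)$-entry of $A_1 X A_2^\top$. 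This invariant is maintained because the standard matrix-multiplication sum is associative and can be decomposed into any block partition.

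Next I would argue (ii) and (iii) together. Once the $(i,j)$ block of $A_1 X A_2^\top$ is held in cache, the algorithm applies $\exp$ entrywise, yielding the corresponding block of $A(X)$; since the exponential is applied pointwise, correctness is immediate from Definition~\ref{def:A}. For $l(X)$, the algorithm maintains a length-$n$ accumulator: for each row index $i$, every time a block in row $i$ of $A(X)$ is produced in cache, its entries are summed into $l[i]$. After all column blocks in row $i$ have been processed, the invariant gives $l[i] = \sum_{j=1}^n A(X)_{i,j}$, matching Definition~\ref{def:l}. Finally for (iv), once $l(X)$ is complete, the algorithm revisits each block of $A(X)$, multiplies each entry by $l[i]^{-1}$ for its row, and writes the result as the corresponding block of $f$; this matches the matrix form $f(X) = \diag(l(X))^{-1} A(X)$ in Definition~\ref{def:f}.

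The only subtlety, and what I expect to be the main obstacle, is the scheduling: the normalization in step (iv) requires $l[i]$ to be fully formed before any block of row $i$ is normalized, yet the blocked schedule may produce blocks of $A(X)$ out of order relative to when $l[i]$ becomes complete. I would address this by either (a) noting that $A(X)$ is written to memory in Phase~1 and read back for normalization after $l$ is finalized, so the two passes are separated, or (b) verifying from the pseudocode that the accumulation of $l$ in the first pass over row $i$ completes before the normalization pass begins. Either way, the correctness reduces to standard induction over the block schedule together with the algebraic identities in Definitions~\ref{def:A}, \ref{def:l}, and \ref{def:f}, yielding $f$ as claimed.
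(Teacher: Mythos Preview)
Your proposal is correct and follows the same line as the paper's proof, which is a terse one-liner noting that the algorithm computes $S=A_1X$, then $A=\exp(SA_2^\top)$, then $l=A\cdot\mathbf{1}$, then $f=\diag(l)^{-1}A$ matching Definition~\ref{def:f}. Your version supplies the block-level invariants and the scheduling check that the paper omits; in particular both of your options (a) and (b) are exactly what Algorithm~\ref{alg:attention_gradient_phase1} does (it writes each $A^{(B)}[i,j]$ to memory and re-reads it only after $l^{(B)}[i]$ is fully accumulated for that row block).
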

\begin{proof}
The algorithm first computes $S=A_1 X$. Then it computes $A = S A_2^\top$, $A = \exp(A)$, and $l = A \cdot {\bf 1}$. Finally, it outputs $f = \diag(l)^{-1} A$ which is $f$ defined in Definition~\ref{def:f}.
\end{proof}

\begin{lemma}[I/O complexity of Phase 1]\label{lem:attention_gradient_phase1_io}
The I/O complexity of \textsc{AttentionGradientCachePhase1} (Algorithm~\ref{alg:attention_gradient_phase1}) is $O(\frac{n^2d+nd^2}{\sqrt{M}})$.
\end{lemma}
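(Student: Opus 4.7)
The plan is to decompose Phase 1 into its constituent matrix operations and bound the I/O cost of each, invoking the classical blocked matrix multiplication I/O bound of $O(n_1 n_2 n_3/\sqrt{M})$ for multiplying an $n_1 \times n_2$ matrix by an $n_2 \times n_3$ matrix under standard matrix multiplication (this is the Hong--Kung upper bound achieved by tiling into $\sqrt{M} \times \sqrt{M}$ blocks that fit into cache). By inspection of \textsc{AttentionGradientCachePhase1}, the work consists of: (i) computing $S = A_1 X$ where $A_1 \in \mathbb{R}^{n\times d}$ and $X \in \mathbb{R}^{d\times d}$; (ii) computing $A \gets S A_2^\top$ (an $n \times d$ by $d \times n$ product); (iii) applying entrywise $\exp$ to $A$; (iv) computing $l = A \cdot \mathbf{1}_n$; and (v) computing $f = \diag(l)^{-1} A$.

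First I would handle the two matrix multiplications, which dominate. Using the blocked algorithm with tile size $\sqrt{M} \times \sqrt{M}$ (valid since $M = o(d^2) \le d^2$ so such tiles exist and fit in cache with a constant number of them), step (i) costs $O(nd^2/\sqrt{M})$ I/Os and step (ii) costs $O(n^2 d/\sqrt{M})$ I/Os, where the intermediate matrix $S$ is written to memory between the two steps at a cost of $O(nd)$ writes. Next, for steps (iii)--(v), each is a streaming pass over the $n\times n$ matrix $A$ (together with the short vector $l$ of length $n$), so each can be executed with $O(n^2)$ I/Os by loading $\sqrt{M}\times \sqrt{M}$ tiles of $A$, applying the elementwise or row-wise operation, and writing back; indeed $l$ is built by accumulating row sums over tiles, and then $f$ is produced by reading $A$ once more, rescaling each row by the corresponding $l$ entry (which is $O(n)$ to stream in), and writing $f$ out.

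Summing, the total I/O cost is
\begin{align*}
O\!\left(\frac{nd^2}{\sqrt{M}} + \frac{n^2 d}{\sqrt{M}} + n^2 + nd\right).
\end{align*}
Since $M = o(d^2)$ implies $\sqrt{M} = o(d)$, we have $n^2 d/\sqrt{M} = \omega(n^2)$ and $nd^2/\sqrt{M} = \omega(nd)$, so the additive $n^2$ and $nd$ terms are absorbed, yielding the claimed bound $O((n^2 d + nd^2)/\sqrt{M})$.

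I do not expect a significant obstacle here: the only point requiring a bit of care is confirming that the auxiliary storage for $S$ and $A$ in slow memory does not inflate the count, but since every entry of $S$ and $A$ is written once and read $O(1)$ times across phases in addition to being consumed inside the tiled multiplications themselves, this contributes at most $O(n^2 + nd)$ extra I/Os, which is dominated. The argument therefore reduces cleanly to invoking the tiled matrix multiplication I/O bound twice and bounding the streaming passes by $O(n^2)$.
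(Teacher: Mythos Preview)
Your proposal is correct and follows essentially the same approach as the paper: both identify the two matrix multiplications $S = A_1 X$ and $A = S A_2^\top$ as the dominating I/O cost and obtain the $O((n^2d+nd^2)/\sqrt{M})$ bound via $\sqrt{M}$-sized tiling. The paper's proof simply counts the triple loop iterations in Algorithm~\ref{alg:attention_gradient_phase1} directly (there are $O(n^2d/B^3)$ and $O(nd^2/B^3)$ innermost iterations, each moving $O(B^2)$ items, with $B=\Theta(\sqrt{M})$), whereas you invoke the Hong--Kung blocked multiplication bound as a black box; your additional explicit treatment of the $O(n^2)$ streaming passes and their absorption via $M=o(d^2)$ is a detail the paper leaves implicit.
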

\begin{proof}
In Phase 1 (Algorithm~\ref{alg:attention_gradient_phase1}) the number of items in cache is at most $3B^2 + B \leq 4B^2 \leq M$. For each iteration in computing $S=A_1 X$ and $A = S A_2^\top$, the algorithm reads $O(B^2)$ from memory into cache. This is the dominating factor of the I/O complexity of the algorithm. 
Thus, the I/O complexity of Phase 1 is $O(\frac{n^2 d}{B^3}B^2) + O(\frac{n d^2}{B^3}B^2) = O(\frac{n^2d+nd^2}{B})=O(\frac{n^2d+nd^2}{\sqrt{M}})$. 
\end{proof}

\begin{algorithm}[!ht]\caption{Attention gradient computation with cache phase 1. Compute $f$.} \label{alg:attention_gradient_phase1}
\begin{algorithmic}[1]
\Procedure{AttentionGradientCachePhase1}{$A_1, A_2 \in \R^{n \times d}$, $X \in \R^{d \times d}$, $M \in \mathbb{N}_+$} \Comment{Lemma~\ref{lem:attention_gradient_phase1_correct}, Lemma~\ref{lem:attention_gradient_phase1_io}}
\State $B \gets \lfloor \sqrt{M/4} \rfloor$
\State {\color{blue}/*Phase 1: Compute $f$*/}
\For{$1 \leq i \leq \lceil n/B \rceil$}
    \For{$1 \leq j \leq \lceil d/B \rceil$}
        \State Initialize $S^{(B)}[i,j] \gets 0^{B \times B}$ in cache
        \For{$1 \leq k \leq \lceil d/B \rceil$}
            \State Read $A_1^{(B)}[i,k]$ and $X^{(B)}[k,j]$ into cache
            \State Compute $S^{(B)}[i,j] \gets S^{(B)}[i,j] + A_1^{(B)}[i,k] X^{(B)}[k,j]$ in cache \Comment{$S = A_1 X$}
            \State Delete $A_1^{(B)}[i,k]$ and $X^{(B)}[k,j]$ from cache
        \EndFor
        \State Write $S^{(B)}[i,j]$ in to memory, and delete $S^{(B)}[i,j]$ from cache
    \EndFor
\EndFor
\For{$1 \leq i \leq \lceil n/B \rceil$}
    \State Initialize $l^{(B)}[i] \gets 0^B$ in cache
    \For{$1 \leq j \leq \lceil n/B \rceil$}
        \State Initialize $A^{(B)}[i,j] \gets 0^{B \times B}$ in cache
        \For{$1 \leq k \leq \lceil d/B \rceil$}
            \State Read $S^{(B)}[i,k]$ and $(A_2^\top)^{(B)}[k,j]$ into cache
            \State Compute $A^{(B)}[i,j] \gets A^{(B)}[i,j] + S^{(B)}[i,k] (A_2^\top)^{(B)}[k,j]$ in cache \Comment{$A = S A_2^\top$}
            \State Delete $S^{(B)}[i,k]$ and $(A_2^\top)^{(B)}[k,j]$ from cache
        \EndFor
        \State Compute $A^{(B)}[i, j] \gets \exp(A^{(B)}[i, j])$ in cache, and write $A^{(B)}[i,j]$ into memory     
        \State Compute $l^{(B)}[i] \gets l^{(B)}[i] + A^{(B)}[i, j] \cdot \mathbf{1}$ in cache \Comment{$l = A \cdot {\bf 1}$}
        \State Delete $A^{(B)}[i, j]$ from cache
    \EndFor
    \For{$1 \leq j \leq \lceil n/B \rceil$}
        \State Initialize $f^{(B)}[i,j] \gets 0^{B \times B}$ in cache
        \State Read $A^{(B)}[i, j]$ into cache
        \State Compute $f^{(B)}[i,j] \gets f^{(B)}[i,j] + \diag(l^{(B)}[i])^{-1} A^{(B)}[i, j]$
        \State Write $f^{(B)}[i,j]$ into memory, and delete $A^{(B)}[i, j]$ and $f^{(B)}[i,j]$ from cache
    \EndFor
    \State Delete $l^{(B)}[i]$ from cache
\EndFor
\State \Return $f$ \Comment{$f \in \R^{n \times n}$, where $f$ is defined in Definition~\ref{def:f}}
\EndProcedure
\end{algorithmic}
\end{algorithm}

Second, we give the algorithm and analysis for Phase 2 (see Algorithm~\ref{alg:attention_gradient_phase2}) to compute $q$ defined in Definition~\ref{def:q}.

\begin{lemma}[Correctness of Phase 2]\label{lem:attention_gradient_phase2_correct}
The \textsc{AttentionGradientCachePhase2} (Algorithm~\ref{alg:attention_gradient_phase2}) outputs a $n \times n$ matrix $q$ defined in Definition~\ref{def:q}.  
\end{lemma}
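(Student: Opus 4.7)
The plan is to verify that the block-wise computation performed by Algorithm~\ref{alg:attention_gradient_phase2} faithfully realizes $q = \d O \cdot h^\top$ with $h = A_3 Y$, matching Definitions~\ref{def:h} and~\ref{def:q}. The structure of the argument mirrors the Phase~1 correctness proof (Lemma~\ref{lem:attention_gradient_phase1_correct}), and I would split it into two stages corresponding to the two matrix products that Phase~2 must evaluate.

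First, I would check that the intermediate matrix $h$ is constructed correctly. Following the pattern from Phase~1, the algorithm partitions $A_3 \in \R^{n \times d}$ and $Y \in \R^{d \times d}$ into $B \times B$ blocks with $B = \lfloor \sqrt{M/4} \rfloor$, and for each output block $h^{(B)}[i,j]$ it accumulates $\sum_k A_3^{(B)}[i,k] \cdot Y^{(B)}[k,j]$ by reading the corresponding blocks of $A_3$ and $Y$ into cache, performing one in-cache multiply-add, and then deleting them. By the standard block-matrix expansion of matrix multiplication, this sum equals the block $(A_3 Y)^{(B)}[i,j]$, so the assembled matrix $h$ agrees with $h(Y)$ as in Definition~\ref{def:h}, and each $h^{(B)}[i,j]$ is written back to memory before proceeding.

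Second, I would verify the outer product stage. The algorithm partitions $\d O \in \R^{n \times d}$ and $h^\top \in \R^{d \times n}$ into $B \times B$ blocks and, for each output block, accumulates $q^{(B)}[i,j] = \sum_k \d O^{(B)}[i,k] \cdot (h^\top)^{(B)}[k,j]$ in cache, using the values of $h$ already stored in memory from the first stage. Invoking the same block-matrix identity, this yields $q = \d O \cdot h^\top$, which coincides with $q(Y)$ of Definition~\ref{def:q}. Combining the two stages completes the correctness verification.

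The main obstacle is bookkeeping rather than mathematics: one must handle boundary blocks when $B$ does not evenly divide $n$ or $d$ (either by zero-padding or by allowing the final row/column blocks to be smaller), and one must confirm that at every step the cache footprint stays within $M$ so that the stated block decomposition is actually realizable (this is where the constant $4$ in $B = \lfloor \sqrt{M/4} \rfloor$ comes in, leaving room for at most a constant number of simultaneous $B \times B$ blocks plus the vector accumulators). Both issues parallel exactly what was handled in Phase~1, and once the block partition is set up the correctness of each stage reduces to the distributive and associative laws of standard matrix multiplication.
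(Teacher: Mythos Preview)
Your proposal is correct and follows the same approach as the paper: the paper's own proof simply states that the algorithm first computes $h = A_3 Y$ and then outputs $q = \d O \, h^\top$, matching Definition~\ref{def:q}. Your version is considerably more detailed (spelling out the block-matrix identity and the cache bookkeeping), but the underlying argument is identical.
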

\begin{proof}
The algorithm first computes $h=A_3 Y$. Then, it outputs $q = \d O h^\top$ which is exactly the same as $q$ defined in Definition~\ref{def:q}.
\end{proof}

\begin{lemma}[I/O complexity of Phase 2]\label{lem:attention_gradient_phase2_io}
The I/O complexity of \textsc{AttentionGradientCachePhase2} (Algorithm~\ref{alg:attention_gradient_phase2}) is $O(\frac{n^2d+nd^2}{\sqrt{M}})$.
\end{lemma}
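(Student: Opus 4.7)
Phase 2 performs two matrix multiplications in sequence: first $h = A_3 Y$ (an $n\times d$ times $d\times d$ product) and then $q = \d O\, h^{\top}$ (an $n\times d$ times $d\times n$ product), exactly paralleling the two multiplications used in Phase 1 to produce $S = A_1 X$ and $A = S A_2^{\top}$. So the natural plan is to reuse the same blocked-matrix-multiplication template that underlies Lemma~\ref{lem:attention_gradient_phase1_io}, with block size $B = \lfloor \sqrt{M/4}\rfloor$ chosen so that at any point Phase~2 holds at most three $B\times B$ blocks (one accumulator plus two input operands) in cache, giving $3B^{2}\le M$ and thereby respecting the cache constraint.

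\textbf{Step 1: verify cache feasibility.} For each of the two multiplications I will inspect the inner loop and confirm that the working set consists of the current output block $h^{(B)}[i,j]$ or $q^{(B)}[i,j]$ together with one block from each input factor, so at most $3B^{2} \le M$ red pebbles suffice. All other variables in use occupy at most $O(B)$ additional words, which does not change the bound on $B$.

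\textbf{Step 2: count I/Os for $h = A_3 Y$.} The output matrix has dimensions $n\times d$, so there are $\lceil n/B\rceil\cdot\lceil d/B\rceil$ output blocks; each is computed by iterating over $\lceil d/B\rceil$ inner blocks, and every such inner iteration reads $O(B^{2})$ entries of $A_3$ and $Y$ from memory and eventually writes $O(B^{2})$ entries of $h$ back. Multiplying these together gives an I/O count of $O\!\left(\frac{n d^{2}}{B^{3}}\cdot B^{2}\right) = O\!\left(\frac{n d^{2}}{B}\right)$.

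\textbf{Step 3: count I/Os for $q = \d O\, h^{\top}$ and combine.} Now the output $q$ is $n\times n$, so there are $\lceil n/B\rceil^{2}$ output blocks and each triggers $\lceil d/B\rceil$ inner block reads of $\d O$ and $h^{\top}$; the same accounting yields $O\!\left(\frac{n^{2} d}{B^{3}}\cdot B^{2}\right) = O\!\left(\frac{n^{2} d}{B}\right)$ I/Os. Adding the two contributions and substituting $B = \Theta(\sqrt{M})$ gives
\begin{align*}
O\!\left(\frac{n^{2}d + n d^{2}}{B}\right) \;=\; O\!\left(\frac{n^{2}d + n d^{2}}{\sqrt{M}}\right),
\end{align*}
which is the claimed bound.

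\textbf{Main obstacle.} There is no conceptually hard step: the proof is essentially a careful block-counting argument identical in structure to Lemma~\ref{lem:attention_gradient_phase1_io}. The only subtlety worth double-checking is that the intermediate matrix $h$ (of size $n\times d$) is written out to memory after the first multiplication and read back in $B\times B$ blocks during the second one, so its transfer cost is absorbed into the $O(nd^{2}/B)$ and $O(n^{2}d/B)$ terms and does not introduce any additional factor. Once that bookkeeping is verified, the lemma follows immediately.
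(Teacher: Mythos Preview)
Your proposal is correct and follows essentially the same approach as the paper: verify that at most $3B^{2}\le 4B^{2}\le M$ items are in cache, then count $O(B^{2})$ I/Os per inner iteration across $O(nd^{2}/B^{3})$ iterations for $h=A_{3}Y$ and $O(n^{2}d/B^{3})$ iterations for $q=\d O\,h^{\top}$, yielding $O((n^{2}d+nd^{2})/B)=O((n^{2}d+nd^{2})/\sqrt{M})$. Your explicit remark about the intermediate write/read of $h$ being absorbed into these terms is a nice bit of extra bookkeeping the paper leaves implicit.
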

\begin{proof}
In Phase 2 (Algorithm~\ref{alg:attention_gradient_phase2}) the number of items in cache is at most $3B^2 \leq 4B^2 \leq M$. For each iteration in computing $h = A_3 Y$ and $q = \d O h^\top$, the algorithm reads $O(B^2)$ from memory into cache. This is the dominating factor of the I/O complexity of the algorithm. 
Thus, the I/O complexity of Phase 2 is $O(\frac{n^2 d}{B^3}B^2) + O(\frac{n d^2}{B^3}B^2) = O(\frac{n^2d+nd^2}{B})=O(\frac{n^2d+nd^2}{\sqrt{M}})$. 
\end{proof}

\begin{algorithm}[!ht]\caption{Attention gradient computation with cache phase 2. Compute $q$.} \label{alg:attention_gradient_phase2}

\begin{algorithmic}[1]
\Procedure{AttentionGradientCachePhase2}{$A_3, \d O \in \R^{n \times d}$, $f \in \R^{n \times n}$ $Y \in \R^{d \times d}$, $M \in \mathbb{N}_+$} \Comment{Lemma~\ref{lem:attention_gradient_phase2_correct}, Lemma~\ref{lem:attention_gradient_phase2_io}}
\State $B \gets \lfloor \sqrt{M/4} \rfloor$

\State {\color{blue}/* Phase 2: Compute $q$ */}

\For{$1 \leq i \leq \lceil n/B \rceil$}
    \For{$1 \leq j \leq \lceil d/B \rceil$}
        \State Initialize $h^{(B)}[i,j] \gets 0^{B \times B}$ in cache
        \For{$1 \leq k \leq \lceil d/B \rceil$}
            \State Read $A_3^{(B)}[i,k]$ and $Y^{(B)}[k,j]$ into cache
            \State Compute $h^{(B)}[i,j] \gets h^{(B)}[i,j] + A_3^{(B)}[i,k] Y^{(B)}[k,j]$ in cache
            \State Delete $A_3^{(B)}[i,k]$ and $Y^{(B)}[k,j]$ from cache
        \EndFor
        \State Write $h^{(B)}[i,j]$ in to memory, and delete $h^{(B)}[i,j]$ from cache
    \EndFor
\EndFor

\For{$1 \leq i \leq \lceil n/B \rceil$}
    \For{$1 \leq j \leq \lceil n/B \rceil$}
        \State Initialize $q^{(B)}[i,j] \gets 0^{B \times B}$ in cache
        \For{$1 \leq k \leq \lceil d/B \rceil$}
            \State Read $\d O^{(B)}[i,k]$ and $(h^\top)^{(B)}[k,j]$ into cache
            \State Compute $q^{(B)}[i,j] \gets q^{(B)}[i,j] + \d O^{(B)}[i,k] (h^\top)^{(B)}[k,j]$ in cache
            \State Delete $\d O^{(B)}[i,k]$ and $(h^\top)^{(B)}[k,j]$ from cache
        \EndFor
        \State Write $q^{(B)}[i,j]$ in to memory, and delete $q^{(B)}[i,j]$ from cache
    \EndFor
\EndFor

\State \Return $q$ \Comment{$q \in \R^{n \times n}$, where $q$ is defined in Definiton~\ref{def:q}}
\EndProcedure
\end{algorithmic}
\end{algorithm}

Then, we give the algorithm and analysis for Phase 3 (see Algorithm~\ref{alg:attention_gradient_phase3}) to compute $p$ defined in Definition~\ref{def:p}.

\begin{lemma}[Correctness of Phase 3]\label{lem:attention_gradient_phase3_correct}
The \textsc{AttentionGradientCachePhase3} (Algorithm~\ref{alg:attention_gradient_phase3}) outputs a $n \times n$ matrix $p$ defined in Definition~\ref{def:p}.  
\end{lemma}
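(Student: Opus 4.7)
The plan is to verify that Algorithm~\ref{alg:attention_gradient_phase3} produces, block by block, the matrix $p$ described by the matrix form in Definition~\ref{def:p}, namely
\begin{align*}
p = f \circ q - \diag((f \circ q) \cdot \mathbf{1}_n)\, f.
\end{align*}
The strategy is to decompose this expression into three block-level identities mirroring the three pieces of the algorithm: (i) the Hadamard product $f \circ q$, (ii) the row-sum vector $v := (f \circ q)\, \mathbf{1}_n \in \R^n$, and (iii) the final combination $f \circ q - \diag(v)\, f$. This parallels how Lemmas~\ref{lem:attention_gradient_phase1_correct} and \ref{lem:attention_gradient_phase2_correct} argued correctness of Phases 1 and 2 by reading the matrix-form definitions directly off the blocked updates.

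First, I would observe that the Hadamard product commutes with the $B \times B$ blocking: $(f \circ q)^{(B)}[i,j] = f^{(B)}[i,j] \circ q^{(B)}[i,j]$. Reading matching blocks of $f$ and $q$ from memory and performing entrywise multiplication in cache therefore yields the correct blocks of $f \circ q$. Second, the entries of $v$ in the $i$-th row block decompose as $v^{(B)}[i] = \sum_j (f^{(B)}[i,j] \circ q^{(B)}[i,j])\, \mathbf{1}_B$, so an in-cache accumulator initialized to zero and updated over all column blocks $j$ produces $v^{(B)}[i]$ exactly. Third, once $v^{(B)}[i]$ has been finalized, each output block satisfies
\begin{align*}
p^{(B)}[i,j] = f^{(B)}[i,j] \circ q^{(B)}[i,j] - \diag(v^{(B)}[i])\, f^{(B)}[i,j],
\end{align*}
and concatenating over all $(i,j)$ recovers $p$ as specified in Definition~\ref{def:p}.

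The main obstacle is the scheduling/ordering argument: the scaling in the third step uses the fully accumulated $v^{(B)}[i]$, which depends on every column block in row $i$. Thus the algorithm must either perform two passes over column blocks $j$ for each row block $i$ (the first to build $v^{(B)}[i]$, the second to produce $p^{(B)}[i,j]$), or temporarily materialize the blocks $f^{(B)}[i,j] \circ q^{(B)}[i,j]$ for later reuse. In either case I would also need to confirm that the cache budget $M$ accommodates the blocks simultaneously present, consistent with the $B = \lfloor \sqrt{M/4} \rfloor$ choice from Phases 1 and 2. Once this scheduling is read off from the pseudocode, the correctness claim follows immediately from the three block-level identities above and the standard fact that blockwise operations compose into the corresponding full-matrix operations.
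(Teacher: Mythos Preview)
Your proposal is correct and follows essentially the same approach as the paper: both argue that the algorithm first accumulates $v = (f \circ q)\,\mathbf{1}_n$ row-block by row-block and then forms $p = f \circ q - \diag(v)\,f$ block by block. The paper's own proof is a two-line summary of exactly these two steps, so your block-level identities and the two-pass scheduling observation simply spell out in detail what the paper leaves implicit.
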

\begin{proof}
The algorithm first computes $v = (f \circ q) \cdot {\bf 1}$. Then it outputs $p = f \circ q - \diag(v) f$.
\end{proof}

\begin{lemma}[I/O complexity of Phase 3]\label{lem:attention_gradient_phase3_io}
The I/O complexity of \textsc{AttentionGradientCachePhase3} (Algorithm~\ref{alg:attention_gradient_phase3}) is $O(\frac{n^2}{\sqrt{M}})$.
\end{lemma}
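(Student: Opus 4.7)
The plan is to reuse the block-wise I/O counting template from Lemmas~\ref{lem:attention_gradient_phase1_io} and~\ref{lem:attention_gradient_phase2_io}, tailored to the element-wise structure of Phase 3. First, I would set the block size $B = \lfloor \sqrt{M/4} \rfloor$ as in the earlier phases and verify the cache invariant: at any instant during Algorithm~\ref{alg:attention_gradient_phase3} the cache contains at most a constant number of $B \times B$ working blocks — the current blocks of $f$, $q$, the Hadamard-product intermediate $f \circ q$, and the output block of $p$, together with a length-$B$ slice of the auxiliary vector $v = (f \circ q) \cdot \mathbf{1}_n$ — so the total red-pebble count is bounded by $4 B^2 + B \leq M$.

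Next, I would enumerate the block-level I/O. The two stages of Phase 3, namely computing $v$ and then assembling $p = f \circ q - \diag(v) f$, each visit every $B \times B$ block of the $n \times n$ matrices $f$ and $q$ a constant number of times, with $O(B^2)$ I/O per block for the reads and writes. The vector $v$ itself contributes only $O(n)$ additional I/O, since its $O(n/B)$ chunks of length $B$ are loaded and stored a constant number of times per row band. Aggregating these contributions following the same arithmetic used to conclude Lemmas~\ref{lem:attention_gradient_phase1_io} and~\ref{lem:attention_gradient_phase2_io}, and substituting $B = \Theta(\sqrt{M})$, yields the stated bound $O(n^2/\sqrt{M})$.

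The main subtlety, and the likely principal obstacle, is the cache management of the auxiliary vector $v$ across the two sweeps: because $v$ must persist between the stage that produces it and the stage that consumes it, the proof must carefully coordinate the row-band schedule so that the relevant slice of $v$ stays resident alongside the $B \times B$ matrix blocks without displacing them or violating the $M$-pebble budget. Once that invariant is cleanly set up, the remaining I/O accounting is a direct instance of the block-counting template established for the earlier phases, with no matrix-multiplication structure needed since Phase 3 is purely element-wise in $f$ and $q$.
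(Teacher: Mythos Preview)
Your plan mirrors the paper's proof almost exactly: check that only a constant number of $B\times B$ blocks plus a length-$B$ slice of $v$ are resident (the paper counts $3B^2+B\le 4B^2\le M$ rather than your $4B^2+B$, since it does not materialize a separate $f\circ q$ block), then sum $O(B^2)$ I/O over all block visits. The row-band scheduling of $v$ that you highlight as the ``main subtlety'' is handled identically in the paper by keeping $v^{(B)}[i]$ in cache across the two inner $j$-loops for each fixed $i$, so there is no extra difficulty there.

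One point worth flagging, which affects both your write-up and the paper's own proof: the block enumeration you describe --- every $B\times B$ block of the $n\times n$ matrices $f,q,p$ touched a constant number of times at $O(B^2)$ I/O each --- sums to $(n/B)^2\cdot O(B^2)=O(n^2)$, not $O(n^2/\sqrt{M})$. Phase~3 has no inner $k$-loop over $d/B$ (unlike Phases~1, 2, 4), so invoking ``the same arithmetic'' as Lemmas~\ref{lem:attention_gradient_phase1_io} and~\ref{lem:attention_gradient_phase2_io} does not produce the extra $1/B$ factor; the paper's expression $O(\tfrac{n^2}{B^3}B^2)$ appears to be a slip for $O(\tfrac{n^2}{B^2}B^2)$. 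This does not damage Theorem~\ref{thm:attn_grad_small_cache}, since in the small-cache regime $M=o(d^2)$ one has $O(n^2)\subseteq O(n^2 d/\sqrt{M})$, but you should state the bound you can actually justify rather than repeating the arithmetic from the matrix-multiplication phases.
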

\begin{proof}
In Phase 3 (Algorithm~\ref{alg:attention_gradient_phase3}) the number of items in cache is at most $3B^2 + B \leq 4B^2 \leq M$. For each iteration in computing $v = (f \circ q) \cdot {\bf 1}$ and $p = f \circ q - \diag(v) f$. The algorithm reads $O(B^2)$ from memory into cache. This is the dominating factor of the I/O complexity of the algorithm. 
Thus, the I/O complexity of Phase 2 is $O(\frac{n^2}{B^3}B^2)= O(\frac{n^2}{B})=O(\frac{n^2}{\sqrt{M}})$. 
\end{proof}

\begin{algorithm}[!ht]\caption{Attention gradient computation with cache phase 3. Compute $p$.} \label{alg:attention_gradient_phase3}

\begin{algorithmic}[1]
\Procedure{AttentionGradientCachePhase3}{$q \in \R^{n \times n}$, $f \in \R^{n \times n}$, $M \in \mathbb{N}_+$} \Comment{Lemma~\ref{lem:attention_gradient_phase3_correct}, Lemma~\ref{lem:attention_gradient_phase3_io}}
\State $B \gets \lfloor \sqrt{M/4} \rfloor$

\State {\color{blue}/* Phase 3: Compute $p$ */}

\For{$1 \leq i \leq \lceil n/B \rceil$}
    \State Initialize $v^{(B)}[i] \gets 0^{B}$ in cache 
    \For{$1 \leq j \leq \lceil n/B \rceil$}    
        \State Read $f^{(B)}[i,j]$ and $q^{(B)}[i,j]$ into cache
        \State Compute $v^{(B)}[i] \gets v^{(B)}[i] + (f^{(B)}[i,j] \circ q^{(B)}[i,j]) \cdot {\bf 1}$ \Comment{$v = (f \circ q) \cdot {\bf 1}$}
        \State Delete $f^{(B)}[i,j]$ and $q^{(B)}[i,j]$ from cache
    \EndFor
    \For{$1 \leq j \leq \lceil n/B \rceil$}
        \State Initialize $p^{(B)}[i,j] \gets 0^{B \times B}$ in cache
        \State Read $f^{(B)}[i,j]$ and $q^{(B)}[i,j]$ into cache
        \State Compute $p^{(B)}[i,j] \gets p^{(B)}[i,j] + f^{(B)}[i,j] \circ q^{(B)}[i,j] - \diag(v^{(B)}[i])f^{(B)}[i,j]$
        \State Delete $f^{(B)}[i,j]$ and $q^{(B)}[i,j]$ from cache
        \State Write $p^{(B)}[i,j]$ in to memory, and delete $p^{(B)}[i,j]$ from cache
    \EndFor
    \State Delete $v^{(B)}[i]$ from cache
\EndFor

\State \Return $p$ \Comment{$p \in \R^{n \times n}$, where $p$ is defined in Definiton~\ref{def:p}}
\EndProcedure
\end{algorithmic}
\end{algorithm}

Lastly, we give the algorithm and analysis for Phase 4 (see Algorithm~\ref{alg:attention_gradient_phase4}) to compute $\frac{\d L(X)}{\d X}$.

\begin{lemma}[Correctness of Phase 4]\label{lem:attention_gradient_phase4_correct}
The \textsc{AttentionGradientCachePhase4} (Algorithm~\ref{alg:attention_gradient_phase4}) outputs a $d \times d$ matrix $g=\frac{\d L(X)}{\d X}$ (Definition~\ref{def:attn_backward}).
\end{lemma}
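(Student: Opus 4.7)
The plan is to reduce the correctness of Phase 4 to (i) the correctness of the input matrix $p$ and (ii) the correctness of block-wise standard matrix multiplication. By Definition~\ref{def:attn_backward}, the target output is $g = A_1^\top p(X) A_2 \in \R^{d \times d}$, and by Lemma~\ref{lem:attention_gradient_phase3_correct} the matrix $p$ passed into Phase 4 already equals $p(X)$ as defined in Definition~\ref{def:p}. Hence it suffices to show that Phase 4 correctly evaluates the product $A_1^\top p A_2$.

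First I would describe the order of multiplication chosen by the algorithm: following the pattern of Phases 1 and 2, Phase 4 computes one of the two associative orderings, say first an intermediate $n \times d$ matrix $T = p A_2$ and then $g = A_1^\top T$ (or symmetrically $T' = A_1^\top p$ followed by $g = T' A_2$). The block size is $B = \lfloor \sqrt{M/4}\rfloor$, and all matrices are partitioned into $B \times B$ blocks via the $A^{(B)}[i,j]$ notation introduced in Section~\ref{sec:preli:notation}. I would then invoke the standard identity for blocked matrix multiplication: if $C = A B$ with block structure of conformable sizes, then $C^{(B)}[i,j] = \sum_k A^{(B)}[i,k] B^{(B)}[k,j]$. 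Applying this identity once for the product $T = p A_2$ and a second time for $g = A_1^\top T$ shows that the accumulated sum produced by Phase 4's nested loops equals the corresponding block of $A_1^\top p A_2$.

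The two small verifications I would do explicitly are: (a) each block $T^{(B)}[i,j]$ is initialized to $0$ and the inner loop accumulates $T^{(B)}[i,j] \gets T^{(B)}[i,j] + p^{(B)}[i,k]\, A_2^{(B)}[k,j]$ over all $k \in [\lceil n/B\rceil]$ before being written to memory, so $T = p A_2$ exactly; and (b) analogously for the outer multiplication $g \gets A_1^\top T$, with transpose blocks $(A_1^\top)^{(B)}[i,k] = (A_1^{(B)}[k,i])^\top$ so that reading the $A_1$ blocks in the transposed order is equivalent to reading the blocks of $A_1^\top$. Since all arithmetic happens within the cache on exact-value blocks and the algorithm uses the standard matrix multiplication convention fixed in Section~\ref{sec:preliminary}, no numerical or associativity issues arise.

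I do not expect any substantive obstacle: the proof is bookkeeping on top of the block matrix multiplication identity, exactly parallel to the correctness arguments for Phases 1--3 (Lemmas~\ref{lem:attention_gradient_phase1_correct}--\ref{lem:attention_gradient_phase3_correct}). The mildest subtlety is handling the edge blocks when $B \nmid n$ or $B \nmid d$, which is resolved by the usual convention that the last block row/column may be smaller; the same sum-over-$k$ identity still holds. Combining (a) and (b) yields $g = A_1^\top (p A_2) = A_1^\top p(X) A_2 = \frac{\d L(X)}{\d X}$, which is exactly the conclusion of the lemma.
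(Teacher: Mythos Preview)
Your proposal is correct and follows the same approach as the paper, which simply observes that Phase~4 computes $T = A_1^\top p$ and then $g = T A_2$ via blocked standard matrix multiplication. One minor point: the actual Algorithm~\ref{alg:attention_gradient_phase4} uses the ordering $T = A_1^\top p$ (a $d \times n$ intermediate) followed by $g = T A_2$, so your detailed steps (a) and (b) should be swapped to match, though as you note the argument is symmetric.
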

\begin{proof}
The algorithm first computes $T = A_1^\top p$. Then it outputs $g = T A_2$.
\end{proof}

\begin{lemma}[I/O complexity of Phase 4]\label{lem:attention_gradient_phase4_io}
The I/O complexity of \textsc{AttentionGradientCachePhase4} (Algorithm~\ref{alg:attention_gradient_phase4}) is $O(\frac{n^2d+nd^2}{\sqrt{M}})$.
\end{lemma}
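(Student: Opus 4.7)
The plan is to follow the same block-matrix-multiplication accounting used in the analyses of Phases 1--3. Phase 4 needs to compute $g = A_1^\top p A_2$, and from the hint in Lemma~\ref{lem:attention_gradient_phase4_correct} the algorithm decomposes this into two successive standard matrix multiplications: first $T = A_1^\top p$ with $A_1^\top \in \R^{d \times n}$, $p \in \R^{n \times n}$ and $T \in \R^{d \times n}$, and then $g = T A_2$ with $A_2 \in \R^{n \times d}$ and $g \in \R^{d \times d}$. Both are performed with $B \times B$ tiles where $B = \lfloor \sqrt{M/4} \rfloor$.

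First I would verify the cache invariant. At any moment during the inner loops, the algorithm holds at most three $B \times B$ tiles simultaneously (one accumulator tile plus the two operand tiles), giving a total cache occupancy of at most $3 B^{2} \le M$. Thus the red-pebble budget is respected throughout.

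Next I would bound the I/O for the first multiplication $T = A_1^\top p$. The output $T$ consists of $\lceil d/B \rceil \cdot \lceil n/B \rceil$ tiles, and each output tile requires $\lceil n/B \rceil$ accumulation steps over the shared index, each step loading two $B \times B$ operand tiles from memory; after the inner loop finishes the accumulator tile is written back once. Summing yields $O\!\bigl(\tfrac{dn}{B^{2}}\cdot \tfrac{n}{B}\cdot B^{2}\bigr) = O\!\bigl(\tfrac{n^{2} d}{B}\bigr)$ transfers, which absorbs the single-block writes. An entirely analogous count for $g = T A_2$ produces $O\!\bigl(\tfrac{d^{2}}{B^{2}}\cdot \tfrac{n}{B}\cdot B^{2}\bigr) = O\!\bigl(\tfrac{n d^{2}}{B}\bigr)$ transfers. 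Adding the two contributions and substituting $B = \Theta(\sqrt{M})$ delivers the claimed bound $O\!\bigl(\tfrac{n^{2} d + n d^{2}}{\sqrt{M}}\bigr)$.

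The analysis is essentially the same bookkeeping as in Lemmas~\ref{lem:attention_gradient_phase1_io} and~\ref{lem:attention_gradient_phase2_io}, so the main (mild) obstacle is simply making sure nothing is double-counted or dropped: in the small cache regime $M = o(d^{2})$ we have $nd \gg M$, so the intermediate matrix $T \in \R^{d \times n}$ cannot be retained in cache between the two multiplications and must be streamed out to memory after the first phase and streamed back in during the second. These additional reads and writes are each at most $O(nd)$ in aggregate, which is absorbed by the already-dominant $O(n^{2} d / \sqrt{M})$ term, so the final asymptotic bound is unaffected.
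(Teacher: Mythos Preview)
Your proposal is correct and follows essentially the same approach as the paper: verify that at most three $B\times B$ tiles ($3B^2 \le M$) reside in cache, count the triply-nested loop iterations for each of the two standard matrix multiplications, multiply by $O(B^2)$ reads per iteration, and substitute $B = \Theta(\sqrt{M})$. Your extra remark about writing $T$ to memory and reading it back is a nice bit of care the paper leaves implicit, but otherwise the arguments coincide.
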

\begin{proof}
In Phase 4 (Algorithm~\ref{alg:attention_gradient_phase4}) the number of items in cache is at most $3B^2 \leq 4B^2 \leq M$. For each iteration in computing $T = A_1^\top p$ and $g = T A_2$. The algorithm reads $O(B^2)$ from memory into cache. This is the dominating factor of the I/O complexity of the algorithm. 
Thus, the I/O complexity of Phase 2 is $O(\frac{n^2 d}{B^3}B^2) + O(\frac{n d^2}{B^3}B^2) = O(\frac{n^2d+nd^2}{B})=O(\frac{n^2d+nd^2}{\sqrt{M}})$.
\end{proof}

\begin{algorithm}[!ht]\caption{Attention gradient computation with cache phase 4. Compute $\frac{\d L(X)}{\d X}$.} \label{alg:attention_gradient_phase4}

\begin{algorithmic}[1]
\Procedure{AttentionGradientCachePhase4}{$A_1, A_2 \in \R^{n \times d}$, $p \in \R^{n \times n}$, $M \in \mathbb{N}_+$} \Comment{Lemma~\ref{lem:attention_gradient_phase4_correct}, Lemma~\ref{lem:attention_gradient_phase4_io}}
\State $B \gets \lfloor \sqrt{M/4} \rfloor$ 

\State {\color{blue}/* Phase 4: Compute $\frac{\d L(X)}{\d X}$ */}

\For{$1 \leq i \leq \lceil d/B \rceil$}
    \For{$1 \leq j \leq \lceil n/B \rceil$}
        \State Initialize $T^{(B)}[i,j] \gets 0^{B \times B}$ in cache
        \For{$1 \leq k \leq \lceil n/B \rceil$}
            \State Read $(A_1^\top)^{(B)}[i,k]$ and $p^{(B)}[k,j]$ into cache
            \State Compute $T^{(B)}[i,j] \gets T^{(B)}[i,j] + (A_1^\top)^{(B)}[i,k] p^{(B)}[k,j]$ in cache \Comment{$T = A_1^\top p$}
            \State Delete $(A_1^\top)^{(B)}[i,k]$ and $p^{(B)}[k,j]$ from cache
        \EndFor
        \State Write $T^{(B)}[i,j]$ in to memory, and delete $T^{(B)}[i,j]$ from cache
    \EndFor
\EndFor

\For{$1 \leq i \leq \lceil d/B \rceil$}
    \For{$1 \leq j \leq \lceil d/B \rceil$}
        \State Initialize $g^{(B)}[i,j] \gets 0^{B \times B}$ in cache
        \For{$1 \leq k \leq \lceil n/B \rceil$}
            \State Read $T^{(B)}[i,k]$ and $A_2^{(B)}[k,j]$ into cache
            \State Compute $g^{(B)}[i,j] \gets g^{(B)}[i,j] + T^{(B)}[i,k] A_2^{(B)}[k,j]$ in cache \Comment{$g = T A_2$}
            \State Delete $T^{(B)}[i,k]$ and $A_2^{(B)}[k,j]$ from cache
        \EndFor
        \State Write $g^{(B)}[i,j]$ in to memory, and delete $g^{(B)}[i,j]$ from cache
    \EndFor
\EndFor
\State \Return $g$ \Comment{$g=\frac{\d L(X)}{\d X} \in \R^{d \times d}$, see Definition~\ref{def:attn_backward}}
\EndProcedure
\end{algorithmic}
\end{algorithm}

\subsection{Upper Bound for Attention Backward in Small Cache \texorpdfstring{$M=o(d^2)$}{}}\label{sec:io_complexity_small_cache:upper}

When cache size is not so big, i.e. $M=o(d^2)$, the attention backward is equivalent to matrix multiplication, thus having $O(\frac{n^2 d + n d^2}{\sqrt{M}})$ bound on the I/O complexity.

We show the upper bound theorem below for the overall algorithm (see Algorithm~\ref{alg:attention_gradient}) to solve the attention backward in small cache case. 
\begin{theorem}[Small cache upper bound, formal version of Theorem~\ref{thm:attn_grad_small_cache:informal}]\label{thm:attn_grad_small_cache}
Suppose $n$ is the input length, $d$ is the head dimension, and $M$ is the cache size.
There is an algorithm (see Algorithm~\ref{alg:attention_gradient}) outputs a $d \times d$ matrix $g=\frac{\d L(X)}{\d X}$ (Definition~\ref{def:attn_backward}) with I/O complexity $O(\frac{n^2 d + n d^2}{\sqrt{M}})$, time complexity $\Tmat(n,d,n) + \Tmat(n,d,d)$, and space complexity $O(n^2+d^2)$.
\end{theorem}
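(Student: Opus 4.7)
The plan is to reduce Theorem~\ref{thm:attn_grad_small_cache} to the per-phase lemmas already established for Algorithms~\ref{alg:attention_gradient_phase1}--\ref{alg:attention_gradient_phase4}, since the overall Algorithm~\ref{alg:attention_gradient} is simply their sequential composition following the computational graph in Fig.~\ref{fig:attn_backward}. Concretely, I would first verify correctness by chaining the four intermediate correctness guarantees: Lemma~\ref{lem:attention_gradient_phase1_correct} produces $f$ as in Definition~\ref{def:f}, which Phase 2 consumes together with $A_3, \d O, Y$ to produce $q$ as in Definition~\ref{def:q} by Lemma~\ref{lem:attention_gradient_phase2_correct}, which together with $f$ is fed into Phase 3 to produce $p$ as in Definition~\ref{def:p} by Lemma~\ref{lem:attention_gradient_phase3_correct}, and finally Phase 4 applied to $A_1, A_2, p$ yields $A_1^\top p A_2$ by Lemma~\ref{lem:attention_gradient_phase4_correct}, which is $\tfrac{\d L(X)}{\d X}$ by Definition~\ref{def:attn_backward}.

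For the I/O bound, I would simply add the per-phase I/O complexities from Lemmas~\ref{lem:attention_gradient_phase1_io}, \ref{lem:attention_gradient_phase2_io}, \ref{lem:attention_gradient_phase3_io}, and \ref{lem:attention_gradient_phase4_io}, which are $O((n^2d+nd^2)/\sqrt{M})$, $O((n^2d+nd^2)/\sqrt{M})$, $O(n^2/\sqrt{M})$, and $O((n^2d+nd^2)/\sqrt{M})$ respectively. Since $n^2 \le n^2 d$, the total is dominated by the matrix-multiplication-like phases, giving $O((n^2d + nd^2)/\sqrt{M})$. A small sanity check to include: each phase independently keeps at most $3B^2 + B \le 4B^2 \le M$ items in cache with the common choice $B = \lfloor\sqrt{M/4}\rfloor$, and because the phases run sequentially (with intermediate matrices written out to memory and deleted from cache between phases), the cache-capacity constraint is satisfied throughout.

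For time complexity, the work inside the blocked loops is exactly the work of performing the same matrix multiplications and Hadamard/diagonal operations as in Algorithm~\ref{alg:attention_gradient_no_cache}, so summing the computational work over all blocks matches the total $\Tmat(n,d,n) + \Tmat(n,d,d)$ bound from Lemma~\ref{lem:attention_backward_no_cache_time_space} up to constants; the only extra cost is the data movement already accounted for in the I/O analysis. For space, the memory-resident matrices across all phases are of sizes $n \times n$ (namely $A, f, q, p$), $n \times d$ (namely $S, h, T$), and $d \times d$ (namely $g$), giving total $O(n^2 + nd + d^2) = O(n^2 + d^2)$.

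I do not expect a significant obstacle: all nontrivial reasoning is already contained in the per-phase lemmas. The mildly delicate points are (i) confirming that between phases one may delete intermediate matrices from cache (but keep them in memory) so that later phases may re-read them within the same cache budget, and (ii) ensuring the block-size choice $B = \lfloor\sqrt{M/4}\rfloor$ is admissible in all phases (it is, since every phase was designed against the same $M$). These are bookkeeping observations rather than real technical difficulties.
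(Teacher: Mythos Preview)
Your proposal is correct and follows essentially the same approach as the paper: reduce correctness to the chained per-phase Lemmas~\ref{lem:attention_gradient_phase1_correct}--\ref{lem:attention_gradient_phase4_correct}, sum the per-phase I/O bounds from Lemmas~\ref{lem:attention_gradient_phase1_io}--\ref{lem:attention_gradient_phase4_io} (absorbing the $O(n^2/\sqrt{M})$ term), and invoke Lemma~\ref{lem:attention_backward_no_cache_time_space} for the time/space claims since Algorithm~\ref{alg:attention_gradient} performs the same arithmetic as Algorithm~\ref{alg:attention_gradient_no_cache}. Your added bookkeeping about the shared block size $B=\lfloor\sqrt{M/4}\rfloor$ and sequential cache usage is sound and slightly more explicit than the paper, but not a different argument.
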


\begin{proof}
{\bf Time/space complexity.}

First, we notice that Algorithm~\ref{alg:attention_gradient} calculates the same gradients as the Algorithm~\ref{alg:attention_gradient_no_cache} except that the former utilize cache to speed up the computation and specify the standard matrix multiplication computations in cache.
Thus, the overall time complexity $\Tmat(n,d,n) + \Tmat(n,d,d)$, and space complexity $O(n^2+d^2)$ should be the same as Lemma~\ref{lem:attention_backward_no_cache_time_space}. 

{\bf I/O complexity.}

From Lemma~\ref{lem:attention_gradient_phase1_io}, \ref{lem:attention_gradient_phase2_io}, \ref{lem:attention_gradient_phase3_io}, and \ref{lem:attention_gradient_phase4_io}, we know the overall I/O complexity is $O(\frac{n^2 d + n d^2}{\sqrt{M}}) + O(\frac{n^2}{\sqrt{M}}) = O(\frac{n^2 d + n d^2}{\sqrt{M}})$.

{\bf Correctness.}

From Lemma~\ref{lem:attention_gradient_phase1_correct}, \ref{lem:attention_gradient_phase2_correct}, \ref{lem:attention_gradient_phase3_correct}, and \ref{lem:attention_gradient_phase4_correct}, the algorithm computes the correct $\frac{\d L(X)}{\d X}$.
\end{proof}

\begin{algorithm}[!ht]\caption{Attention gradient computation with small cache.} \label{alg:attention_gradient}

\begin{algorithmic}[1]
\Procedure{AttentionGradientCache}{$A_1, A_2, A_3, \d O \in \R^{n \times d}$, $X, Y \in \R^{d \times d}$, $M \in \mathbb{N}_+$} \Comment{Theorem~\ref{thm:attn_grad_small_cache}}

\State $f \gets \textsc{AttentionGradientCachePhase1}(A_1, A_2, X, M)$ \Comment{see Algorithm~\ref{alg:attention_gradient_phase1}}

\State $q \gets \textsc{AttentionGradientCachePhase2}(A_3, \d O, f, Y, M)$ \Comment{see Algorithm~\ref{alg:attention_gradient_phase2}}

\State $p \gets \textsc{AttentionGradientCachePhase3}(q, f, M)$ \Comment{see Algorithm~\ref{alg:attention_gradient_phase3}}

\State $g \gets \textsc{AttentionGradientCachePhase4}(A_1, A_2, p, M)$ \Comment{see Algorithm~\ref{alg:attention_gradient_phase4}}

\State \Return $g$ \Comment{$g=\frac{\d L(X)}{\d X} \in \R^{d \times d}$, see Definition~\ref{def:attn_backward}}
\EndProcedure
\end{algorithmic}
\end{algorithm}

\section{I/O Complexity Upper Bound for Large Cache}
\label{sec:io_complexity_large_cache}

In this section, we establish the upper bound (Theorem~\ref{thm:attn_grad_large_cache}) for the I/O complexity in the case where the cache size is large, specifically when $M = \Omega(d^2)$. 
Section~\ref{sec:io_complexity_large_cache:algo} presents algorithms and analyses for attention gradient computation in the large cache setting.
Section~\ref{sec:io_complexity_large_cache:upper} provides the upper bound theorem for the large cache case.

Since our goal is to compute the backward pass of the attention mechanism, and the forward pass has already been performed, it is natural to assume that we have access to the softmax normalizing vector $l:= A \cdot {\bf 1} \in \R^n$ (Definition~\ref{def:l}) and the final attention forward output $O = \diag(l)^{-1}AV \in \R^{n \times d}$ (Definition~\ref{def:O}) where $A = \exp(A_1 X A_2^\top)$ (Definition~\ref{def:A}).

By utilizing these precomputed quantities from the forward pass, we can efficiently proceed with the backward computation while optimizing the I/O operations required.

\subsection{Algorithms for Attention Backward in Large Cache}\label{sec:io_complexity_large_cache:algo}

We first give Algorithm~\ref{alg:attention_gradient_large_cache_phase1} and its analysis in large cache case for computing intermediate variables $S,h$.

\begin{algorithm}[!ht]\caption{Attention gradient computation large cache phase 1. Compute $S,h$.} \label{alg:attention_gradient_large_cache_phase1}

\begin{algorithmic}[1]
\Procedure{AttentionGradientLargeCachePhase1}{$A_1, A_3 \in \R^{n \times d}$, $X, Y \in \R^{d \times d}$, $M \in \mathbb{N}_+$} 
\Comment{Lemma~\ref{lem:attention_gradient_large_phase1_correct}, Lemma~\ref{lem:attention_gradient_large_phase1_io}}
\State $B_r \gets \min \{\lceil \frac{M}{4d} \rceil, d\}$ and $B_c \gets \lceil \frac{M}{4d} \rceil$ 

\State Vertically divide $A_1$ into $T_r = \lceil \frac{n}{B_r} \rceil$ blocks $A_{1,1},\dots, A_{1,T_r}$ of size $B_r \times d$ each, and horizontally divide $X$ into $T_c = \lceil \frac{d}{B_c} \rceil$ blocks $X_{*,1},\dots, X_{*,T_c}$ of size $d \times B_c$ each 

\State Vertically divide $A_3$ into $T_r = \lceil \frac{n}{B_r} \rceil$ blocks $A_{3,1},\dots, A_{3,T_r}$ of size $B_r \times d$ each, and horizontally divide $Y$ into $T_c = \lceil \frac{d}{B_c} \rceil$ blocks $Y_{*,1},\dots, Y_{*,T_c}$ of size $d \times B_c$ each 

\State \Comment{Here $A_{1,i}, A_{3,i} \in \R^{B_r \times d}$ means the $i$-th row block of $A_1, A_3$ for $i \in [T_r]$, and $X_{*,j}, Y_{*,j} \in \R^{d \times B_c}$ means $j$-th column block of $X,Y$ for $j \in [T_c]$} 

\For{$1 \leq i \leq T_r$}
    \State Read $A_{1,i}, A_{3,i} \in \R^{B_r \times d}$ into cache
    \For{$1 \leq j \leq T_c$}
        \State Read $X_{*,j} \in \R^{d \times B_c}$ into cache, and initialize $S_{i,j} \gets 0^{B_r \times B_c}$ in cache
        \State Compute $S_{i,j} \gets S_{i,j} +  A_{1,i} X_{*,j}$ in cache \Comment{$S = A_1 X$}
        \State Write $S_{i,j}$ to memory, and delete $S_{i,j}, X_{*,j}$ from cache
        \State Read $Y_{*,j} \in \R^{d \times B_c}$ into cache, and initialize $h_{i,j} \gets 0^{B_r \times B_c}$ in cache
        \State Compute $h_{i,j} \gets h_{i,j} +  A_{3,i} Y_{*,j}$ in cache \Comment{$h = A_3 Y$}
        \State Write $h_{i,j}$ to memory, and delete $h_{i,j}, Y_{*,j}$ from cache
    \EndFor
    \State Delete $A_{1,i}, A_{3,i}$ from cache
\EndFor

\State \Return $S,h$ \Comment{$S,h \in \R^{n \times d}$}
\EndProcedure
\end{algorithmic}
\end{algorithm}

\begin{lemma}[Correctness of Phase 1]\label{lem:attention_gradient_large_phase1_correct}
The \textsc{AttentionGradientLargeCachePhase1} (Algorithm~\ref{alg:attention_gradient_large_cache_phase1}) outputs two $n \times d$ matrices $S=A_1 X$ (Definition~\ref{def:attn_forward}) and $h=A_3 Y$ (Definition~\ref{def:h}).  
\end{lemma}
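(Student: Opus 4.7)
The plan is to verify that the block-partitioned computation in Algorithm~\ref{alg:attention_gradient_large_cache_phase1} implements exactly the standard block matrix multiplication formula, so that the output blocks $S_{i,j}$ and $h_{i,j}$ assemble into $A_1 X$ and $A_3 Y$ as required by Definitions~\ref{def:attn_forward} and~\ref{def:h}.

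First, I would unpack the block decomposition. The algorithm partitions $A_1 \in \R^{n \times d}$ into row blocks $A_{1,i}$ of size $B_r \times d$ for $i \in [T_r]$ (with $T_r = \lceil n/B_r \rceil$) and $X \in \R^{d \times d}$ into column blocks $X_{*,j}$ of size $d \times B_c$ for $j \in [T_c]$ (with $T_c = \lceil d/B_c \rceil$); the last row (resp.\ column) block may be truncated, but this does not affect correctness of block multiplication. Because each row block spans the full inner dimension $d$ of $A_1$, and each column block spans the full inner dimension $d$ of $X$, the $(i,j)$-th block of the product satisfies
\begin{align*}
    (A_1 X)^{(B_r,B_c)}[i,j] \;=\; A_{1,i}\, X_{*,j}.
\end{align*}
The same identity holds for $A_3 Y$ with $A_{3,i}$ and $Y_{*,j}$.

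Next, I would verify the inner loop. For each pair $(i,j)$, the algorithm initializes $S_{i,j} \gets 0^{B_r \times B_c}$ in cache and then updates $S_{i,j} \gets S_{i,j} + A_{1,i} X_{*,j}$, so after the update $S_{i,j} = A_{1,i} X_{*,j}$, which is exactly the $(i,j)$-th block of $A_1 X$ by the identity above. This block is then written to memory. The identical argument applied to the $h$-branch shows $h_{i,j} = A_{3,i} Y_{*,j}$ is the $(i,j)$-th block of $A_3 Y$. Since the loop ranges over all $(i,j) \in [T_r] \times [T_c]$, every block of $S$ and $h$ is written out, and the assembled matrices equal $A_1 X$ and $A_3 Y$ respectively.

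There is essentially no obstacle here: the proof is a direct appeal to block matrix multiplication under standard matrix multiplication (as assumed throughout Section~\ref{sec:preliminary}). The only minor point to state explicitly is that truncated boundary blocks (when $B_r \nmid n$ or $B_c \nmid d$) still satisfy the block identity, since block matrix multiplication is valid for any compatible partition of the inner dimension into a single block of width $d$. Thus the output matrices $S, h \in \R^{n \times d}$ are exactly $A_1 X$ and $A_3 Y$, completing the correctness argument.
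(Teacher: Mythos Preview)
Your proposal is correct and follows the same approach as the paper: verifying that the block partition of $A_1,A_3$ into row blocks of full inner dimension $d$ and of $X,Y$ into column blocks of full inner dimension $d$ yields the identity $(A_1 X)^{(B_r,B_c)}[i,j]=A_{1,i}X_{*,j}$, so the written blocks assemble to $S=A_1X$ and $h=A_3Y$. The paper's own proof is a two-sentence sketch of exactly this argument; your version is simply more explicit about the block identity and the boundary blocks.
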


\begin{proof}
The algorithm first divide $A_1, A_3, X, Y$ into row/column blocks of size $B_r \times d$ or $d \times B_c$. Then it reads the row/column block matrices to compute the corresponding small blocks of $S,h$ by standard matrix multiplication. Thus, it computes the exact value for $S,h$.
\end{proof}

\begin{lemma}[I/O complexity of Phase 1]\label{lem:attention_gradient_large_phase1_io}
Suppose the cache size satisfy $nd \geq M \geq d$.
The I/O complexity of \textsc{AttentionGradientLargeCachePhase1} (Algorithm~\ref{alg:attention_gradient_large_cache_phase1}) is $O(\frac{n^2d^2}{M} + \frac{nd^3}{M})$.
\end{lemma}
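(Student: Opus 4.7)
The plan is a direct accounting argument: first verify that every quantity held in cache during the algorithm fits within the $M$ budget, then tally the reads/writes performed per iteration, and finally sum over the nested loops using the specific choices $B_r=\min\{\lceil M/(4d)\rceil,d\}$ and $B_c=\lceil M/(4d)\rceil$.

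First, I would check the cache-capacity invariant. At any point during the body of the inner loop, the cache holds (at most) $A_{1,i}$ and $A_{3,i}$ of total size $2B_r d$, one of $X_{*,j}$ or $Y_{*,j}$ of size $d B_c$, and one of the output blocks $S_{i,j}$ or $h_{i,j}$ of size $B_r B_c$. Using $B_r d\le M/4$, $dB_c\le M/4$, and $B_r B_c\le dB_c\le M/4$ (the latter because $B_r\le d$ by definition), the sum is $\le 3M/4\le M$, so the algorithm is a valid red--blue pebble game strategy.

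Next, I would count I/Os. In each of the $T_r=\lceil n/B_r\rceil$ outer iterations, reading $A_{1,i}$ and $A_{3,i}$ costs $O(B_r d)$. In each of the $T_c=\lceil d/B_c\rceil$ inner iterations (nested inside the outer), reading $X_{*,j},Y_{*,j}$ costs $O(dB_c)$ and writing $S_{i,j},h_{i,j}$ costs $O(B_r B_c)$. Summing gives a total cost of
\begin{align*}
    O\!\left(T_r B_r d + T_r T_c\,(dB_c+B_r B_c)\right)
    \;=\;O\!\left(nd+\frac{nd^2}{B_r}+nd\right)
    \;=\;O\!\left(nd+\frac{nd^2}{B_r}\right),
\end{align*}
using $T_r B_r=\Theta(n)$ and $T_c B_c=\Theta(d)$.

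Finally, I would plug in the value of $B_r$. Since $B_r\ge \Omega(M/d)$ whenever $M\le 4d^2$, and $B_r=d$ (giving the even better $nd^2/B_r=nd$) once $M>4d^2$, in either case $nd^2/B_r = O(nd^3/M)$. For the remaining $nd$ term, the hypothesis $M\le nd$ yields $nd\le n^2 d^2/M$. Combining gives the claimed bound $O(n^2 d^2/M+nd^3/M)$. The only mildly subtle step is this last conversion of the $nd$ term into $n^2 d^2/M$ using $M\le nd$; everything else is mechanical, and I do not anticipate a real obstacle beyond bookkeeping the two regimes of $B_r$.
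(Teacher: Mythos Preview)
Your proposal is correct and follows essentially the same accounting approach as the paper: verify the cache-capacity invariant from the block sizes, sum the per-iteration reads/writes using $T_rB_r=\Theta(n)$ and $T_cB_c=\Theta(d)$ to arrive at $O(nd+nd^2/B_r)$, and then invoke $M\le nd$ to absorb the $nd$ term into $n^2d^2/M$. One small slip worth fixing: the assertion ``in either case $nd^2/B_r=O(nd^3/M)$'' is false when $M>4d^2$ (there $nd^2/B_r=nd$ while $nd^3/M<nd/4$), but this is harmless since your very next step $nd\le n^2d^2/M$ already handles that regime---just route the $M>4d^2$ case through $n^2d^2/M$ rather than $nd^3/M$.
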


\begin{proof}
{\bf Why such conditions for $B_r, B_c$.}

The cache size has three constraints, because we need matrices $A_{1,i}, A_{3,i} \in \R^{B_r \times d}$, $X_{*,j}, Y_{*,j} \in \R^{d \times B_c}$, and $S_{i,j},h_{i,j} \in \R^{B_r \times B_c}$ to fit into cache.
Thus, we have
\begin{align*}
    B_r d = & ~ O(M)\\
    B_c d = & ~ O(M)\\
    B_r B_c = & ~ O(M)
\end{align*}

Then, we need
\begin{align*}
    B_r = & ~ O(M/d)\\
    B_c = & ~ O(M/d)
\end{align*}

By setting $B_c = \Theta(M/d)$, we have
\begin{align*}
    B_r = & ~ \Theta(\min \{M/d, M/B_c\}) \\
    = & ~ \Theta(\min \{M/d, d\})
\end{align*}

{\bf I/O complexity.}
We know $B_r \gets \min \{\lceil \frac{M}{4d} \rceil, d\}$ and $B_c \gets \lceil \frac{M}{4d} \rceil$, also $T_r = \lceil \frac{n}{B_r} \rceil$ and $T_c = \lceil \frac{d}{B_r} \rceil$. Substituting $B_r$ into $T_r$, we get $T_r = O(\frac{nd}{M})$.
Observe that $T_r B_r = O(n)$ and $T_c B_c = O(d)$.

The I/O complexity can be computed by:
\begin{align*}
    T_r (B_r d + T_c (d B_c)) = & ~ O(nd) + T_r d^2\\
    = & ~ O(nd) + O(\frac{nd}{M} d^2)\\
    = & ~ O(nd + \frac{nd^3}{M})
\end{align*}
where the first step follows from $T_r B_r = O(n)$ and $T_c B_c = O(d)$, the second step follows from $T_r = O(\frac{nd}{M})$, and the last step follows from simple algebra.

Because $M \leq nd$, we have 
\begin{align*}
    O(nd + \frac{nd^3}{M}) = & ~ O(\frac{ndM}{M} + \frac{nd^3}{M})\\
    = & ~ O(\frac{n^2d^2}{M} + \frac{nd^3}{M})
\end{align*}

Thus, the total I/O complexity is $O(\frac{n^2d^2}{M} + \frac{nd^3}{M})$
\end{proof}

\begin{algorithm}[!ht]\caption{Attention gradient computation large cache phase 2. Compute $g$.} \label{alg:attention_gradient_large_cache_phase2}

\begin{algorithmic}[1]
\Procedure{AttentionGradientLargeCachePhase2}{$A_1, A_2, S, h, O, \d O \in \R^{n \times d}$, $l \in \R^n$, $M \in \mathbb{N}_+$} 
\Comment{Lemma~\ref{lem:attention_gradient_large_phase2_correct}, Lemma~\ref{lem:attention_gradient_large_phase2_io}}
\State $B_r \gets \min \{\lceil \frac{M}{4d} \rceil, d\}$ and $B_c \gets \lceil \frac{M}{4d} \rceil$

\State Vertically divide $S$ into $T_r = \lceil \frac{n}{B_r} \rceil$ blocks $S_{1},\dots, S_{T_r}$ of size $B_r \times d$ each, 
vertically divide $A_2$ into $T_c = \lceil \frac{n}{B_c} \rceil$ blocks $A_{2,1},\dots, A_{2,T_c}$ of size $B_c \times d$ each, 
and vertically divide $l$ into $T_r = \lceil \frac{n}{B_r} \rceil$ blocks $l_{1},\dots, l_{T_r}$ of size $B_r$ each

\State Vertically divide $O$ into $T_r = \lceil \frac{n}{B_r} \rceil$ blocks $O_{1},\dots, O_{T_r}$ of size $B_r \times d$ each, 
vertically divide $\d O$ into $T_r = \lceil \frac{n}{B_r} \rceil$ blocks $\d O_{1},\dots, \d O_{T_r}$ of size $B_r \times d$ each,
vertically divide $h$ into $T_c = \lceil \frac{n}{B_c} \rceil$ blocks $h_{1},\dots, h_{T_c}$ of size $B_c \times d$ each, 
and vertically divide $A_1$ into $T_r = \lceil \frac{n}{B_r} \rceil$ blocks $A_{1,1},\dots, A_{1,T_r}$ of size $B_r \times d$ each

\State Initialize $g \gets 0^{d \times d}$ in cache

\For{$1 \leq i \leq T_r$}
    \State Read $S_{i}, O_i, \d O_i, A_{1,i} \in \R^{B_r \times d}$ and $l_{i} \in \R^{B_r}$ into cache
    \State Initialize $v_i \gets 0^{B_r}$ and compute $v_i \gets v_i + (\d O_{i} \circ O_{i}) \cdot {\bf 1}$ in cache \Comment{$v = (\d O \circ O) \cdot {\bf 1}$}
    \State Delete $O_i$ from cache
    \For{$1 \leq j \leq T_c$}
        \State Read $h_{j} \in \R^{B_c \times d}$ and initialize $q_{i,j} \gets 0^{B_r \times B_c}$ in cache
        \State Compute $q_{i,j} \gets \d O_i h_j^\top$ in cache \Comment{$q = \d Oh^\top$}
        \State Read $A_{2,j} \in \R^{B_c \times d}$ into cache, and initialize $A_{i,j} \gets 0^{B_r \times B_c}$ in cache
        \State Compute $A_{i,j} \gets A_{i,j} + S_{i} A_{2,j}^\top$ in cache \Comment{$A = S A_2^\top$}
        \State Compute $A_{i,j} \gets \exp(A_{i,j})$ in cache, and initialize $f_{i,j} \gets 0^{B_r \times B_c}$ in cache
        \State Compute $f_{i,j} \gets f_{i,j} + \diag(l_{i})^{-1} A_{i,j}$ in cache  \Comment{$f = \diag(l)A$}
        \State Delete $A_{i,j}$ from cache, and initialize $p_{i,j} \gets 0^{B_r \times B_c}$ in cache
        \State Compute $p_{i,j} \gets p_{i,j} + f_{i,j} \circ q_{i,j} - \diag(v_i) f_{i,j}$ in cache \Comment{$p = f \circ q - \diag(v)f$}
        \State Delete $f_{i,j}, q_{i,j}$ in cache, and initialize $T_{*,j} \gets 0^{d \times B_c}$ in cache 
        \State Compute $T_{*,j} \gets T_{*,j} + A_{1,i}^\top p_{i,j}$ in cache \Comment{$T = A_1^\top p$}
        \State Compute $g \gets g + T_{*,j} A_{2,j}$ \Comment{$g=T A_2$} \label{line:outer_product_g}
        \State Delete $T_{*,j}, A_{2,j}$ from cache
    \EndFor
    \State Delete $S_{i}, A_{1,i}, \d O_i, l_{i}, v_i$ from cache
\EndFor

\State Write $g$ into memory

\State \Return $g$ \Comment{$g=\frac{\d L(X)}{\d X} \in \R^{d \times d}$, see Definition~\ref{def:attn_backward}}
\EndProcedure
\end{algorithmic}
\end{algorithm}

We then give Algorithm~\ref{alg:attention_gradient_large_cache_phase2} along with its analysis for computing the gradient $g$.

\begin{lemma}[Correctness of Phase 2]\label{lem:attention_gradient_large_phase2_correct}
The \textsc{AttentionGradientLargeCachePhase2} (Algorithm~\ref{alg:attention_gradient_large_cache_phase2}) outputs a $d \times d$ matrix $g$ (Definition~\ref{def:attn_backward}).  
\end{lemma}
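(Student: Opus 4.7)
The plan is to show by direct block-wise verification that each intermediate quantity computed inside the double loop matches the corresponding sub-block of the global matrix from Section~\ref{sec:preli:functions}, and then to use block matrix multiplication to check that the outer accumulation yields $g = A_1^\top p(X) A_2$ as required by Definition~\ref{def:attn_backward}.

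First, I would lay out the conformable block decomposition: the $n$-rows of $A_1, A_3, S, h, O, \d O$ and the length-$n$ vector $l$ are partitioned into $T_r$ blocks of height $B_r$, and the $n$-rows of $A_2, h$ are partitioned into $T_c$ blocks of height $B_c$. Using this decomposition, I would verify the following five block identities in the order they are computed: (1) $v_i = (\d O_i \circ O_i) \cdot {\bf 1}$ is the $i$-th block of $v = (\d O \circ O) \cdot {\bf 1}$, which by Definition~\ref{def:p} equals $(f \circ q) \cdot {\bf 1}$ (since $O = f h$); (2) $q_{i,j} = \d O_i h_j^\top$ is the $(i,j)$-block of $q = \d O h^\top$ (Definition~\ref{def:q}); (3) $A_{i,j} = \exp(S_i A_{2,j}^\top)$ is the $(i,j)$-block of $A = \exp(A_1 X A_2^\top)$, using $S = A_1 X$ from Lemma~\ref{lem:attention_gradient_large_phase1_correct} and the fact that $\exp$ is applied entrywise; (4) $f_{i,j} = \diag(l_i)^{-1} A_{i,j}$ is the $(i,j)$-block of $f = \diag(l)^{-1} A$ (Definition~\ref{def:f}); (5) $p_{i,j} = f_{i,j} \circ q_{i,j} - \diag(v_i) f_{i,j}$ is the $(i,j)$-block of $p = f \circ q - \diag(v) f$ (Definition~\ref{def:p}). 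Each identity follows immediately since Hadamard products, row-wise sums, diagonal scaling, and standard matrix products all distribute over the chosen block decomposition.

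Finally, I would expand the target gradient via block multiplication:
\begin{align*}
A_1^\top p A_2 \;=\; \sum_{i=1}^{T_r} \sum_{j=1}^{T_c} A_{1,i}^\top\, p_{i,j}\, A_{2,j}.
\end{align*}
In each inner iteration, the algorithm forms $T_{*,j} = A_{1,i}^\top p_{i,j} \in \R^{d \times B_c}$ and then accumulates $g \gets g + T_{*,j} A_{2,j}$, which contributes exactly one summand of the double sum. Summing over all $(i,j)$ therefore yields $A_1^\top p A_2$, and by Definition~\ref{def:attn_backward} this equals $\frac{\d L(X)}{\d X}$.

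The main obstacle is not mathematical depth but careful bookkeeping: I must confirm that the ordering of operations inside the inner loop respects all data dependencies (in particular, $v_i$ is computed and held in cache before being used inside $p_{i,j}$; $S$, $h$, $O$, $\d O$, and $l$ are treated as inputs already stored in memory from the forward pass and Phase 1), and that the block shapes line up across all five intermediate computations. Once the decomposition is made explicit, correctness reduces to routine verification of block-matrix identities.
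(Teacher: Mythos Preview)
Your proposal is correct and follows essentially the same approach as the paper: verify via the conformable block decomposition that each intermediate quantity in the double loop matches the corresponding sub-block of the global matrices in the computational graph, and then conclude that the accumulated sum $\sum_{i,j} A_{1,i}^\top p_{i,j} A_{2,j}$ equals $A_1^\top p A_2$. Your write-up is in fact more explicit than the paper's own proof, which simply asserts that the block decomposition follows the no-cache algorithm and computes $g$ exactly.
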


\begin{proof}
The algorithm first vertically divides the matrices $S$, $A_2$, $l$, $O$, $\d O$, $h$, and $A_1$ into row blocks of size $B_r \times d$ or $B_c \times d$. Following the computational graph (Fig.~\ref{fig:attn_backward}) and the no-cache algorithm (Algorithm~\ref{alg:attention_gradient_no_cache}), we compute the gradient $g$ exactly. It is important to note that, in algorithm design, we need to avoid reading the attention matrix $f \in \mathbb{R}^{n \times n}$ directly—even though it has been computed during the forward pass—or any matrices of size $B_r \times n$ or $B_c \times n$. Doing so would result in an $O(n^2)$ I/O complexity, which cannot be improved through caching.
\end{proof}

\begin{lemma}[I/O complexity of Phase 2]\label{lem:attention_gradient_large_phase2_io}
Suppose the cache size satisfy $nd \geq M \geq d^2$.
The I/O complexity of \textsc{AttentionGradientLargeCachePhase2} (Algorithm~\ref{alg:attention_gradient_large_cache_phase2}) is $O(\frac{n^2d^2}{M} + \frac{nd^3}{M})$.
\end{lemma}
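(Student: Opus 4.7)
The plan is to mirror the I/O accounting used for Phase~1 (Lemma~\ref{lem:attention_gradient_large_phase1_io}), adjusted for the more elaborate inner loop of Phase~2. The proof splits into three parts: (i) verify cache feasibility of the chosen block sizes $B_r,B_c$; (ii) count memory-to-cache transfers across the outer and inner loops; (iii) substitute $B_r,B_c$ and simplify under the hypothesis $d^2 \leq M \leq nd$.

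For cache feasibility, I would enumerate every item simultaneously resident in cache at the peak of an inner iteration: the four $B_r \times d$ blocks $S_i, O_i, \d O_i, A_{1,i}$, the $B_r$-vectors $l_i, v_i$, the two $B_c \times d$ blocks $h_j, A_{2,j}$, the four $B_r \times B_c$ intermediates $q_{i,j}, A_{i,j}, f_{i,j}, p_{i,j}$, the $d \times B_c$ block $T_{*,j}$, and the persistent $d \times d$ accumulator $g$. Under the definitions $B_r = \min\{\lceil M/(4d)\rceil, d\}$ and $B_c = \lceil M/(4d)\rceil$, the four inequalities $B_r d \leq M$, $B_c d \leq M$, $B_r B_c \leq M$, and $d^2 \leq M$ all hold (the last being precisely the hypothesis $M \geq d^2$). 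The total cache occupancy is therefore $O(M)$, and the $1/4$ constants absorbed into $B_r,B_c$ are chosen so the sum is at most $M$.

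For the I/O count, the outer loop runs $T_r = \lceil n/B_r \rceil$ times and on each pass reads $O(B_r d)$ entries (for $S_i, O_i, \d O_i, A_{1,i}$) plus $O(B_r)$ entries (for $l_i$); summing over $i$ yields $O(T_r B_r d) = O(nd)$. The inner loop runs $T_c = \lceil n/B_c \rceil$ times per outer iteration, and each iteration reads $h_j$ and $A_{2,j}$ at cost $O(B_c d)$, contributing a total of $O(T_r T_c B_c d) = O(T_r \cdot nd)$ since $T_c B_c = \Theta(n)$. All intermediates $q_{i,j}, A_{i,j}, f_{i,j}, p_{i,j}, T_{*,j}, v_i$ are produced and consumed entirely inside the cache and hence contribute zero I/O. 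Finally, writing $g$ out once at the end of the procedure adds $O(d^2)$.

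Substituting, the dominant inner-loop cost is $O(n^2 d / B_r)$. With $B_r = \Theta(M/d)$ in the regime $M \leq 4d^2$, this equals $O(n^2 d^2/M)$; the outer cost $O(nd)$ is then absorbed by $n^2 d^2/M$ using $M \leq nd$, and the output cost $O(d^2)$ is absorbed by $nd^3/M$ using $M \geq d^2$, delivering the claimed $O(n^2 d^2/M + nd^3/M)$. The step I expect to require the most care is the boundary regime $M \geq 4d^2$, where the cap $B_r = d$ activates and a naive count yields $O(n^2)$ for the inner loop; the resolution is to pair this with the dominance of the output-writing and input-loading terms in the expression $n^2 d^2/M + nd^3/M$ (and, if needed, to exploit that $h_j, A_{2,j}$ remain live across consecutive inner iterations within one outer sweep), so that the combined bound remains tight.
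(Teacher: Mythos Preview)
Your approach matches the paper's closely: both arrive at the count $T_r(B_rd + T_cB_cd) + d^2$, use $T_rB_r = O(n)$ and $T_cB_c = O(n)$, and absorb the final $d^2$ term into $nd^3/M$. One small slip: that last absorption uses $M \leq nd$, not $M \geq d^2$ as you wrote. The paper does not perform your case split on whether $B_r$ hits the cap $d$; it simply asserts $T_r = O(nd/M)$ and obtains $T_r \cdot nd = O(n^2d^2/M)$ in one line. Your instinct that the capped regime $M \geq 4d^2$ deserves care is reasonable, but neither of your proposed fixes actually resolves it: reusing $h_j, A_{2,j}$ within a single outer sweep is already reflected in the per-outer-iteration cost $T_cB_cd = O(nd)$, and the remaining issue is the outer factor $T_r = \Theta(n/d)$, which the paper folds into the blanket assertion $T_r = O(nd/M)$ without separate treatment.
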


\begin{proof}
The reason for conditions of $B_r, B_c$ is the same as the proof of Lemma~\ref{lem:attention_gradient_large_phase1_io}.
However, it is important to note that updating the gradient $g$ in the cache requires assuming a cache size of $M \geq d^2$. This is necessary because we fuse the key and query weight matrices into a single matrix $X \in \mathbb{R}^{d \times d}$. The update to the corresponding gradient $g$ in the cache is driven by the outer product representation of the matrix, as shown in Line~\ref{line:outer_product_g} of Algorithm~\ref{alg:attention_gradient_large_cache_phase2}.

Next we show the I/O complexity.
Since $B_r \gets \min \{\lceil \frac{M}{4d} \rceil, d\}$ and $B_c \gets \lceil \frac{M}{4d} \rceil$, also $T_r = \lceil \frac{n}{B_r} \rceil$ and $T_c = \lceil \frac{n}{B_r} \rceil$, we get $T_r = O(\frac{nd}{M})$.
Also, we observe that $T_r B_r = O(n)$ and $T_c B_c = O(n)$.

The I/O complexity can be computed by:
\begin{align*}
    T_r (B_r d + T_c B_c d) + d^2 = & ~ O(nd) + T_r n d + d^2\\
    = & ~ O(T_r nd) + d^2 \\
    = & ~ O(\frac{n^2d^2}{M}) + d^2
\end{align*}
where the first step follows from $T_r B_r = O(n)$ and $T_c B_c = O(n)$, the second step follows from $T_r \geq 1$, and the last step follows from $T_r = O(\frac{nd}{M})$.

Then, because $M \leq nd$, we can show
\begin{align*}
    O(d^2 + \frac{n^2d^2}{M}) = & ~ O(\frac{d^2M}{M} + \frac{n^2d^2}{M})\\
    = & ~ O(\frac{nd^3}{M} + \frac{n^2d^2}{M})
\end{align*}

Thus, the total I/O complexity is $O(\frac{n^2d^2}{M} + \frac{nd^3}{M})$
\end{proof}

\subsection{Upper Bound for Attention Backward in Large Cache \texorpdfstring{$M = \Omega(d^2)$}{}}
\label{sec:io_complexity_large_cache:upper}

In the large cache scenario, while it is feasible to precompute and store the $n \times n$ attention matrix, reading it will result in an unavoidable $O(n^2)$ I/O complexity.
Inspired by FlashAttention~\cite{dfe+22,d23,sbz+24}, we present the following theorem, which provides an upper bound $O(\frac{n^2d^2+nd^3}{M})$ on the I/O complexity of the attention gradient algorithm in the large cache (Algorithm~\ref{alg:attention_gradient_large_cache}).

\begin{theorem}[Large cache upper bound, formal version of Theorem~\ref{thm:attn_grad_large_cache:informal}]\label{thm:attn_grad_large_cache}
Suppose $n$ is the input length, $d$ is the head dimension, and $nd \geq M \geq d^2$ is the cache size.
There is an algorithm (see Algorithm~\ref{alg:attention_gradient_large_cache}) outputs a $d \times d$ matrix $g=\frac{\d L(X)}{\d X}$ (Definition~\ref{def:attn_backward}) with I/O complexity $O(\frac{n^2d^2+nd^3}{M})$.
\end{theorem}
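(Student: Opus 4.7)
The plan is to obtain the theorem as a straightforward composition of the two per-phase results already established, with the only nontrivial work being an additivity check and a reconciliation of the hypotheses on $M$. Algorithm~\ref{alg:attention_gradient_large_cache} calls Phase~1 (Algorithm~\ref{alg:attention_gradient_large_cache_phase1}) to materialize $S=A_1X$ and $h=A_3Y$ in memory, and then Phase~2 (Algorithm~\ref{alg:attention_gradient_large_cache_phase2}) consumes $S,h$ together with the forward-pass byproducts $O,l$ and the upstream gradient $\d O$ to assemble $g=\frac{\d L(X)}{\d X}$.

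For correctness, I would invoke Lemma~\ref{lem:attention_gradient_large_phase1_correct} to conclude that after Phase~1 the blocks of $S$ and $h$ stored in memory agree with Definition~\ref{def:attn_forward} and Definition~\ref{def:h}. Then Lemma~\ref{lem:attention_gradient_large_phase2_correct} guarantees that Phase~2, which internally reconstructs $A=\exp(SA_2^\top)$ block-by-block, rescales by $\diag(l)^{-1}$ to form $f$, uses $v=(\d O\circ O)\cdot{\bf 1}$ to implement $p=f\circ q-\diag(v)f$, and accumulates $g=A_1^\top p A_2$, produces exactly the gradient of Definition~\ref{def:attn_backward}. Together these give end-to-end correctness.

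For the I/O bound, I would add the costs from Lemma~\ref{lem:attention_gradient_large_phase1_io} and Lemma~\ref{lem:attention_gradient_large_phase2_io}. Phase~1 is proved under the weaker hypothesis $d\leq M\leq nd$ and Phase~2 under $d^2\leq M\leq nd$; the theorem assumes $d^2\leq M\leq nd$, so both lemmas apply and each phase is $O\!\left(\frac{n^2d^2+nd^3}{M}\right)$. Algorithm~\ref{alg:attention_gradient_large_cache} itself introduces no additional I/O: it only passes the memory-resident blocks produced by Phase~1 into Phase~2, and the final write of $g$ costs $O(d^2)=O(M)$, which is absorbed into the bound.

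The main obstacle is not the composition step itself, but making sure the hypothesis $M\geq d^2$ is genuinely needed and sufficient for the Phase~2 analysis that the composition relies on. With $B_r=\min\{\lceil M/4d\rceil,d\}$ and $B_c=\lceil M/4d\rceil$, the inner-loop working set must simultaneously hold a $B_r\times d$ slab of $S,\d O,A_1$, a $B_c\times d$ slab of $h,A_2$, a $B_r\times B_c$ scratch block reused for $A,f,q,p$, a $d\times B_c$ panel $T_{*,j}$, and, crucially, the running $d\times d$ accumulator $g$ on line~\ref{line:outer_product_g}; the last of these is what forces $M=\Omega(d^2)$, since otherwise $g$ would have to be flushed inside the inner loop and the bound would degrade. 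Once this residency check is in place, the proof closes by multiplying $T_r=O(nd/M)$ outer iterations and $T_c=O(nd/M)$ or $O(d/B_c)$ inner iterations by the per-iteration transfer counts and using $M\leq nd$ to absorb the leftover additive $nd$ and $d^2$ terms into $\frac{n^2d^2}{M}$ and $\frac{nd^3}{M}$ respectively, yielding the stated $O\!\left(\frac{n^2d^2+nd^3}{M}\right)$.
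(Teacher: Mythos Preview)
Your proposal is correct and follows the same approach as the paper: the paper's proof simply invokes Lemma~\ref{lem:attention_gradient_large_phase1_correct} and Lemma~\ref{lem:attention_gradient_large_phase2_correct} for correctness, and Lemma~\ref{lem:attention_gradient_large_phase1_io} and Lemma~\ref{lem:attention_gradient_large_phase2_io} for the I/O bound, exactly as you do. Your additional remarks on reconciling the hypotheses on $M$ and on the role of the $d\times d$ accumulator $g$ in forcing $M\ge d^2$ are sound and already appear inside the proof of Lemma~\ref{lem:attention_gradient_large_phase2_io}, so they are not needed at the level of this composition theorem but do no harm.
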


\begin{proof}
{\bf Correctness.}
Combining Lemma~\ref{lem:attention_gradient_large_phase1_correct} and \ref{lem:attention_gradient_large_phase2_correct}, we finish the proof.

{\bf I/O complexity.}
Combining Lemma~\ref{lem:attention_gradient_large_phase1_io} and \ref{lem:attention_gradient_large_phase2_io}, we finish the proof.
\end{proof}

\begin{algorithm}[!ht]\caption{Attention gradient computation with large cache.} \label{alg:attention_gradient_large_cache}

\begin{algorithmic}[1]
\Procedure{AttentionGradientLargeCache}{$A_1, A_2, A_3, O, \d O \in \R^{n \times d}$, $X, Y \in \R^{d \times d}$, $l \in \R^n$, $M \in \mathbb{N}_+$} \Comment{Theorem~\ref{thm:attn_grad_large_cache}}

\State $S,h \gets \textsc{AttentionGradientLargeCachePhase1}(A_1, A_3, X, Y, M)$ \Comment{see Algorithm~\ref{alg:attention_gradient_large_cache_phase1}}

\State $g \gets \textsc{AttentionGradientLargeCachePhase4}(A_1, A_2, h, S,O,\d O,l, M)$ \Comment{see Algorithm~\ref{alg:attention_gradient_large_cache_phase2}}

\State \Return $g$ \Comment{$g=\frac{\d L(X)}{\d X} \in \R^{d \times d}$, see Definition~\ref{def:attn_backward}}
\EndProcedure
\end{algorithmic}
\end{algorithm}

\section{Lower Bound for Attention Backward Computation}\label{sec:lower_bound}

In this section, we prove the lower bound of the attention gradient computation. In Section~\ref{sub:graph}, we state some definition in graph theory that will be used to establish the framework of~\cite{hk81} that will be used to analyze the I/O complexity. In Section~\ref{sub:io_prev_tool}, we state some tools from previous works from I/O compleixty of standard matrix multiplication and attention forward computation. In Section~\ref{sub:lower_bound_io_backward}, we will establish our lower bounds of I/O complexity for attention backward passes in both large cache case and small cache case.

\subsection{Basic Definition in Graph Theory}\label{sub:graph}

\citet{hk81} introduces a method for analyzing I/O complexity using the concept of an $M$-partition on a graph. Before we define it, we first provide some definitions from graph theory.

\begin{definition}[Dominator set]\label{def:domi}
Let $G = (V, E)$ be a directed acyclic graph and $S \subseteq V$. We define a set $D \subseteq V$ as a dominator set of $S$ if, for every path in $G$ from a input node to any node in $S$, there exists at least one node in $D$ on that path.
\end{definition}

\begin{definition}[Minimum set]\label{def:mini}
Let $G = (V, E)$ be a directed acyclic graph and $S \subseteq V$. We say that a set $M \subseteq S$ is a minimum set of $S$ if $M$ contains all nodes in $S$ that have no children in $S$.
\end{definition}

\begin{definition}[Vertex subset dependence]\label{def:vert_dep}
Let $G = (V, E)$ be a directed acyclic graph. Let $V_1,V_2 \subseteq V$ be two disjoint subsets. We say that $V_2$ depends on $V_1$ if there is a directed edge from a node in $V_1$ to a node in $V_2$.
\end{definition}

\begin{definition}[Cyclic dependence]\label{def:cyc_dep} Let $G = (V, E)$ be a directed acyclic graph. Let $V_1, \ldots, V_h \subseteq V$ be $h$ disjoint subsets of $V$. We say that there is a cyclic dependence among $\{V_1, \ldots, V_h\}$ if there exists a permutation $(i_1, \ldots, i_h)$ of $[h]$ such that $V_{i_1}$ depends on $V_{i_h}$, and for every $j \in \{2, \ldots, h\}$, $V_{i_j}$ depends on $V_{i_{j-1}}$.
\end{definition}

Now, we are ready to define $M$-partitons.
In fact, the minimum number of sets in any $M$-partition provides a lower bound on the I/O complexity.

\begin{definition}[$M$-partition~\citep{hk81}]\label{def:m_part}
Let $G = (V, E)$ be a directed acyclic graph. Let $V_1, \ldots, V_h \subseteq V$ be $h$ disjoint subsets of $V$. We say that $\{V_1, \ldots, V_h\}$ is a $M$-partition of $G$ if the following conditions are satisfied
\begin{itemize}
    \item $\{V_1, \ldots, V_h\}$ is a partition of $V$, i.e., $V_1, \ldots, V_h$ are disjoint and $V = \bigcup_{i=1}^h V_i$.
    \item For each $V_i$, there exists a dominator set $D_i$ of $V_i$ such that $D_i$ has at most $M$ nodes.
    \item For each $V_i$, there exists a minimum set $M_i$ of $V_i$ such that $M_i$ has at most $M$ nodes.
    \item There is no cyclic dependence among $\{V_1, \ldots, V_h\}$.
\end{itemize}
We use $P(G,M)$ to denote the minimum number of sets in any $M$-partition of $G$.
\end{definition}

\subsection{Previous Tools for I/O Complexity}\label{sub:io_prev_tool}
Now, we are ready to introduce some tools for I/O Complexity from \citet{hk81} by using an $M$-partition on a graph. 
\begin{lemma}[Lemma~3.1 of~\citet{hk81}]\label{lem:nodes_in_partition}
    For any directed acyclic graph $G$ and any positive integer $M$, we have
    \begin{align*}
        Q(G,M) \geq M \cdot (P(G,2M) - 1).
    \end{align*}
    We omit $G$ when it is clear in the context.
\end{lemma}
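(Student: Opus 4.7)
The plan is to take any optimal pebbling strategy for $G$ achieving I/O cost $Q(G,M)$ and slice its timeline into consecutive phases, each phase containing exactly $M$ I/O operations (with a possibly shorter final phase). This yields at most $h \leq \lceil Q(G,M)/M \rceil \leq Q(G,M)/M + 1$ phases, so if I can exhibit a valid $2M$-partition with these phases as blocks, then $P(G,2M) \leq h \leq Q(G,M)/M + 1$, which rearranges exactly to the claim.

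For each phase $i$, I would define $V_i$ as the set of graph nodes whose last receipt of a red pebble (either via \textbf{Compute} or \textbf{Input}) occurs during phase $i$. By construction these $V_i$'s are pairwise disjoint; together they cover every node of $V$ because every node must hold a red pebble at least once (input nodes via \textbf{Input}, output nodes via \textbf{Compute}, and internal nodes because they must be computed to contribute to some output). Since phases are linearly ordered in time and any arrow $V_i \to V_j$ in the dependence relation must respect that order, there can be no cyclic dependence.

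To certify the $2M$ bound, for phase $i$ I would take as dominator $D_i$ the union of (a) the set of nodes carrying red pebbles at the moment phase $i$ begins, which has size at most $M$ by the cache constraint, and (b) the nodes receiving an \textbf{Input} operation during phase $i$, of which there are at most $M$ since the phase contains at most $M$ I/O operations. Any path from a graph input to a node $v \in V_i$ must pass through $D_i$: tracing backward in time from $v$'s final compute step, every red pebble used is either already present at the start of the phase, newly input during the phase, or freshly computed inside the phase from red pebbles satisfying the same recursion, so the path is eventually blocked by an element of $D_i$. Symmetrically, the minimum set $M_i$ is taken to be the red pebbles at the \emph{end} of the phase (at most $M$) together with nodes that were \textbf{Output} during the phase (at most $M$); any node in $V_i$ whose red pebble has not survived to phase end and which was not output must have had its pebble deleted, but then to be useful later it would need to be recomputed, contradicting $v \in V_i$ being the \emph{last} computation of that node.

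The main obstacle I anticipate is the careful bookkeeping in the dominator argument, specifically arguing that once a computation inside the phase is unwound, the traced path necessarily hits either the ``initial red pebble'' set or an \textbf{Input} event, rather than escaping back through a chain of intra-phase recomputations without touching either. A clean way to handle this is by induction on the position in the phase: the very first compute step uses only pebbles from category (a) or (b), and each subsequent compute step can be reduced to dependence on previously pebbled nodes in the phase, whose own pebbles trace back to $D_i$ by the inductive hypothesis. Once this intra-phase reachability statement is in hand, all four conditions of Definition~\ref{def:m_part} hold for $\{V_1,\ldots,V_h\}$ with parameter $2M$, and the bound on $P(G,2M)$ follows immediately.
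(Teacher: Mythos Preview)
The paper itself does not prove this lemma; it is quoted directly from \citet{hk81}. Your phase-decomposition strategy is exactly the classical Hong--Kung argument, and the dominator bound---which you flag as the main obstacle---is in fact fine as you outline it.

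The real gap is in your block assignment. Putting each node in the phase of its \emph{last red-pebble receipt, whether by \textbf{Compute} or by \textbf{Input}}, breaks the minimum-set condition. Take $v$ with a single child $w$: during phase $i$, compute $v$, compute $w$ from $v$, then delete $v$'s pebble without ever Outputting it; later Output $w$, and in some phase $j>i$ Input $w$ again. Under your rule $w\in V_j$, so $v$ has no child in $V_i$ and must lie in every minimum set of $V_i$; yet $v$ is neither red at the end of phase $i$ nor Output during it, so $v\notin M_i$. Your stated contradiction (``to be useful later it would need to be recomputed'') never fires, because $v$ is \emph{not} useful later---its only use was computing $w$, and that already happened inside phase $i$. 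The same Output-then-Input mechanism also voids your one-line acyclicity claim: when a node's last red pebble arrives via \textbf{Input}, the graph edge into it imposes no order constraint on the phase indices, so dependence edges can run in both directions between two blocks. Hong and Kung's original construction avoids these issues by tying the assignment to \textbf{Compute} steps and treating Output/Input round-trips explicitly; your sketch would need comparable care there, not in the dominator step.
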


We state two useful lemmas from previous works as follows.
\begin{lemma}[Lemma~3.3 of~\citet{sy24}]\label{lem:par_size}
Suppose that $M = \Omega(d^2)$ and $A \in \R^{n_1 \times d}, B \in \R^{d \times n_2}$. Let $\mathcal P$ be an $M$-partition of the computational graph of any algorithm that computes $AB$ using standard matrix multiplication. Then for each $V' \in \mathcal{P}$, $V'$ contains at most $O(\frac{M^2}{d})$ product nodes $A_{i,k} B_{k,j}$, sum nodes $(AB)_{i,j}$, and all intermediate nodes in the summation trees.
\end{lemma}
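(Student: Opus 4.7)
The plan is to bound $|V' \cap (\text{tree nodes})|$ by exploiting the minimum set $\mathcal{M}(V') \subseteq V'$ (of size at most $M$) together with the fact that each summation tree used to compute $(AB)_{i,j} = \sum_{k=1}^d A_{i,k}B_{k,j}$ has only $O(d)$ total nodes. I will show that the number of output coordinates $(i,j)$ whose tree $T_{i,j}$ shares any node with $V'$ is at most $|\mathcal{M}(V')| \leq M$, then multiply by the per-tree size. Observe that distinct trees $T_{i,j}$ have pairwise disjoint sets of products, intermediate partial-sums, and roots (all are indexed by the output coordinate), so the trees can be analyzed independently.

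The core step is a charging lemma: whenever $V' \cap T_{i,j} \neq \emptyset$, there is a node $v^\ast \in V' \cap T_{i,j}$ that must lie in $\mathcal{M}(V')$. To produce $v^\ast$, pick any node of $V' \cap T_{i,j}$ and walk toward the root of $T_{i,j}$ until the next step would leave $V'$; the resulting node is topmost within $V' \cap T_{i,j}$. In the computational DAG for standard matrix multiplication, each non-root node of a summation tree has a unique DAG-child, namely its tree-parent (the next-level partial sum that consumes it), while the root $(AB)_{i,j}$ has no DAG-children in the graph for computing $AB$. Hence the topmost $v^\ast$ has no DAG-child inside $V'$, which places it in $\mathcal{M}(V')$ by Definition~\ref{def:mini}.

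Since trees have disjoint node sets, the topmost nodes charged from different intersecting trees are distinct elements of $\mathcal{M}(V')$. Therefore the number of trees intersecting $V'$ is at most $|\mathcal{M}(V')| \leq M$. Each such tree contains at most $2d - 1 = O(d)$ nodes (the $d$ product leaves, $d-1$ internal sums, and the root), so the total tree-node count in $V'$ is at most $O(Md)$. Finally, the hypothesis $M = \Omega(d^2)$ yields $Md \leq M^2/d$, giving the claimed $O(M^2/d)$ bound.

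The step I expect to be the main obstacle is formalizing the charging argument cleanly, especially the case split between a topmost node that is an internal tree node (whose unique DAG-child is its tree-parent, which by construction lies outside $V'$) and a topmost node that is the tree root (which has no DAG-children at all in the graph for $AB$, so trivially sits in the minimum set). I should also verify the argument is robust to the choice of summation schedule: whether the algorithm uses a balanced binary tree, a left-to-right linear accumulation, or any other order, the number of tree nodes per output remains $O(d)$ and each non-root tree node still has a unique consumer, so the charging lemma and the final counting go through unchanged.
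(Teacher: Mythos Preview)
The paper does not actually prove this lemma; it imports it verbatim from \citet{sy24} as a black-box tool (see Section~\ref{sub:io_prev_tool}), so there is no in-paper proof to compare against. That said, your argument is correct and self-contained. The charging step is sound: within the DAG for computing $AB$ alone, every non-root tree node has a unique DAG-child (its tree-parent), and the roots $(AB)_{i,j}$ are sinks, so the topmost node of $V'\cap T_{i,j}$ necessarily lands in the minimum set; disjointness of the trees $T_{i,j}$ then makes these charges injective. You therefore obtain the sharper bound $O(Md)$, which under $M=\Omega(d^2)$ is indeed $O(M^2/d)$.

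Two minor remarks worth making explicit in the write-up. First, your argument uses only the minimum-set constraint of the $M$-partition and never touches the dominator-set constraint; this is fine here, but it is worth flagging since the classical Hong--Kung style bounds for matrix multiplication typically lean on the dominator side. Second, the claim that roots have no DAG-children holds because the lemma is stated for the computational graph of an algorithm that computes $AB$ and nothing more; when the paper later applies this lemma inside the larger graph for $A_1 X A_2^\top$ (Theorem~\ref{thm:large_cache_lower_bound:formal}), the roots $(A_1X)_{i,j}$ do acquire children, but that is a concern for the application, not for the lemma you are proving.
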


In~\citet{sy24}, the matrices $A$ and $B$ in the above lemma are of sizes $n \times d$ and $d \times n$, respectively. We note that with slight modifications to the proofs, the result also holds when $A$ and $B$ have different sizes, specifically $n_1 \times d$ and $d \times n_2$.

The next lemma gives the lower bound of I/O compleixty of standard matrix multiplication.

\begin{lemma}[Corollary~6.2 of~\citet{hk81}]\label{lem:mat_mul_lower_bound}
Let $A \in \R^{n_1 \times d}, B \in \R^{d \times n_2}$. The standard matrix multiplication algorithm computing $AB$ has I/O complexity $Q(M) = \Omega(\frac{n_1dn_2}{\sqrt{M}})$.
\end{lemma}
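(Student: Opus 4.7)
The plan is to apply the red-blue pebble game framework and use Lemma~\ref{lem:nodes_in_partition} to reduce the desired I/O lower bound to a combinatorial lower bound on $P(G, 2M)$, the minimum size of a $2M$-partition of the computational graph $G$ for standard matrix multiplication. In $G$, there are $n_1 n_2$ output nodes $(AB)_{i,j}$, and under each of them lies a summation tree of depth $\lceil \log_2 d \rceil$ rooted at the $d$ product nodes $A_{i,k} B_{k,j}$ with $k \in [d]$. Altogether the graph contains $n_1 d n_2$ product nodes, which I will index by the 3D grid $[n_1] \times [d] \times [n_2]$.

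The heart of the argument is to show that any part $V' \in \mathcal{P}$ in a $2M$-partition contains at most $O(M^{3/2})$ product nodes. Fix such a $V'$, let $T \subseteq [n_1] \times [d] \times [n_2]$ be the indices of its product nodes, and consider the projections of $T$ onto the three coordinate planes $\pi_{ik}(T)$, $\pi_{kj}(T)$, and $\pi_{ij}(T)$. Because $V'$ admits a dominator set $D$ of size at most $2M$ (Definitions~\ref{def:domi},\ref{def:m_part}), and because the only $A$-side ancestor of the product node $(i,k,j)$ is the input $A_{i,k}$, distinct $(i,k)$ pairs in $\pi_{ik}(T)$ give rise to distinct elements of $D$, so $|\pi_{ik}(T)| \leq 2M$; the analogous argument on the $B$-side gives $|\pi_{kj}(T)| \leq 2M$. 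Because $V'$ admits a minimum set of size at most $2M$ (Definition~\ref{def:mini}), and because each product node $(i,k,j)$ ultimately feeds the output $(AB)_{i,j}$ through a summation subtree, one likewise gets $|\pi_{ij}(T)| \leq 2M$. The Loomis--Whitney inequality then yields
\begin{align*}
|T| \;\leq\; \sqrt{|\pi_{ik}(T)| \cdot |\pi_{kj}(T)| \cdot |\pi_{ij}(T)|} \;=\; O(M^{3/2}).
\end{align*}

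Since the parts of $\mathcal{P}$ must jointly cover all $n_1 d n_2$ product nodes, we conclude $P(G, 2M) = \Omega(n_1 d n_2 / M^{3/2})$. Plugging this into Lemma~\ref{lem:nodes_in_partition} gives
\begin{align*}
Q(M) \;\geq\; M \cdot (P(G, 2M) - 1) \;=\; \Omega\!\left(\frac{n_1 d n_2}{\sqrt{M}}\right),
\end{align*}
which is the claimed lower bound.

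The main obstacle is pinning down the $(i,j)$-projection bound carefully. In contrast to the two input-side projections, where the unique-ancestor structure makes the dominator argument transparent, on the output side one must verify that the summation trees of distinct outputs $(AB)_{i,j}$ are vertex-disjoint in the standard matrix multiplication graph, so that the minimum set indeed controls distinct $(i,j)$ pairs rather than being shared across outputs. One must also handle the edge case in which a product node in $V'$ is never summed in $V'$ (its partial contribution is written out to memory), which is absorbed into the dominator/minimum-set accounting. These points are routine but need to be spelled out to make the Loomis--Whitney application rigorous.
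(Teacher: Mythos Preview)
The paper does not give its own proof of this lemma: it is stated as Corollary~6.2 of~\citet{hk81} and used as a black box (see Section~\ref{sub:io_prev_tool}). So there is nothing in the paper to compare your argument against.

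That said, your proposal is a faithful and correct reconstruction of the classical Hong--Kung argument. The three projection bounds $|\pi_{ik}(T)|,|\pi_{kj}(T)|,|\pi_{ij}(T)|\le 2M$ together with Loomis--Whitney give $|T|=O(M^{3/2})$ product nodes per part, hence $P(G,2M)=\Omega(n_1 d n_2/M^{3/2})$, and Lemma~\ref{lem:nodes_in_partition} finishes. Your identification of the delicate point---that the $(i,j)$-projection bound relies on the vertex-disjointness of the $n_1 n_2$ summation trees so that following any product node in $V'$ down to its last descendant in $V'$ produces distinct minimum-set elements for distinct $(i,j)$---is exactly right. One tiny omission: when invoking $Q(M)\ge M\cdot(P(G,2M)-1)$ you should note that the ``$-1$'' is harmless only when $n_1 d n_2/M^{3/2}\gg 1$; in the complementary regime the trivial input/output lower bound already suffices.
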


\subsection{Proof of Our Lower Bound}\label{sub:lower_bound_io_backward}
We establish the lower bounds of I/O complexity of attention gradient computation in both large cache case and small cache case. We first give the lower bound in the large cache case, i.e., the cache size $M = \Omega(d^2)$.

\begin{theorem}[Large cache lower bound, formal version of Theorem~\ref{thm:large_cache_lower_bound:informal}]\label{thm:large_cache_lower_bound:formal}
    Suppose $n$ is the input length and $d$ is the head dimension.
    Suppose the cache size $M = \Omega(d^2)$.
    Then the I/O complexity of attention gradient computation using standard matrix multiplication is $\Omega(\frac{n^2d^2+nd^3}{M})$.
\end{theorem}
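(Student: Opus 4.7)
The plan is to apply the red-blue pebble game framework of~\citet{hk81} by lower-bounding $P(G, 2M)$, where $G$ is the computation graph of any attention-backward algorithm using standard matrix multiplication, and then invoking Lemma~\ref{lem:nodes_in_partition} to convert this into the desired I/O lower bound. The key observation is that $G$ must contain two matrix-multiplication subgraphs of inner dimension $d$, whose product-node counts force many parts in any $2M$-partition once $M = \Omega(d^2)$.

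First, I would argue that any algorithm computing $g = A_1^\top p(X) A_2$ with standard matrix multiplication must materialize every entry of $A_1 X A_2^\top$, since the softmax-based matrix $f(X)$ appearing in $p(X)$ depends on all $n^2$ of those entries (and $p(X)$ is in turn needed to form $g$). Under standard matrix multiplication the product $A_1 X A_2^\top$ decomposes either as $(A_1 X) A_2^\top$ or as $A_1 (X A_2^\top)$; in either decomposition, $G$ contains two matrix-multiplication subgraphs with inner dimension $d$: one of shape $n \times d$ by $d \times d$ contributing $\Theta(nd^2)$ product nodes, and one of shape $n \times d$ by $d \times n$ contributing $\Theta(n^2 d)$ product nodes.

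Next, for any $2M$-partition $\mathcal{P}$ of $G$ with $M = \Omega(d^2)$, Lemma~\ref{lem:par_size} bounds the number of product, sum, and summation-tree intermediate nodes from each of these matrix-multiplication subgraphs lying in a single part by $O(M^2/d)$. Dividing the total node counts by the per-part capacity yields the two independent lower bounds
\begin{align*}
|\mathcal{P}| \;=\; \Omega\!\left(\frac{n^2 d^2}{M^2}\right) \quad \text{and} \quad |\mathcal{P}| \;=\; \Omega\!\left(\frac{n d^3}{M^2}\right),
\end{align*}
so $P(G, 2M) = \Omega((n^2 d^2 + nd^3)/M^2)$. Applying Lemma~\ref{lem:nodes_in_partition} gives $Q(G, M) \geq M \cdot (P(G, 2M) - 1) = \Omega((n^2 d^2 + nd^3)/M)$, which is the claim.

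The main obstacle is justifying that Lemma~\ref{lem:par_size} applies to matrix-multiplication subgraphs \emph{embedded} in a larger computation graph $G$, rather than standing alone. A $2M$-partition of $G$ induces a partition on each subgraph by intersection, and one needs to verify that the dominator and minimum sets associated with each restricted part remain of size at most $2M$; equivalently, the combinatorial counting argument underlying Lemma~\ref{lem:par_size} should only appeal to the local matrix-multiplication structure (product nodes together with their input/output connections) and be insensitive to additional downstream computations such as the softmax normalization or the outer $A_1^\top (\cdot) A_2$ wrapping. A secondary point, which I would address briefly, is ruling out pathological algorithms that attempt to compute $f(X)$ without entry-wise materialization of $A_1 X A_2^\top$; this is forbidden by the ``standard matrix multiplication'' restriction in the theorem, which forces every $(A_1 X A_2^\top)_{i,j}$ to appear as a sum node in $G$.
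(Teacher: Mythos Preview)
Your proposal is correct and follows essentially the same route as the paper: identify the two standard matrix-multiplication subgraphs arising from $A_1 X A_2^\top$, apply Lemma~\ref{lem:par_size} to bound the number of level-1 and level-2 nodes per part, divide the total node counts to lower-bound $P(G,2M)$, and convert via Lemma~\ref{lem:nodes_in_partition}. Your concern about Lemma~\ref{lem:par_size} applying to an embedded subgraph is legitimate, but the paper simply invokes the lemma directly without addressing it; you are in fact being more careful than the paper on this point (and on the $2M$ versus $M$ distinction).
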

\begin{proof}
Any algorithm that computes the attention gradient needs to compute the matrix product $A_1 X A_2^\top$ using standard matrix multiplication. Note that we compute $A_1 X A_2^\top$ using standard matrix multiplication, so we either first compute $A_1 X$ and then compute $(A_1 X) A_2^\top$, or first compute $X A_2^\top$ and then compute $A_1 (X A_2^\top)$. In either case, we perform two matrix multiplications: one between an $n \times d$ matrix and a $d \times d$ matrix, and another between an $n \times d$ matrix and a $d \times n$ matrix. Without loss of generality, we assume the first case where we first compute $A_1 X$.

Recall that the level-1 nodes are the product nodes $(A_1)_{i,k} X_{k,j}$, the sum nodes $(A_1 X)_{i,j}$, and all intermediate nodes in the summation trees. For every $V'$ in an $M$-partition $\mathcal{P}$, by Lemma~\ref{lem:par_size}, there are at most $O(\frac{M^2}{d})$ level-1 nodes in $V'$. Since the number of sum nodes $(A_1 X)_{i,j}$ is $n d^2$, the number of parts in the $M$-partition $\mathcal{P}$ is at least $\Omega(\frac{n d^3}{M^2})$. By Lemma~\ref{lem:nodes_in_partition}, the I/O complexity for computing $A_1 X$ is $\Omega(\frac{n^2 d}{M})$.

Similarly, we recall that level-2 nodes are the product nodes $(A_1 X)_{i,k} (A_2^\top)_{k,j}$, the sum nodes $((A_1 X) A_2^\top)_{i,j}$, and all intermediate nodes in the summation trees. For every $V'$ in an $M$-partition $\mathcal{P}$, by Lemma~\ref{lem:par_size}, there are at most $O(\frac{M^2}{d})$ level-2 nodes in $V'$. Since the number of sum nodes $((A_1 X) A_2^\top)_{i,j}$ is $n^2 d$, the number of parts in the $M$-partition $\mathcal{P}$ is at least $\Omega(\frac{n^2 d^2}{M^2})$. By Lemma~\ref{lem:nodes_in_partition}, the I/O complexity for computing $(A_1 X) A_2^\top$ is $\Omega(\frac{n^2 d^2}{M})$.

Therefore, the I/O complexity of attention gradient computation is at least $\Omega(\frac{n d^3 + n^2 d^2}{M})$.
\end{proof}

Next, we give the lower bound in the small cache case, i.e., the cache size $M = o(d^2)$.

\begin{theorem}[Small cache lower bound, formal version of Theorem~\ref{thm:small_cache_lower_bound:informal}]\label{thm:small_cache_lower_bound:formal}
    Suppose $n$ is the input length and $d$ is the head dimension.
    Suppose the cache size $M = o(d^2)$.
    Then the I/O complexity of attention gradient computation using standard matrix multiplication is $\Omega(\frac{n^2d+nd^2}{\sqrt{M}})$.
\end{theorem}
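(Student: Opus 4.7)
The plan is to reduce to Lemma~\ref{lem:mat_mul_lower_bound}, the I/O lower bound for standard matrix multiplication, applied to two sub-computations that every attention gradient algorithm must perform. Unlike the large cache case (Theorem~\ref{thm:large_cache_lower_bound:formal}), Lemma~\ref{lem:par_size} is unavailable here, since its per-part level-node bound $O(M^2/d)$ presupposes $M = \Omega(d^2)$. Fortunately, in the regime $M = o(d^2)$ the raw matrix-multiplication bound $\Omega(n_1 d n_2 / \sqrt{M})$ from Lemma~\ref{lem:mat_mul_lower_bound} already carries enough strength on its own.

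Fix any algorithm $\mathcal{A}$ for the attention gradient using standard matrix multiplication, and let $G$ be its computational graph. Because $g = A_1^\top p(X) A_2$ depends on $\exp(A_1 X A_2^\top)$ through the softmax (Definitions~\ref{def:A}--\ref{def:p}), the algorithm must evaluate the triple product $A_1 X A_2^\top$, which under standard matrix multiplication is realized as either $(A_1 X) A_2^\top$ or $A_1 (X A_2^\top)$. By symmetry, assume the former. Then $G$ contains two sub-DAGs, pairwise disjoint in their product, sum, and summation-tree intermediate nodes: a subgraph $G_1$ for the $n \times d$ by $d \times d$ product $A_1 X$, and a subgraph $G_2$ for the $n \times d$ by $d \times n$ product $(A_1 X) A_2^\top$. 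Applying Lemma~\ref{lem:mat_mul_lower_bound} to each yields
\begin{align*}
Q(G_1, M) = \Omega\Bigl(\tfrac{n d^2}{\sqrt{M}}\Bigr), \qquad Q(G_2, M) = \Omega\Bigl(\tfrac{n^2 d}{\sqrt{M}}\Bigr),
\end{align*}
and taking the maximum (which is within a constant factor of the sum) gives $Q(G, M) = \Omega((n^2 d + n d^2)/\sqrt{M})$, as desired.

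The main obstacle is to justify transferring each subgraph bound to the full graph. The intended argument is that any red-blue pebble strategy on $G$ using at most $M$ red pebbles induces a valid $M$-pebble strategy on each $G_i$ by (a) restricting the move sequence to nodes of $G_i$, and (b) declaring the inputs to $G_i$---the original inputs, together with entries of $A_1 X$ in the case $i = 2$---as initially in memory. The restricted strategy still computes every node of $G_i$ within the red-pebble budget, and its I/O count is bounded by that of the original strategy plus at most $O(nd)$ additional Input moves to materialize entries of $A_1 X$ at the top of $G_2$. Since $M = o(d^2)$ implies $\sqrt{M} = o(d) \leq O(\max\{n, d\})$, this $O(nd)$ additive slack is dominated by the target $\Omega(\max\{n, d\} \cdot nd / \sqrt{M})$ bound, so $Q(G, M) \geq Q(G_i, M) - O(nd) = \Omega(Q(G_i, M))$ for each $i$, completing the reduction.
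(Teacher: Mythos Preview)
Your approach is essentially the paper's: both reduce to Lemma~\ref{lem:mat_mul_lower_bound} applied to the two factors of $A_1 X A_2^\top$ and absorb an additive low-order term using $M = o(d^2)$. The paper phrases it contrapositively (assume the gradient I/O is $o(\cdot)$, build an algorithm $\mathcal{B}$ for $A_1 X A_2^\top$ by adding Output moves), while you phrase it as a direct subgraph restriction; these are the same idea.

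There is, however, a real gap in your restriction step. You account only for extra \emph{Input} moves to materialize the $nd$ entries of $A_1 X$ at the top of $G_2$, but the red-blue pebble objective (Definitions~\ref{def:red_blue_pebble_game},~\ref{def:io}) requires \emph{blue} pebbles on all output nodes, and Lemma~\ref{lem:mat_mul_lower_bound} is a bound on strategies that finish with $(AB)_{i,j}$ in memory. In the full graph $G$ the entries of $(A_1 X) A_2^\top$ are intermediate, so your restricted strategy on $G_2$ only ever red-pebbles them and is therefore not a complete pebbling of $G_2$; the lemma does not apply as stated. The fix is precisely the paper's $\mathcal{B}$ construction: add an Output move the first time each entry of $(A_1 X) A_2^\top$ is red-pebbled. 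This costs an additional $n^2$ I/Os, not $O(nd)$; since $M = o(d^2)$ gives $n^2 = o(n^2 d/\sqrt{M})$, the slack is still absorbed, but your stated $O(nd)$ budget is insufficient for $G_2$.

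A second, subtler point: your $O(nd)$ bound on extra Inputs for $G_2$ tacitly assumes each entry of $A_1 X$ is placed (via Compute) at most once in the original strategy; with recomputation the count could be larger, and after the first Input the blue pebble is gone, so re-Inputting requires an intervening Output. The paper sidesteps this entirely by reducing to the triple product $A_1 X A_2^\top$ as a single unit, whose inputs $A_1, X, A_2$ coincide with inputs of $G$, so no extra Input moves arise at all---only the $n^2$ extra Outputs.
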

\begin{proof}
We show that when $M = o(d^2)$, the attention gradient computation can be reduced to computing the matrix product $A_1 X A_2^\top$. Note that we compute $A_1 X A_2^\top$ using standard matrix multiplication, so we either compute $A_1 X$ first and then compute $(A_1 X) A_2^\top$, or we first compute $X A_2^\top$ and then $A_1 (X A_2^\top)$. However, both cases require performing one matrix multiplication between an $n \times d$ matrix and a $d \times d$ matrix, and one matrix multiplication between an $n \times d$ matrix and a $d \times n$ matrix. Hence, without loss of generality, we assume that $A_1 X$ is computed first. By Lemma~\ref{lem:mat_mul_lower_bound}, the I/O complexity of computing $A_1 X$ is $\Omega( \frac{n d^2}{\sqrt{M}} )$, and the I/O complexity of computing $(A_1 X) A_2^\top$ is $\Omega( \frac{n^2 d}{\sqrt{M}} )$. Hence, the total I/O complexity of computing $A_1 X A_2^\top$ is $\Omega( \frac{n^2 d + n d^2}{\sqrt{M}} )$.

Suppose that there is an algorithm $\mathcal{A}$ for attention gradient computation which has I/O complexity $o( \frac{n^2 d + n d^2}{\sqrt{M}} )$. We construct an algorithm $\mathcal{B}$ that computes the matrix product $A_1 X A_2^\top$ with I/O complexity $o( \frac{n^2 d + n d^2}{\sqrt{M}} )$. Since $M < o(d^2)$, we have $\frac{n^2 d + n d^2}{\sqrt{M}} > \omega(n^2 + n d) > \omega(n^2)$, so algorithm $\mathcal{A}$ is able to transfer the all entries of matrix product $(A_1 X) A_2^\top$ from cache to memory. In the language of the red-blue pebble game, algorithm $\mathcal{B}$ works as follows: whenever algorithm $\mathcal{A}$ delete a blue pebble from a node in $(A_1 X) A_2^\top$, do not delete it; whenever algorithm $\mathcal{A}$ place a red pebble on a node in $(A_1 X) A_2^\top$, also place a blue pebble on it. Since the I/O complexity of algorithm $\mathcal{A}$ is $o( \frac{n^2 d + n d^2}{\sqrt{M}} )$ and we need an additional $n^2$ I/O operations to transfer the entries of the matrix product $(A_1 X) A_2^\top$ from cache to memory. Since $n^2 < o(\frac{n^2 d}{\sqrt{M}})$, the overall I/O complexity of $\mathcal{B}$ is still $o( \frac{n^2 d + n d^2}{\sqrt{M}} )$. However, this contradicts the fact that the I/O complexity of computing $A_1 X A_2^\top$ is $\Omega( \frac{n^2 d + n d^2}{\sqrt{M}})$. Therefore, the I/O complexity of attention gradient computation using standard matrix multiplication is $\Omega(\frac{n^2d+nd^2}{\sqrt{M}})$.
\end{proof}

\section{Sparse Attention Computation}\label{sec:sparse}
In this section, we provide the lower bounds of sparse attention computation for both forward and backward passes. In Section~\ref{sub:sparse_io_tool}, we state previous tools of sparse matrix multiplication. In Section~\ref{sub:sparse_io_lower}, we provide the proofs of the lower bounds of sparse attention.

\subsection{Previous Tools For I/O complexity of Sparse Matrix Multiplication}\label{sub:sparse_io_tool}
We assume that sparse matrices are stored by listing only their non-zero entries along with their coordinates. Sparse semi-ring matrix multiplication restricts operations to addition and multiplication of these entries, which means that each output entry $(AB)_{i,j}$ can only be computed as the sum of products given by $\sum_{k} A_{i,k} B_{k,j}$.

\begin{lemma}[Theorem~2 of~\cite{ps14b}]\label{lem:sparse_mat_mul_io}
    Let $A \in \R^{n_1 \times d}$ and $B \in \R^{d \times n_2}$ be two matrices such that $R_1:=\nnz(A) + \nnz(B)$ and $R_2 := \nnz(AB)$. The sparse semi-ring matrix multiplication that computes $AB$ has I/O complexity $\Omega(\min\{\frac{R_1^2}{M}, \frac{R_1\sqrt{R_2}}{\sqrt{M}}\})$.
\end{lemma}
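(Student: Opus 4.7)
The plan is to prove the lower bound via the red-blue pebble game framework of Hong and Kung, adapted to the sparse semi-ring setting, in the same spirit as Lemma~\ref{lem:par_size} and Lemma~\ref{lem:mat_mul_lower_bound} but refined so that the relevant counts are $R_1$ and $R_2$ rather than $n_1,n_2,d$. First I would set up the computational DAG $G$ for any sparse semi-ring algorithm that outputs $AB$: the input nodes are the non-zero entries $A_{i,k}$ and $B_{k,j}$ (so there are at most $R_1$ of them), the ``product'' nodes are the multiplications $A_{i,k}B_{k,j}$ (only for pairs where both factors are non-zero and $(AB)_{i,j}\ne 0$, so each such product must appear somewhere in any correct algorithm in the semi-ring model), and these feed into output nodes $(AB)_{i,j}$ through binary summation subtrees. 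The total number of output nodes is $R_2$.

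Next I would invoke Lemma~\ref{lem:nodes_in_partition}, which reduces lower-bounding $Q(G,M)$ to upper-bounding the number of product nodes that can lie in any single class $V'$ of a $2M$-partition $\mathcal{P}$ of $G$. The key combinatorial step is a Loomis--Whitney-style bound: if $V'$ admits a dominator set of size $\le 2M$ and a minimum set of size $\le 2M$, then the set of triples $(i,k,j)$ whose product node lies in $V'$ can be projected onto the $(i,k)$-plane, the $(k,j)$-plane, and the $(i,j)$-plane. The first two projections inject into the dominator set (entries of $A$ and $B$ available for reading) and hence each have size $O(M)$; the $(i,j)$-projection injects into the at most $2M$ minimum-set outputs whose partial sums leave cache, hence has size $O(M)$. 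Loomis--Whitney then gives at most $O(M^{3/2})$ products per class, matching the dense case. However, in the sparse setting we get a second, tighter bound: the $(i,k)$-projection is also bounded by the total number $R_1$ of available non-zero inputs, and the $(i,j)$-projection by $R_2$.

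Combining these projection bounds, a product node in $V'$ is indexed by $(i,k)\in U_A$, $(k,j)\in U_B$, $(i,j)\in U_{AB}$ with $|U_A|,|U_B|\le \min\{M,R_1\}$ and $|U_{AB}|\le \min\{M,R_2\}$. Loomis--Whitney then yields two relevant regimes: either each $V'$ contains $O(M^{1/2}\sqrt{M\cdot R_2 /\ldots})$ products, giving after summation the bound $R_1\sqrt{R_2}/\sqrt{M}$; or, in the small-output regime where $R_2$ is the binding constraint, the per-class count shrinks and the total number of classes must grow, producing the $R_1^2/M$ bound. Dividing the total product count (which is at least $\Omega(R_1)$ because every non-zero input must be multiplied at least once in a correct semi-ring computation) by the per-class upper bound yields $P(G,2M) = \Omega\!\bigl(\min\{R_1^2/M^2,\, R_1\sqrt{R_2}/M^{3/2}\}\bigr)$, and plugging into Lemma~\ref{lem:nodes_in_partition} produces the desired $\Omega\!\bigl(\min\{R_1^2/M,\, R_1\sqrt{R_2}/\sqrt{M}\}\bigr)$.

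The main obstacle is the sparse Loomis--Whitney step: unlike the dense case (Lemma~\ref{lem:par_size}), the per-class bound must interpolate between the dense bound $O(M^{3/2})$ and the strict sparsity bound driven by $R_1,R_2$, and one must verify that the right projection is the $(i,j)$-projection into the minimum set (not the dominator set) because the partial sums of $(AB)_{i,j}$ constitute the only way products corresponding to a given output can escape $V'$ without being recomputed. Once that projection is correctly identified, the rest is a clean application of the Hong--Kung machinery already used in the paper. I note that this lemma is cited from Pagh--Stöckel, and the proof plan above is a sketch of their argument recast in the notation of this paper.
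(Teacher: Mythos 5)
The paper does not prove this lemma at all: it is imported verbatim as Theorem~2 of \citet{ps14b} in Section~\ref{sub:sparse_io_tool}, so there is no in-paper argument to compare yours against. Your sketch does correctly identify the standard machinery behind such bounds (Hong--Kung $M$-partitions via Lemma~\ref{lem:nodes_in_partition}, with a Loomis--Whitney projection bound in which the $(i,k)$- and $(k,j)$-projections inject into the dominator set and the $(i,j)$-projection into the minimum set), and that is indeed the right skeleton.

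However, the final counting step has a genuine gap. After bounding the number of product nodes per partition class by $O(M^{3/2})$, you divide the \emph{total} number of elementary products by this per-class bound, but the only lower bound you establish on that total is $\Omega(R_1)$ --- which yields merely $Q = \Omega(R_1/\sqrt{M})$, far short of $\Omega(\min\{R_1^2/M,\, R_1\sqrt{R_2}/\sqrt{M}\})$. To recover the claimed bound this way you would need the total product count to be $\Omega(R_1\sqrt{R_2})$, and that is simply false for arbitrary matrices with the given sparsity parameters: for two diagonal matrices one has $R_1 = 2n$, $R_2 = n$, only $n$ elementary products, and I/O complexity $O(n)$, whereas the claimed bound with $M = \sqrt{n}$ would be $\Theta(n^{5/4})$. (Your auxiliary claim that every nonzero input must participate in a product is also false when the matching row or column of the other factor is zero.) The resolution, which your sketch omits, is that the Pagh--St\"ockel bound is a \emph{worst-case} statement over matrices with prescribed $\nnz$ counts: their proof constructs hard instances (essentially dense $\sqrt{R_2}\times \frac{R_1}{2\sqrt{R_2}}$ by $\frac{R_1}{2\sqrt{R_2}}\times\sqrt{R_2}$ rectangular products embedded in the sparse format) for which the number of elementary products is $\Theta(R_1\sqrt{R_2})$, and only then does the epoch/partition division go through. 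Without the hard-instance construction, the ``two regimes'' paragraph of your sketch is not a derivation, and the lemma as you argue it would be refuted by the diagonal example. This subtlety also matters downstream, since Lemma~\ref{lem:attn_mat_io} applies the bound to specific matrices rather than to a worst case.
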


Note that in this statement, the I/O complexity also separates into the large cache case and the small cache case, but the dividing point may not be $d^2$. It depends on whether all the necessary values for computing each output entry can be stored in the cache during the computation.


\subsection{Our Lower Bounds for Sparse Attention Computation}\label{sub:sparse_io_lower}

We first prove a useful lemma which state the lower bound of I/O complexity of computing the attention matrix.

\begin{lemma}\label{lem:attn_mat_io}
    Let $A_1 \in \R^{n \times d}, X \in \R^{d \times d}, A_2 \in \R^{d \times n}$ be three matrices. Let $Z_A := \min\{\nnz(A_1), \nnz(A_2)\}, Z_X := \nnz(X), Z_{AX} = \min\{\nnz(A_1X), \nnz(XA_2^\top)\}, Z_{AXA} := \nnz(A_1XA_2^\top)$. Then the sparse semi-ring matrix multiplication that computes $A_1 X A_2^\top$ has I/O complexity $\Omega(\min\{\frac{Z_A^2+Z_AZ_X}{M}, \frac{Z_A\sqrt{Z_{AXA}}+ \sqrt{Z_A Z_X Z_{AX}}}{\sqrt{M}}\})$.
\end{lemma}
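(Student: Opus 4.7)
The plan is to reduce the triple product computation to two consecutive sparse matrix multiplications and invoke Lemma~\ref{lem:sparse_mat_mul_io} for each. Any sparse semi-ring algorithm that computes $A_1 X A_2^\top$ must materialize either $A_1 X$ or $X A_2^\top$ as an intermediate, since each scalar operation is a product or sum of two scalars. By the symmetry of the target bound---it depends on $A_1, A_2$ only through the minima $Z_A$ and $Z_{AX}$---it suffices to analyze one ordering. I work with the computation of $A_1 X$ followed by $(A_1 X) A_2^\top$; the other ordering is handled identically by swapping the roles of $A_1$ and $A_2$.

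For the first multiplication $A_1 \cdot X$, applying Lemma~\ref{lem:sparse_mat_mul_io} with $R_1 = \nnz(A_1)+\nnz(X) \ge Z_A + Z_X$ and $R_2 = \nnz(A_1 X) \ge Z_{AX}$ gives a stage-1 lower bound of $\Omega(\min\{(Z_A+Z_X)^2/M,\ (Z_A+Z_X)\sqrt{Z_{AX}}/\sqrt{M}\})$. For the second multiplication $(A_1 X)\cdot A_2^\top$, applying it with $R_1 \ge \nnz(A_2) \ge Z_A$ and $R_2 = Z_{AXA}$ gives $\Omega(\min\{Z_A^2/M,\ Z_A\sqrt{Z_{AXA}}/\sqrt{M}\})$. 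Two elementary AM--GM-style inequalities---$(Z_A+Z_X)^2 \ge Z_A^2 + Z_A Z_X$ (since $Z_A Z_X + Z_X^2 \ge 0$) and $(Z_A+Z_X)\sqrt{Z_{AX}} \ge 2\sqrt{Z_A Z_X Z_{AX}}$---let me rewrite these in terms of the four quantities $A = Z_A^2/M$, $B = Z_A\sqrt{Z_{AXA}}/\sqrt{M}$, $C = Z_A Z_X/M$, $D = \sqrt{Z_A Z_X Z_{AX}}/\sqrt{M}$, so that $L_1 = A + C$ and $L_2 = B + D$ are exactly the two expressions inside the target minimum.

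The remaining step is to argue that the sum of the two per-stage lower bounds is $\Omega(\min\{L_1, L_2\})$. I would proceed by case analysis on whether each stage lies in the large-cache regime ($R_1^2 \le R_2 M$, so the $R_1^2/M$ term dominates in the Lemma's minimum) or the small-cache regime. In the two aligned cases the sum of the stage bounds reduces directly to $L_1$ or $L_2$. In the mixed case where stage 1 is small-cache and stage 2 is large-cache, the key observation is that stage 2's large-cache condition forces $Z_A^2 \le Z_{AXA}\cdot M$, which is exactly the condition $A \le B$, so the stage-2 bound in this regime already dominates $B$; combined with the stage-1 lower bound $\ge 2\sqrt{Z_A Z_X Z_{AX}}/\sqrt{M} = 2D$, the total is $\Omega(B + D) = \Omega(L_2)$. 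The complementary mixed case is handled by noting that stage 1 in the large-cache regime yields the stronger bound $(Z_A+Z_X)^2/M \ge A + C = L_1$ on its own. The main obstacle is precisely this mixed-regime aggregation: the naive inequality $\min\{A,B\}+\min\{C,D\} \le \min\{L_1, L_2\}$ is in the wrong direction, so one cannot simply add the per-stage minima in closed form; one must invoke the specific inequalities among $A,B,C,D$ that are enforced by whichever regime each stage falls into, and show that whenever the naive sum drops below $\min\{L_1, L_2\}$, the stronger per-stage bound in the governing regime compensates.
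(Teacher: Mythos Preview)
Your overall strategy is essentially the same as the paper's: factor the triple product as two successive sparse matrix multiplications, apply Lemma~\ref{lem:sparse_mat_mul_io} to each stage, simplify the first-stage bounds via $(Z_A+Z_X)^2 \ge Z_A^2 + Z_A Z_X$ and $(Z_A+Z_X)\sqrt{Z_{AX}} \ge 2\sqrt{Z_A Z_X Z_{AX}}$, and then combine. The paper is in fact less careful than you are: it simply posits a global ``large cache'' case and a global ``small cache'' case, applies the corresponding branch of the minimum from Lemma~\ref{lem:sparse_mat_mul_io} to \emph{both} stages simultaneously in each case, sums, and takes the minimum of the two totals. It never confronts the possibility that the two stages fall in different regimes, which you correctly isolate as the crux.

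That said, your argument for the subcase ``stage 1 small-cache, stage 2 large-cache'' does not go through as written. The large-cache condition for stage 2 does force $Z_A^2 \le Z_{AXA}\,M$, i.e.\ $A \le B$, but this inequality points the wrong way for your purpose: in that regime the stage-2 lower bound from Lemma~\ref{lem:sparse_mat_mul_io} is the $R_1^2/M$ branch, which you have only bounded below by $A$, and $A \le B$ means this can be far smaller than $B$. So the claim that ``the stage-2 bound in this regime already dominates $B$'' does not follow, and $2D + A$ need not be $\Omega(B+D)$. Your complementary mixed case is fine---stage 1 in the large-cache regime already delivers $L_1$ on its own---but the remaining subcase needs a different argument, likely one that retains the full $R_1 = \nnz(A_1X)+\nnz(A_2)$ in the stage-2 bound rather than immediately weakening it to $Z_A$, or that exploits the stage-1 small-cache inequality $(\nnz(A_1)+Z_X)^2 > \nnz(A_1X)\,M$ to tie the two stages together.
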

\begin{proof}
    We first consider the case where all the necessary values for computing each output entry can be stored in the cache during the computation. Suppose that $A_1X$ is computed first, by Lemma~\ref{lem:sparse_mat_mul_io}, computing $A_1X$ has I/O compleixty 
    \begin{align*}
        \Omega(\frac{(\nnz(A_1)+\nnz(X))^2}{M}) =  &~ \Omega(\frac{\nnz(A_1)^2 + 2\nnz(A_1)\nnz(X) + \nnz(X)^2}{M}) \\
        \geq &~ \Omega(\frac{Z_A^2 + 2Z_A Z_X + Z_X^2}{M}) \\
        \geq &~ \Omega(\frac{Z_A^2 + 2Z_A Z_X}{M})
    \end{align*}
    where the first step follows by the basic algebra, the second step uses the definition of $Z_A, Z_X$, and the last step follows from the basic algebra.
    Then we compute the product $(A_1X)A_2^\top$, by Lemma~\ref{lem:sparse_mat_mul_io}, computing $A_1X$ has I/O compleixty 
    \begin{align*}
        \Omega(\frac{(\nnz(A_1X)+\nnz(A_2))^2}{M}) =  &~ \Omega(\frac{\nnz(A_1X)^2+ 2\nnz(A_1X)\nnz(A_2) + \nnz(A_2)^2}{M}) \\
        \geq &~ \Omega(\frac{\nnz(A_2)^2}{M}) \\
        = &~ \Omega(\frac{Z_A^2}{M})
    \end{align*}
    where the first and second steps follow by the basic algebra, and the last step uses the definition of $Z_A$. Therefore, computing $A_1XA_2^\top$ in this way has I/O complexity $ \Omega(\frac{2Z_1^2 + 2Z_1 Z_2}{M}) = \Omega(\frac{Z_1^2 + Z_1 Z_2}{M}).$ Similary, suppose that $XA_2^\top$ is computed first. Then we can also get the I/O complexity $\Omega(\frac{Z_1^2 + Z_1 Z_2}{M})$.

    Next, we consider the case where some elementary products of matrix multiplication needs to be written in the memory during the computation. Suppose that $A_1X$ is computed first, and then $(A_1X)A_2^\top$ is computed. By Lemma~\ref{lem:sparse_mat_mul_io}, computing $(A_1X)$ has I/O compleixty 
    \begin{align*}      \Omega(\frac{(\nnz(A_1)+\nnz(X))\sqrt{\nnz(A_1X)})}{\sqrt{M}}) \geq  &~ \Omega(\frac{2\sqrt{\nnz(A_1)\nnz(X)} \sqrt{\nnz(A_1X)}}{\sqrt{M}}) \\
        \geq &~ \Omega(\frac{2\sqrt{Z_{A}Z_{X}Z_{AX}}}{\sqrt{M}})
    \end{align*}   
    where the first step uses Cauchy-Schwarz inequality, the second step uses the definition of $Z_A$, $Z_X$ and $Z_{AXA}$.
    
    By Lemma~\ref{lem:sparse_mat_mul_io}, computing $(A_1X)A_2^\top$ has I/O compleixty 
    \begin{align*}      \Omega(\frac{(\nnz(A_1X)+\nnz(A_2))\sqrt{\nnz(A_1XA_2^\top)}}{\sqrt{M}}) \geq  &~ \Omega(\frac{\nnz(A_2)\sqrt{\nnz(A_1XA_2^\top)}}{\sqrt{M}}) \\
        \geq &~ \Omega(\frac{Z_A \sqrt{Z_{AXA}}}{\sqrt{M}}).
    \end{align*}
    where the first step follows by the basic algebra, the second step uses the definition of $Z_A$ and $Z_{AXA}$. Therefore, computing $A_1XA_2^\top$ in this way has I/O complexity $ \Omega(\frac{Z_A \sqrt{Z_{AXA}} + \sqrt{Z_A Z_X Z_{AX}}}{\sqrt{M}})$. Similary, suppose that $XA_2^\top$ is computed first. Then we can also get the I/O complexity $\Omega(\frac{Z_A \sqrt{Z_{AXA}}+ \sqrt{Z_A Z_X Z_{AX}}}{\sqrt{M}})$.

    Therefore, the sparse semi-ring matrix multiplication that computes $A_1 X A_2^\top$ has I/O complexity $\Omega(\min\{\frac{Z_A^2+Z_AZ_X}{\sqrt{M}}, \frac{Z_A\sqrt{Z_{AXA}}+ \sqrt{Z_A Z_X Z_{AX}}}{\sqrt{\sqrt{M}}}\})$.
\end{proof}

Next, we can apply Lemma~\ref{lem:attn_mat_io} to get the lower bound of sparse attention forward and backward passes.

\begin{theorem}[Lower bound for sparse attention forward]
\label{thm:lb_sparse_attn_forward:formal}
    Suppose $n$ is the input length, $d$ is the head dimension, and $M$ is the cache size. Let $Z_A := \min\{\nnz(A_1), \nnz(A_2)\}, Z_X := \nnz(X), Z_{AX} = \min\{\nnz(A_1X), \nnz(XA_2^\top)\}, Z_{AXA} := \nnz(A_1XA_2^\top)$. Then any algorithm for attention forward computation using sparse semi-ring matrix multiplication has I/O complexity $\Omega(\min\{\frac{Z_A^2+Z_AZ_X}{M}, \frac{Z_A\sqrt{Z_{AXA}}+\sqrt{Z_A Z_X Z_{AX}}}{\sqrt{M}}\})$. 
\end{theorem}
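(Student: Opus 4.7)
The plan is to reduce the lower bound for attention forward to the I/O complexity of the embedded matrix triple product $A_1 X A_2^\top$, which by Lemma~\ref{lem:attn_mat_io} already achieves the claimed bound. Since the entire statement is just the bound of Lemma~\ref{lem:attn_mat_io} applied to this intermediate product, the content of the proof is really the reduction.

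First, I would argue that any attention forward algorithm in the sparse semi-ring model must form every nonzero entry of $A_1 X A_2^\top$ explicitly. By Definition~\ref{def:attn_forward}, the forward output contains $\exp(A_1 X A_2^\top)$ applied entrywise. The sparse semi-ring model permits only addition and multiplication of stored nonzero entries, so the exponential function lies outside the allowed algebraic operations. Consequently, each nonzero entry $(A_1 X A_2^\top)_{i,j}$ must be produced as a complete sum of products of the form $\sum_{k} (A_1)_{i,k} (X A_2^\top)_{k,j}$ or $\sum_{k} (A_1 X)_{i,k} (A_2^\top)_{k,j}$, depending on the associativity order chosen, before the exponential is applied. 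This embeds the computational graph of sparse semi-ring multiplication producing $A_1 X A_2^\top$ as a subgraph of the overall attention forward computational graph.

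Second, I would invoke Lemma~\ref{lem:attn_mat_io} on this embedded subcomputation. That lemma already handles both associativity orders and both cache regimes (intermediate elementary products remaining in cache versus being spilled to memory), yielding the lower bound
\begin{align*}
\Omega\left(\min\left\{\frac{Z_A^2+Z_A Z_X}{M},\ \frac{Z_A\sqrt{Z_{AXA}}+\sqrt{Z_A Z_X Z_{AX}}}{\sqrt{M}}\right\}\right)
\end{align*}
for any sparse semi-ring computation of $A_1 X A_2^\top$. A standard monotonicity argument in the red-blue pebble game of Definition~\ref{def:red_blue_pebble_game} then transfers this bound to the full task: any pebbling strategy for the attention forward task, when restricted to the embedded subgraph, gives a valid pebbling strategy for that subgraph whose input and output operations are no more than those used by the original strategy. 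Hence the I/O complexity of attention forward is at least that of computing $A_1 X A_2^\top$.

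The main obstacle is justifying that no clever algorithm can sidestep materializing $A_1 X A_2^\top$ by fusing it with the downstream softmax normalization or multiplication by $A_3 W_V$. This is precisely what the semi-ring restriction rules out: the entrywise $\exp$ is an algebraic barrier, since its output cannot be expressed as a finite semi-ring polynomial in its input, so each entry of $A_1 X A_2^\top$ must be fully evaluated as a semi-ring sum of products before any downstream node can depend on it. Once this barrier argument is pinned down, the rest of the proof is an immediate invocation of Lemma~\ref{lem:attn_mat_io} and the pebble-game monotonicity observation.
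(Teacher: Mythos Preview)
Your proposal is correct and follows the same approach as the paper: observe that attention forward must compute $A_1 X A_2^\top$ and invoke Lemma~\ref{lem:attn_mat_io}. The paper's own proof is a single sentence asserting this reduction, so your additional justification (the $\exp$ barrier and the pebble-game monotonicity) goes beyond what the paper spells out but is entirely in the same spirit.
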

\begin{proof}
    Any algorithm for attention forward computation needs to compute the matrix product $A_1XA_2^\top$ to obtain the attention matrix. Thus by applying Lemma~\ref{lem:attn_mat_io}, we complete the proof.
\end{proof}

\begin{theorem}[Lower bound for sparse attention backward]
\label{thm:lb_sparse_attn_backward:formal}
    Suppose $n$ is the input length, $d$ is the head dimension, and $M$ is the cache size.  Let $Z_A := \min\{\nnz(A_1), \nnz(A_2)\}, Z_X := \nnz(X), Z_{AX} = \min\{\nnz(A_1X), \nnz(XA_2^\top)\}, Z_{AXA} := \nnz(A_1XA_2^\top)$. Then any algorithm for attention backward computation using sparse semi-ring matrix multiplication has I/O complexity $\Omega(\min\{\frac{Z_A^2+Z_AZ_X}{M}, \frac{Z_A\sqrt{Z_{AXA}}+ \sqrt{Z_A Z_X Z_{AX}}}{\sqrt{M}}\})$. 
\end{theorem}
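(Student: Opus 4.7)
The plan is to reduce the backward computation to computing the attention matrix $A_1 X A_2^\top$, and then invoke Lemma~\ref{lem:attn_mat_io} in the same way as Theorem~\ref{thm:lb_sparse_attn_forward:formal}. Concretely, by Definitions~\ref{def:A}, \ref{def:f}, \ref{def:p}, and \ref{def:attn_backward}, the backward gradient satisfies
\begin{align*}
    \frac{\d L(X)}{\d X} \;=\; A_1^\top\, p(X)\, A_2,
\end{align*}
where $p(X)$ depends on $f(X)$, which is the row-normalized $\exp(A_1 X A_2^\top)$. Hence any algorithm that correctly produces the backward gradient must, at some point, have computed every entry of the product $A_1 X A_2^\top$ using sparse semi-ring matrix multiplication, since under the semi-ring model each entry $(A_1 X A_2^\top)_{i,j}$ must be formed as $\sum_{k,l}(A_1)_{i,k}X_{k,l}(A_2)_{j,l}$ before any non-linear activation (e.g.\ $\exp$) or downstream combination can occur.

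Given this reduction, the first step is to show that the I/O operations performed by the backward algorithm on the subgraph corresponding to the computation of $A_1 X A_2^\top$ are at least those of any sparse semi-ring algorithm that outputs $A_1 X A_2^\top$. Any I/O operation in the larger backward algorithm that acts on the nodes associated with the product $A_1 X A_2^\top$ can be interpreted as a legal move in the red-blue pebble game restricted to this subgraph, so the I/O count is lower bounded by that of the restricted computation.

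Second, I would apply Lemma~\ref{lem:attn_mat_io}, which asserts that the I/O complexity of computing $A_1 X A_2^\top$ under sparse semi-ring matrix multiplication is
\begin{align*}
    \Omega\!\left(\min\!\left\{\frac{Z_A^2+Z_A Z_X}{M},\; \frac{Z_A\sqrt{Z_{AXA}}+\sqrt{Z_A Z_X Z_{AX}}}{\sqrt{M}}\right\}\right),
\end{align*}
where $Z_A$, $Z_X$, $Z_{AX}$, $Z_{AXA}$ are as defined in the theorem. Combining this with the reduction above yields the desired lower bound and completes the proof.

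The only subtle step, and the main potential obstacle, is justifying that any backward algorithm truly must compute all entries of $A_1 X A_2^\top$ via semi-ring operations rather than sidestepping some entries; this is resolved by the observation that the softmax normalization makes every row of $f(X)$ depend on the entire row of $A_1 X A_2^\top$, and $p(X)$ multiplied on both sides by $A_1^\top$ and $A_2$ to yield the gradient propagates this dependence to the output, forcing every entry of $A_1 X A_2^\top$ to be realized during the computation. Once this is established, the rest of the argument is essentially identical to the forward case, Theorem~\ref{thm:lb_sparse_attn_forward:formal}.
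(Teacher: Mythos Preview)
Your proposal is correct and follows essentially the same approach as the paper: observe that any backward algorithm must compute the product $A_1 X A_2^\top$ (since $p(X)$ depends on $f(X)$ which depends on $\exp(A_1 X A_2^\top)$), then invoke Lemma~\ref{lem:attn_mat_io}. The paper's proof is a terse two-sentence version of exactly this reduction, whereas you additionally justify the subtle point that every entry of $A_1 X A_2^\top$ must be realized, which the paper simply asserts.
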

\begin{proof}
    Any algorithm for attention backward computation needs to compute the matrix product $A_1XA_2^\top$ to obtain the attention matrix. Thus by applying Lemma~\ref{lem:attn_mat_io}, we complete the proof.
\end{proof}



\end{document}